\smartqed  \usepackage{xspace}
\DeclareMathOperator*{\argmin}{arg\!\,min}
\journalname{AURO}
\begin{document}\sloppy

\newcommand{\mam}{$\mathcal{G}_{\textrm{\tt MAM}$}}
\newcommand{\pr}{\ensuremath{\mathbb{P}}}

\newcommand{\reals}{\mathbb{R}}
\newcommand{\integers}{\mathbb{Z}}
\newcommand{\nul}{\textrm{NULL}}

\newcommand{\Wspace}{\mathbb{W}}
\newcommand{\Sspace}{\mathscr{S}}
\newcommand{\Robots}{R}
\newcommand{\Manip}{\mathbb{A}}
\newcommand{\Obstacles}{\mathscr{Z}}
\newcommand{\obst}{\mathbb{Z}}
\newcommand{\cspace}{\ensuremath{\mathbb{C}}}
\newcommand{\ccross}{\ensuremath{\mathbb{C}}}
\newcommand{\ccrossfree}{\ensuremath{\mathbb{C}^{\textup{f}}}}
\newcommand{\cfree}{\ensuremath{\cspace^{\textup{f}}}}
\newcommand{\cinv}{\cspace^{\textup{o}}}
\newcommand{\cbound}{\cspace_{\textrm{\cap}}}
\newcommand{\cstable}{\cspace_{\textrm{s}}}
\newcommand{\cgrasp}{\cspace_{\textrm{G}}}

\newcommand{\oracle}{\mathbb{O}_d}

\newcommand{\qrand}{\ensuremath{Q^{\textup{rand}}}}
\newcommand{\vnear}{\ensuremath{V^{\textup{near}}}}
\newcommand{\vnew}{\ensuremath{V^{\textup{new}}}}
\newcommand{\vlast}{\ensuremath{V^{\textup{last}}}}
\newcommand{\vparent}{\ensuremath{V^{\textup{best}}}}

\newcommand{\Pspace}{\mathbb{P}}
\newcommand{\pose}{p}

\newcommand{\Qspace}{\mathbb{Q}}
\newcommand{\GeomManip}{\mathbb{WM}}

\newcommand{\rad}{\ensuremath{r(n)}}
\newcommand{\radstar}{\ensuremath{r^*(n)}}
\newcommand{\radi}{\ensuremath{r_i(n)}}
\newcommand{\radj}{\ensuremath{r_j(n)}}
\newcommand{\crossrad}{\ensuremath{r_R(n)}}
\newcommand{\crossradstar}{\ensuremath{r^*_R(n)}}
\newcommand{\impcrossrad}{\ensuremath{\hat r_R(n)}}
\newcommand{\allimpcrossrad}{\ensuremath{\hat r_{\textrm{R}(n^R)}}}
\newcommand{\ki}{\ensuremath{k_i(n)}}
\newcommand{\kj}{\ensuremath{k_j(n)}}

\newcommand{\mmgraph}{\ensuremath{\mathbb{G}}}
\newcommand{\mmgimp}{\hat\mmgraph}
\newcommand{\mmgexp}{\mmgraph}
\newcommand{\graph}{\ensuremath{\mathbb{G}}}
\newcommand{\aograph}{\ensuremath{\mathbb{G}^{AO}}}
\newcommand{\tree}{\ensuremath{\mathbb{T}}}
\newcommand{\mmnodes}{\mathbb{\hat V}}
\newcommand{\mmedges}{\mathbb{\hat E}}
\newcommand{\mmnodestpprm}{\mathbb{V}_{\textrm{\chi_i}}}
\newcommand{\mmedgestpprm}{\mathbb{E}_{\textrm{\chi_i}}}
\newcommand{\mmnode}{\mathbb{\hat v}}
\newcommand{\mmedge}{\mathbb{\hat e}}
\newcommand{\nodes}{\mathbb{V}}
\newcommand{\node}{\mathbb{v}}
\newcommand{\edges}{\mathbb{E}}
\newcommand{\edge}{\mathbb{e}}
\newcommand{\prmstar}{\ensuremath{ {\tt PRM^*} }}
\newcommand{\sprmstar}{Soft-\ensuremath{ {\tt PRM} }}
\newcommand{\irs}{\ensuremath{ {\tt IRS} }}
\newcommand{\spars}{{\tt SPARS}}
\newcommand{\drrt}{\ensuremath{{\tt dRRT}}}
\newcommand{\drrtstar}{\ensuremath{{\tt dRRT^*}}}
\newcommand{\udrrtstar}{\ensuremath{{\tt ao\mbox{-} dRRT}}}
\newcommand{\mstar}{\ensuremath{{\tt M^*}}}

\newcommand{\sig}{{\tt SIG}}
\newcommand{\local}{\mathbb{L}}
\newcommand{\rmaps}{\ensuremath{\mathfrak{R}}}

\newcommand{\prm}{{\tt PRM}}
\newcommand{\mmprm}{\ensuremath{\text{Random-}{\tt MMP}}}
\newcommand{\kprmstar}{{\tt k-PRM$^*$}}
\newcommand{\rrt}{\ensuremath{{\tt RRT}}}
\newcommand{\rrtdrain}{{\tt RRT-Drain}}
\newcommand{\rrg}{{\tt RRG}}
\newcommand{\est}{{\tt EST}}
\newcommand{\rrtstar}{\ensuremath{\tt RRT^{\text *}}}
\newcommand{\astar}{{\ensuremath{\tt A^{\text *}}}}
\newcommand{\opens}{P_{\textrm{Heap}}}

\newcommand{\bvp}{{\tt BVP}}
\newcommand{\alg}{{\tt ALG}}
\newcommand{\fixed}{{\tt Fixed}-$\alpha$-\rdg}

\newcommand{\config}{C}

\newcommand{\cost}{\textup{cost}}

\newenvironment{myitem}{\begin{list}{$\bullet$}
{\setlength{\itemsep}{-0pt}
\setlength{\topsep}{0pt}
\setlength{\labelwidth}{0pt}
\setlength{\leftmargin}{10pt}
\setlength{\parsep}{-0pt}
\setlength{\itemsep}{0pt}
\setlength{\partopsep}{0pt}}}{\end{list}}

 \newtheorem{claimthm}{\bf Claim}

\newcommand{\kiril}[1]{{\color{blue} \textbf{Kiril:} #1}}
\newcommand{\chups}[1]{{\color{red} \textbf{Chuples:} #1}}
\newcommand{\rahul}[1]{{\color{green} \textbf{Rahul:} #1}}

\newcommand{\T}{\mathcal{T}}

\newcommand{\dof}{{\tt DoF}}
\newcommand{\dadrrtstar}{\ensuremath{\tt da\mbox{-}dRRT^*}}
\newcommand{\leftrm}{\ensuremath{\mathbb{R}_{\textrm{l}}  }}
\newcommand{\rightrm}{\ensuremath{\mathbb{R}_{\textrm{r}}  }}
\newcommand{\leftmetric}{\ensuremath{\mathbb{P}_{\textrm{l}}  }}
\newcommand{\rightmetric}{\ensuremath{\mathbb{P}_{\textrm{r}}  }}
\newcommand{\cfull}{\ensuremath{\mathbb{C}  }}
\newcommand{\cobs}{\ensuremath{\mathbb{C}_{\textrm{{obs}}}  }}
\newcommand{\cleft}{\ensuremath{\mathbb{C}_{\textrm{{l}}}  }}
\newcommand{\cright}{\ensuremath{\mathbb{C}_{\textrm{{r}}}  }}
\newcommand{\cshared}{\ensuremath{\mathbb{C}_{\textrm{{s}}}  }}
\newcommand{\cgoal}{\ensuremath{q_{\textrm{{goal}}}  }}
\newcommand{\cstart}{\ensuremath{q_{\textrm{{start}}}  }}

\newcommand{\gimpleft}{\ensuremath{\hat\mmgraph_l}}
\newcommand{\gimpright}{\ensuremath{\hat\mmgraph_r}}

\newcommand{\xrand}{\ensuremath{X^{\textup{rand} \ }}}
\newcommand{\xnear}{\ensuremath{X^{\textup{near} \ }}}
\newcommand{\xnew}{\ensuremath{X^{\textup{n}} \ }}
\newcommand{\xlast}{\ensuremath{X^{\textup{last} \ }}}
\newcommand{\xparent}{\ensuremath{X^{\textup{best} \ }}}

\newcommand{\lr}{\ensuremath{\mathbb{R}_{\textrm{ls}}}}
\newcommand{\rr}{\ensuremath{\mathbb{R}_{\textrm{sr}}}}
\newcommand{\lp}{\ensuremath{\mathbb{P}_{\textrm{l}}}}
\newcommand{\rp}{\ensuremath{\mathbb{P}_{\textrm{r}}}}

\newcommand{\motoman}{{\tt Motoman}}
\newcommand{\baxter}{{\tt Baxter}}
\newcommand{\ao}{{\tt AO}}

\newcommand\inlineeqno{\stepcounter{equation}\ (\theequation)}

\newcommand{\ioracle}{\mathbb{I}_d}
\newcommand{\xnewi}{\ensuremath{x^{n}_{\textrm{i} \ }}}

\newcommand{\nit}{\ensuremath{n_{\textrm{it}}}}

\newcommand{\nodenear}{\ensuremath{v^{\textup{near}}}}
\newcommand{\nodenew}{\ensuremath{v^{\textup{new}}}}
\newcommand{\noderand}{\ensuremath{q^{\textup{rand}}}}

\renewcommand{\qed}{\hfill$\square$}
\renewenvironment{proof}{\paragraph{Proof:}}{\qed}

\newcommand{\heuristic}{\ensuremath{\mathbb{H}}}

\definecolor{darkgreen}{RGB}{30,150,30}
\newcommand{\commentdel}[1]{{\color{magenta}}}
 \newcommand{\commentadd}[1]{{#1}} 
\title{\Large \bf dRRT\textsuperscript{*}: Scalable and Informed Asymptotically-Optimal 
Multi-Robot Motion Planning}

\author{Rahul Shome \and Kiril Solovey \and Andrew Dobson \and \\ Dan
  Halperin \and Kostas E. Bekris\thanks{A. Dobson, R. Shome and K. Bekris were
  supported by NSF IIS 1617744 and CCF 1330789.}\thanks{K. Solovey and D. Halperin's work has been supported in 
  part by the Israel Science Foundation (grant no.~825/15) and by the
  Blavatnik Computer Science Research Fund. Kiril Solovey has also been
  supported by the Clore Israel Foundation.}}

\institute{Rahul Shome, and Kostas Bekris \at
              Computer Science Dept. of Rutgers Univ., NJ, USA \\
              \email{\{rahul.shome, kostas.bekris\}@cs.rutgers.com}           \and
          Kiril Solovey, and Dan Halperin \at
              Computer Science Dept. of Tel Aviv Univ., Israel \\
              \email{\{kirilsol, danha\}@post.tau.ac.il }
          \and
          Andrew Dobson \at
              Electrical Engineering and Computer Science Dept. of Univ. of Michigan, MI, USA \\
              \email{chuples@umich.edu}
}

\date{ \vspace{-0.5in} }

\maketitle

\begin{abstract}
Many exciting robotic applications require multiple robots with many degrees of freedom, such as manipulators, to coordinate their motion in a shared
workspace.  Discovering high-quality paths in such scenarios can be
achieved, in principle, by exploring the composite space of all
robots.  Sampling-based planners do so by building a roadmap or a tree
data structure in the corresponding configuration space and can
achieve asymptotic optimality.  The hardness of motion planning,
however, renders the explicit construction of such structures in the
composite space of multiple robots impractical. This work proposes a
scalable solution for such coupled multi-robot problems, which
provides desirable path-quality guarantees and is also computationally
efficient.  In particular, the proposed \drrtstar\ is an informed,
asymptotically-optimal extension of a prior sampling-based multi-robot
motion planner, \drrt. The prior approach introduced the idea of
building roadmaps for each robot and implicitly searching the tensor
product of these structures in the composite space.  This work
identifies the conditions for convergence to optimal paths in
multi-robot problems, which the prior method was not
achieving. Building on this analysis, \drrt\ is first properly adapted
so as to achieve the theoretical guarantees and then further extended
so as to make use of effective heuristics when searching the composite
space of all robots. The case where the various robots share some
degrees of freedom is also studied.  Evaluation in simulation
indicates that the new algorithm, \drrtstar\, converges to
high-quality paths quickly and scales to a higher number of robots
where various alternatives fail. This work also demonstrates the
planner's capability to solve problems involving multiple real-world
robotic arms.

 \end{abstract}

\begin{figure}[h]
	\centering
			\includegraphics[width=0.48\textwidth]{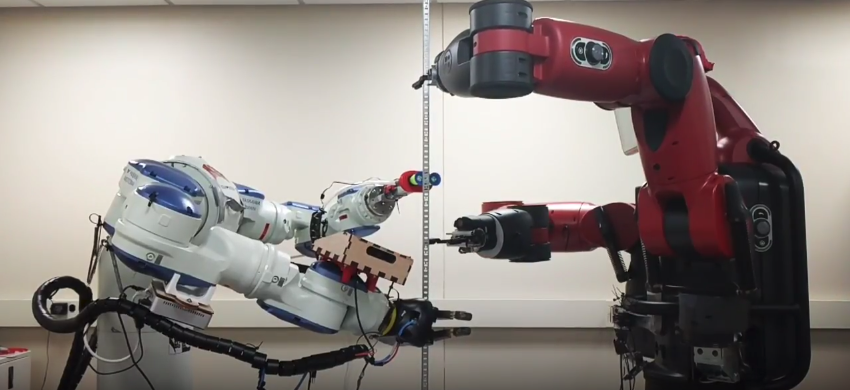}\\
	\vspace{0.1in}
	\includegraphics[width=0.48\textwidth]{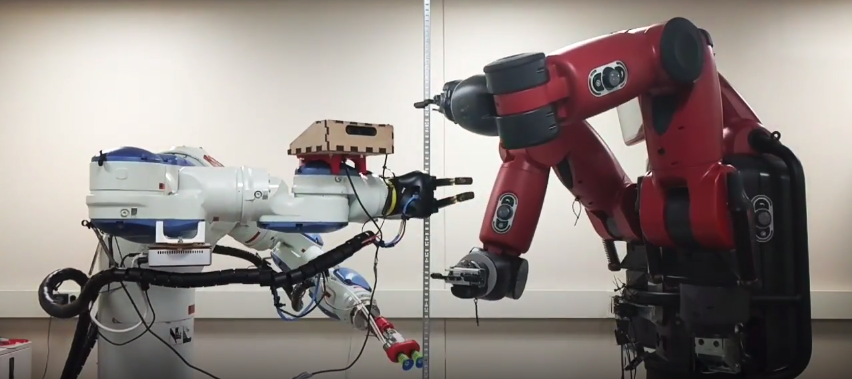}\\
	\vspace{0.1in}
	\caption{Planning in a coupled manner for multiple high-dimensional
		robots is a computationally challenging problem that motivates this
		work. The image shows an instance of a motion planning problem
		solved by the proposed approach. It involves 4 robotic arms, each
		with 7 degrees of freedom, operating in a shared workspace. The arms
		need to move simultaneously from an initial to a goal configuration.
			}
	\label{fig:setup}
	\vspace{0.1in}
\end{figure}

\section{Introduction}

A variety of robotic applications, ranging from manufacturing to
logistics and service robotics, involve multiple robotic \commentadd{systems}
operating in the same workspace. In traditional, industrial domains,
such as car manufacturing, the environment is fully known and
predictable. This allows the robots to operate in a highly scripted
manner by repeating the same predefined motions as fast as
possible. New types of tasks, however, require robotic manipulators
that compute high-quality paths on the fly. For instance, a team of
robotic arms can be tasked to pick and sort a variety of objects that
are dynamically placed on a common surface.  Multiple challenges need
to be addressed in the context of such applications, such as detecting
the configuration of the objects and grasping. This work deals with
the multi-robot motion planning (MMP) problem \citep{Wagner:2015bd,
  Gravot:2003kh, Gharbi:2009fu} in the context of such setups, i.e.,
computing the paths of multiple, high-dimensional systems, such as
robotic arms, that operate in a shared workspace, as shown in
Figure~\ref{fig:setup}.  The focus is to solve MMP in a
computationally efficient way as well as in a coupled manner, which
allows to argue about the quality of the resulting paths.

Planning for multiple, high-dimensional robotic systems is quite
challenging. The motion planning problem is already computationally
hard for a single robot \citep{canny1988complexity} that is a
kinematic chain of rigid bodies.  Thus, most approaches for
multi-robot motion planning either quickly become intractable as the
number of robots increases or alternatively sacrifice completeness and
path quality guarantees.  In particular, problem instances are
especially hard when the robots operate in a shared workspace and in
close proximity. In this case, it is not easy to reason for the robots
in a decoupled manner. Instead, it is necessary to operate in the
composite configuration space of all robots. The space requirements,
however, for solving motion planning instances increase exponentially
with problem dimensionality. The composite space of all robots in MPP
instances is typically very high-dimensional to explore in a
comprehensive and resolution complete manner, such as discretizing it
with a grid and searching over it.

Sampling-based planners aim to help with such dimensionality issues by
approximating the connectivity of the underlying configuration
space. They construct graph-based representations, such as a roadmap
or a tree data structure, which store collision free configurations
and paths through a sampling process. Under certain conditions
regarding the density of the corresponding graph, sampling-based
planners can provide desirable path quality guarantees. Specifically,
they achieve asymptotic optimality, i.e., as the sampling process
progresses, the best path on the graph converges to the optimal one in
the original configuration space. Nevertheless, even sampling-based
planners face significant space challenges in the context of MPP
problem, such as the one shown in Figure~\ref{fig:setup}, which
corresponds to a 28-dimensional space. In particular, it becomes
\commentadd{infeasible} with standard, asymptotically optimal
sampling-based planners to explicitly store a graph in the
corresponding space that will allow the discovery of a solution in
practice. \commentadd{This is due to the large number of samples
  required to cover an exponentially larger volume as the
  dimensionality of the underlying space increases. Asymptotically
  optimal planners must maintain in the order of $logn$ edges per
  sample, where $n$ is the number of samples. Thus, when planning for
  high-dimensional systems, the space requirements of the
  corresponding roadmaps surpass the capabilities of standard
  workstations rather quickly.}

A previously proposed sampling-based planner specifically designed for
multi-robot problems, called \drrt~\citep{SoloveySH16:ijrr}, achieved
progress in this area by leveraging an implicit representation of the
composite space in order to provide both completeness and efficiency.
This implicit representation is a graph, which corresponds to the
tensor product of roadmaps explicitly constructed for each robot. This
allows finding solutions for relatively high-dimensional multi-robot
motion planning problems. Nevertheless, this prior method did not
provide any path quality guarantees. 

One key contribution of this work is to show that the structure of
this implicit \commentadd{representation is guaranteed
  (asymptotically) to contain the optimal path for a set} of robots
moving simultaneously.  Nevertheless, defining an implicit graph that
contains a high-quality solution does not guarantee that the final
solution is optimal unless the search process over this graph is
appropriate.  While a provably optimal search approach, such as
\astar, could be implemented to search this graph, the extremely large
branching factor of the implicit roadmap makes this prohibitively
expensive, especially in the context of anytime planning.  Instead,
this work leverages the observation that a sampling-based method
inspired by {\tt RRT$^*$}, which maintains a spanning tree over the
underlying implicit graph, will return optimal solutions if it allows
rewiring operations during the spanning tree construction.  Namely, it
must converge to the tree with all of the minimum-cost paths starting
from the initial query state to each other node in the graph.
Further, this work shows that for a broad range of cost functions over
paths in this graph can be used while still guaranteeing the proposed
\drrtstar\ approach will asymptotically converge towards such a tree.

This paper is an extension of prior work \citep{Dobson:2017aa}, which
introduced an initial version of the $\drrtstar$ and the sufficient
conditions for generating an asymptotically optimal planner in this
context.  The current manuscript provides the following extensions:

\begin{itemize}
\item A more thorough analysis that shows that the desirable
guarantee can be achieved for an additional distance metric for
multi-robot motion planning;
\item A more detailed description of the method, which has been
  further improved for computational efficiency purposes through the
  appropriate incorporation of heuristics;
\item The method has been extended to handle systems with shared
  degrees of freedom, as shown in related work \citep{shome2017}.
\item The experimental section has been extended to include the new
methods as well as demonstrations on physical platforms.
\end{itemize}

The following section summarizes related prior work on the subject
before Section~\ref{sec:setup} introduces the problem
setup. Section~\ref{sec:methods} describes the underlying structure of
the implicit \textit{tensor-roadmap} and the previous method $ \drrt
$~\citep{SoloveySH16:ijrr}. The changes to $ \drrt $ necessary to
achieve asymptotic optimality and computational efficiency, which
result to the proposed algorithm $ \drrtstar $ are presented in
Section~\ref{sec:aodrrt}. An analysis of the properties of the method
are showcased in Section~\ref{sec:analysis}. The method is extended,
in Section~\ref{sec:shared} to systems with shared degrees of freedom.
Section~\ref{sec:experiments} evaluates the methods experimentally and
demonstrates their performance.

\section{Prior Work}
\label{sec:prior}

The multi-robot motion planning problem (MMP) is notoriously difficult
as it involves many degrees of freedom, and consequently a vast search
space, as each additional robot introduces several additional degrees
of freedom to the problem. Certain instances of the problem can be
solved efficiently, i.e., in polynomial run time, and in a complete
manner, at times even with optimality guarantees on the solution
costs~\citep{tmk-cap13,abhs-unlabeled14,SolYuZamHal15}. However, in
general MMP is computationally
intractable~\citep{hss-cmpmio,sy-snp84,SolHal16j,Johnson-RSS-16}.

Decoupled MMP techniques
\citep{ErdLoz86,GhrOkaLav05,LavHut98b,PenAke02,Berg:2005bh,Berg:2009ve}
reduce search space size by partitioning the problem into several
subproblems, which are solved separately. Then, the different
solutions are combined.  These methods, however, typically lack
completeness and optimality guarantees.  While some hybrid approaches
can take advantage of the inherent decoupling between robots and
provide guarantees \citep{Berg:2009ve}, they are often limited to
discrete domains.  The problem is more complex when the robots exhibit
non-trivial dynamics \citep{Peng2005Kinodynamic-Coord}.  Collision
avoidance or control methods can scale to many robots, but lack path
quality guarantees
\citep{vandenberg2011Reciprocal-Collision,Tang2015Complete-Multi}.

In contrast to that, centralized approaches
\citep{kh-pppi05,OdoLoz89,shh12,SoloveySH16:ijrr,Svestka:1998ud,Wagner:2015bd}
usually work in the combined high-dimensional configuration space, and
thus tend to be slower than decoupled techniques. However, centralized
algorithms often come with stronger theoretical guarantees, such as
completeness. Through the rest of this section we will consider
centralized methods, with an emphasis on sampling-based approaches.

Sampling-based algorithms for a single robot
\citep{Kavraki1996Probabilistic-R,LaValle2001,Karaman2011Sampling-based-}
can be extended to the multi-robot case by considering the fleet of
robots as one composite robot \citep{sl-upp}. Such an approach suffers
from inefficiency as it overlooks aspects of multi-robot planning, and
hence can handle only a very small number of robots. Several
techniques tailored for instances of MMP involving a small number of
robots have been described~\citep{hh-hmp,shh12}.

In previous work~\citep{sh-kcolor14}, an extension of MMP was
introduced, which consists of several groups of interchangeable
robots. At the heart of the algorithm is a novel technique where the
problem is reduced to several discrete pebble-motion problems
\citep{Kornhauser:1984oq,LunBek11,YuLav13ICRA-A}.  These reductions
amplify basic samples into massive collections of free placements and
paths for the robots.  An improved
version~\citep{KroETAL14} of this algorithm applied it to rearrange
multiple objects using a robotic manipulator.

Previous work~\citep{Svestka:1998ud} introduced a different approach,
which leverages the following fundamental observation: the structure
of the overall high-dimensional multi-robot configuration space can be
inferred by first considering independently the free space of every
robot, and combining these subspaces in a meaningful manner to account
for robot-robot collisions.  They suggested an approach which combines
roadmaps constructed for individual robots into one
\emph{tensor-product} roadmap $\mmgimp$, which captures the structure
of the joint configuration space (see more information in
Section~\ref{sec:methods}).

Due to the exponential nature of the resulting roadmap, this technique
is only applicable to problems that involve a modest number of robots.
A recent work~\citep{Wagner:2015bd} suggests that
$\mmgimp$ does not necessarily have to be explicitly represented. They
apply their \mstar\ algorithm to efficiently retrieve paths over
$\mmgimp$, while minimizing the explored portion of the roadmap.  The
resulting technique is able to cope with a large number of robots, for
certain types of scenarios. However, when the degree of simultaneous
coordination between the robots increases, there is a sharp increase in
the running time of this algorithm, as it has to consider many
neighbors of a visited vertex of $\mmgimp$. This makes \mstar less
effective when the motion of multiple robots needs to be tightly
coordinated.

Recently a different sampling-based framework for MMP was introduced,
which combines an implicit representation of $\mmgimp$ with a novel
approach for pathfinding in geometrically-embedded graphs tailored for
MMP~\citep{SoloveySH16:ijrr} .  The \emph{discrete-RRT} (\drrt)
algorithm is an adaptation of the celebrated \rrt\ algorithm for the
discrete case of a graph, and it enables a rapid exploration of the
high-dimensional configuration space by carefully walking through an
implicit representation of the tensor product of roadmaps for the
individual robots (see extensive description in
Section~\ref{sec:methods}).  The approach was demonstrated
experimentally on scenarios that involve as many as $60$ DoFs and on
scenarios that require tight coordination between robots.  On most of
these scenarios \drrt\ was faster by a factor of at least ten when
compared to existing algorithms,including the aforementioned \mstar.

Later, \drrt\ was applied to motion planning of a free-flying
multi-link robot~\citep{SalSolHal16}. In that case, \drrt\ allowed to
efficiently decouple between costly self-collision checks, which were
done offline, and robot-obstacle collision checks, by traversing an
implicitly-defined roadmap, whose structure resembles to that of
$\mmgimp$. \drrt\ has also been used in the study of the effectiveness
of metrics for MMP, which are an essential ingredient in
sampling-based planners~\citep{AtiSolHal17}.

The current work proposes \drrtstar\ and shows that it is an efficient
asymptotically optimal extension of the previously proposed \drrt.
The \drrtstar\ framework is an anytime algorithm, which quickly finds
initial solutions and then refines them, while ensuring asymptotic
convergence to optimal solutions.  Simulations show that the method
practically generates high-quality paths while scaling to complex,
high-dimensional problems, where alternatives fail.

\section{Problem Setup and Notation}\label{sec:setup}

We start with a definition of the problem. Consider a shared workspace
with $R \geq 2$ holonomic robots, each operating in a $ d
$-dimensional configuration space $\cspace_i \subset \mathbb{R}^d$ for
$1\leq i\leq R$.  For a given robot $i$, denote its free space, i.e.,
the set of all collision free configurations, by $\cfree_i
\subset \cspace_i$, and the obstacle space by
$\cinv_i=\cspace_i\setminus \cfree_i$.

The \emph{composite configuration space}
$\cspace = \prod^R_{i=1} \cspace_i$ is the Cartesian product of each
robot's configuration space.  That is, a composite configuration
$Q = (q_1,\ldots,q_R) \in \cspace$ is an $R$-tuple of robot
configurations.  For two distinct robots~$i,j$, denote by
$I_i^j(q_j)\subset \cspace_i$ the set of configurations of robot $i$,
which lead into collision with robot $j$ at its configuration $q_j$.
Then, the composite free space $\cfree \subset \cspace$ consists of
configurations $Q=(q_1,\ldots,q_R)$ in which robots do not collide with
obstacles or pairwise with each other. Formally: 
\begin{myitem}
\item $q_i \in \cfree_i$ for every $1\leq i\leq R$;
\item $q_i \not\in I_i^j(q_j), q_j \not\in I_j^i(q_i)$ for every 
$1 \leq i < j\leq R$.
\end{myitem}
The composite obstacle space is defined as
\mbox{$\cinv = \cspace \setminus \cfree$}.  

Multi-robot planning is concerned with finding (collision-free)
composite trajectories of the form $\Sigma:[0,1]\rightarrow \cfree$.
$\Sigma$ is an $R$-tuple $(\sigma_1, \ldots, \sigma_R)$
of single-robot trajectories 
\mbox{$\sigma_i:[0,1] \rightarrow \cspace_i$}.

This work is concerned with producing high-quality trajectories, which
minimize certain \emph{cost functions}. In particular, we consider
three cost functions $\cost(\cdot)$, which are presented below. Let
$\Sigma=(\sigma_1,\ldots,\sigma_R)$ be a composite trajectory. For the
following, $\|\cdot\|$ denotes the standard \emph{arc length} of a
curve:
\begin{myitem}
\item The sum of path lengths: $\cost(\Sigma)= \sum_{i=1}^R \|\sigma_i\|$.
\item The maximum path length: $\cost(\Sigma)= \max_{i=1:R} \|\sigma_i\|$.
\item The Euclidean arc length of $\Sigma$:
  $\cost(\Sigma) = \|\Sigma\| $
\end{myitem}

\begin{figure}[t]
	\centering
	\includegraphics[width=3in]{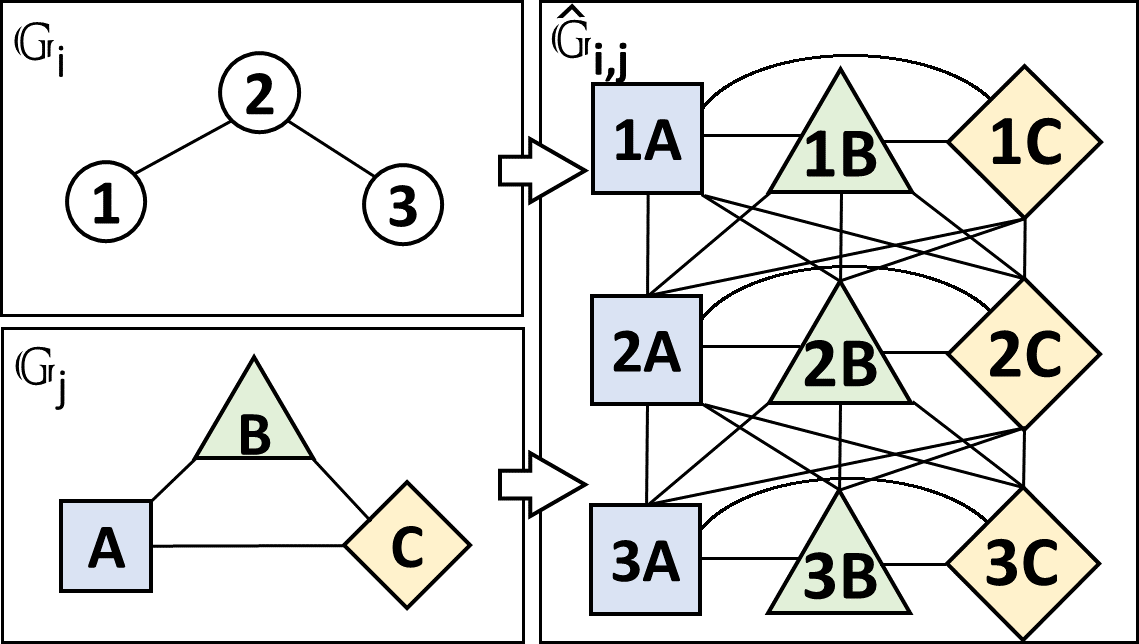}
	\caption{\commentadd{Given the individual robot roadmaps $\graph_i$ and
          $\graph_j$ shown on the left, the tensor product roadmap
          $\hat \graph_{i,j}$ arises, which is shown on the right.
          For each pair of nodes, where one is selected from
          $\graph_i$ and the other from $\graph_j$, a node is defined
          in $\hat\graph_{i,j}$. Two nodes in the tensor-product
          roadmap share an edge if their constituent nodes in the
          individual robot roadmaps also share an edge. For instance,
          nodes $1A$ and $3C$ do not share an edge in $\hat
          \graph_{i,j}$ because nodes $1$ and $3$ do not share an edge
        in $\graph_i$.}}
	\label{fig:tprm}
\end{figure}

This work presents \drrtstar\ as an efficient, anytime solution to the 
robustly-feasible composite motion planning (RFCMP) problem :

\begin{definition}[RFCMP]
Given $R$ robots operating in composite configuration space $\cspace
= \prod^R_{i=1} \cspace_i$, and for a given query $S=(s_1,\ldots,s_R),
T=(t_1,\ldots,t_R)$, an RFCMP 
problem is one which yields a robustly-feasible trajectory 
$\Sigma: [0,1] \rightarrow \cfree$ and $\Sigma(0)=S, \Sigma(1)=T$. Namely, there exists a fixed constant $\delta > 0$ such that 
  \commentadd{$\forall\ \tau\in [0,1],X\in \cinv$} it holds that
  $$\|\Sigma(\tau)-X\|\geq \delta.$$
\end{definition}
One of the primary objectives of this work is to provide asymptotic
optimality in the composite configuration space without explicitly
constructing a planning structure in this space.  

\begin{definition}[Asymptotic Optimality]
Let $ m $ be the time budget of the algorithm and a \textit{robustly optimal} solution $ \Sigma^{(m)} $ of cost $ c^* $ is returned after time $ m $, then asymptotic optimality is defined as ensuring that the following holds true for any $ \epsilon>0 $.
  $$\lim_{m\rightarrow \infty}\Pr\left[\cost(\Sigma^{(m)})\leq
    (1+\epsilon)c^*\right]=1.$$
\end{definition}

\section{Algorithmic Foundations}\label{sec:methods}

This section provides a detailed description of the
\emph{discrete}-\rrt\ (\drrt) method~\citep{SoloveySH16:ijrr}, which
is the basis of our method presented in Section~\ref{sec:aodrrt}.
\drrt\ was posed as an efficient way to search an implicitly defined
tensor-product roadmap, which captures the structure of $\cspace$
without explicitly sampling this space.

\subsection{Tensor-product roadmap}
Here we provide a formal definition of the tensor-product roadmap that
\drrt\ is designed to explore. For every robot $1\leq i\leq R$
construct a \prm\ graph~\citep{Kavraki1996Probabilistic-R}, denoted by
$\graph_i = (\nodes_i, \edges_i)$, which is embedded in
$\cfree_i$. That is, $\graph_i$ can be viewed as an approximation of
$\cfree_i$ and encodes collision free motions for robot $i$. The
construction of $\graph_i$ is determined by two parameters $n$ and
$r_n$, which represent the number of samples, and the connection
radius, respectively. As will be discussed in the following sections,
it is necessary the roadmaps $\graph_1,\ldots,\graph_R$ to be
constructed with certain range of parameters to guarantee asymptotic
optimality of the new planners (Section~\ref{sec:aodrrt}).

Define the \emph{tensor-product roadmap}, denoted by
$\mmgimp = (\mmnodes, \mmedges)$, as the tensor product between
$\graph_1,\ldots,\graph_R$ (see Figure~\ref{fig:tprm}). Each vertex of
$\mmgimp$ describes a simultaneous placement of the $R$ robots, and
similarly an edge of $\mmgimp$ describes a simultaneous motion of the
robots.  Formally,
$\mmnodes = \{ (v_1, v_2, \dots, v_R ): \forall\ i,\ v_i \in \nodes_i
\}$
is the Cartesian product of the nodes from each roadmap~$\graph_i$.
For two vertices
$V =(v_1,\ldots,v_m) \in \mmnodes, V'=(v'_1,\ldots,v'_m) \in
\mmnodes$,
the edge set $\mmedges$ contains edge $(V,V')$ if
$\forall i \in [1,R]:\ v_i=v'_i$ or
$(v_i,v'_i)\in \edges_i$.\footnote{Notice this difference from
  the original $\drrt$~\citep{SoloveySH16:ijrr} so as to allow edges
  where some robots remain motionless.}  Note that
by the definition of $\graph_1,\ldots,\graph_R$, the motion described by
each edge $E\in \mmedges$ represents a path for the $R$ robots in
which the robots do not collide with obstacles. However, collisions
between pairs of robots still may be possible.

It is important to note that the \textit{tensor-product roadmap} has $\| \mmnodes \| = \prod_{i=1}^{R} \|\mmnodes_i\| $ vertices. Given the neighborhood of a node $ v_i $ in $ \graph_i $ as $ \mathtt{Adj}(v_i,\graph_i) $, the size of the neighborhood of a node $ v = \{ v_1 \dots v_R \} $ in $ \mmgimp $ is $ \| \mathtt{Adj}(v,\mmgimp) \| = \prod_{i=1}^{R} \| \mathtt{Adj}(v_i,\graph_i) \| $. 
Using the much smaller $\graph_1,\ldots,\graph_R$ to construct $ \mmgimp $ online is computationally beneficial.

\commentadd{The presented algorithms share a common set of input and
  output parameters, such as the configuration space decompositions,
  which are predefined. In practice, the algorithms use pre-computed
  roadmaps in each constituent space online. The collision volumes
  that correspond to the robot and obstacles in the scene are also
  used online for validation. The algorithms output a trajectory in
  the configuration space of all robots, which is collision free with
  all obstacles and among robots.}

\subsection{Discrete RRT}

An explicit construction of $\mmgimp$ is possible in very limited settings that either involve few robots, e.g., $R=2$, or when the underlying single-robot roadmaps have few vertices and edges. However, in general it is prohibitively costly to fully represent it due to its size, which grows exponentially with the number of robots, in terms of the number of vertices. Moreover, in some cases it may be even a challenge to represent all the edges adjacent to a single vertex of $\mmgimp$, as there may be exponentially many of those.

\begin{figure}[!h]
	\centering
	\includegraphics[height=1.4in]{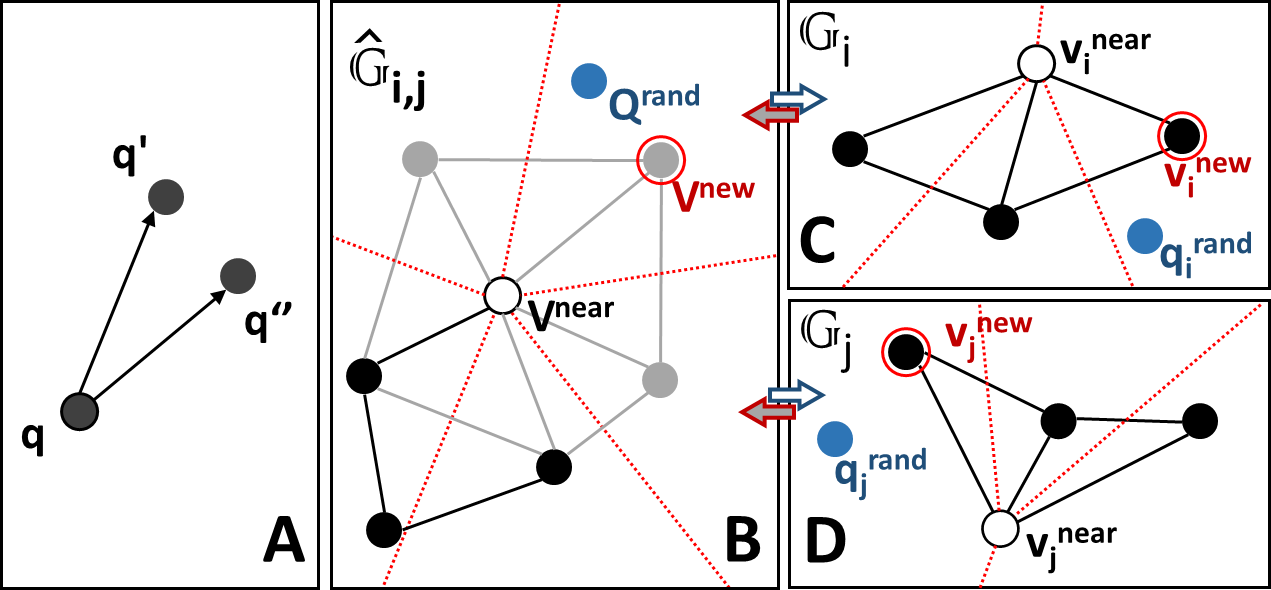}
	\caption{(A) The method reasons over all neighbors $q'$ of $q$ so as
		to minimize the angle $\angle_{q}(q', q'')$. (B)
		$\oracle(\cdot,\cdot)$ finds graph vertex \vnew\ by minimizing angle
		$\angle_{\vnear}(\vnew,\qrand)$. (C,D) \vnear and \qrand are
		projected into each robot's $\cspace$-space to find $\nodenew_{\textup{i}}$ and $\nodenew_{\textup{j}}$,
		respectively, which minimize both $\angle_{
			\nodenear_{\textup{i}}}
		(\nodenew_{\textup{i}}, \noderand_{\textup{i}}
		)$ and $\angle_{
					\nodenear_{\textup{j}}}
				(\nodenew_{\textup{j}}, \noderand_{\textup{j}}
				)$.}
	\label{fig:oracle}
\end{figure}

\begin{figure*}[!ht]
	\centering
	\includegraphics[height=1.49in]{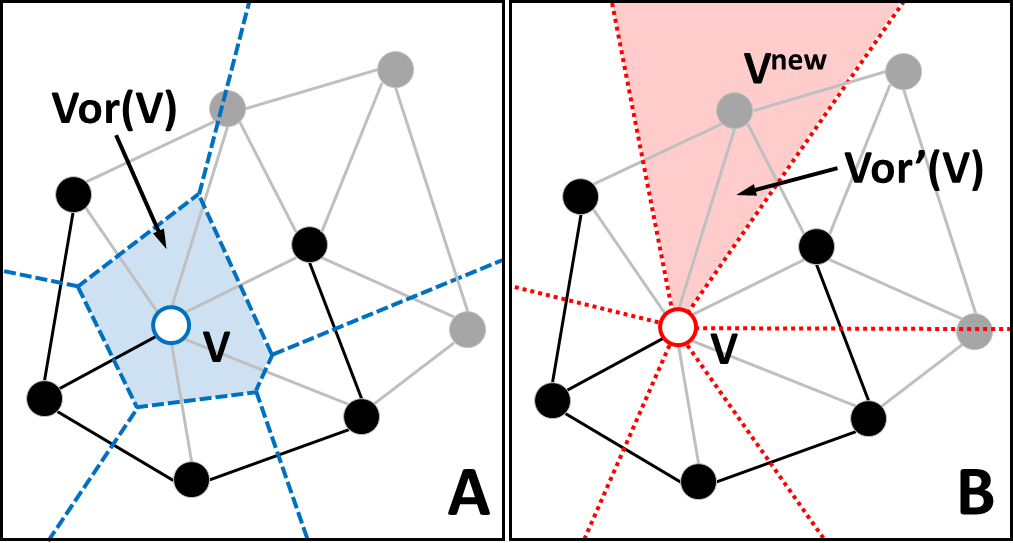}
	\includegraphics[height=1.49in]{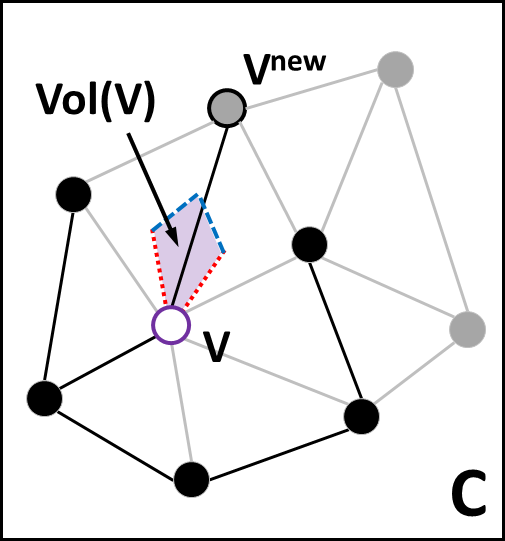}
	\caption{(A) The Voronoi region $\textup{Vor}(V)$ 
		of vertex $V$ is shown where if \qrand\ is drawn, vertex $V$ is 
		selected for expansion. (B) When \qrand\ lies in the directional 
		Voronoi region $\textup{Vor}'(V)$, the expand step expands to \vnew.  
		(C) Thus, when \qrand\ is drawn within volume $\textup{Vol}(V) = 
		\textup{Vor}(V) \cap \textup{Vor}'(V)$, the method will generate 
		\vnew\ via $V$.}
	\label{fig:voronoi}
\end{figure*}

The \drrt\ algorithm enjoys the rich structure that $\mmgimp$ offers (see Section~\ref{sec:analysis})
without explicitly representing it. In particular, it gathers
information on $\mmgimp$ only from the single-robot roadmaps
$\graph_1,\ldots,\graph_R$.

 Similarly to the single-robot planner
\rrt~\citep{LaValle2001}, \drrt\ grows a tree rooted at the start state
of the given query (Line~1). 
\commentadd{\drrt\ restricts the growth of
its tree $\tree$ to the tensor-product roadmap~$\mmgimp$ in contrast to \rrt, which explores the entire space $\cspace$.} That is,
$\tree$ is a subgraph of $\mmgimp$, and $\tree\subset\cfree$. 

The high level operations of the \drrt\ approach are outlined in
Algorithm~\ref{algo:drrt}. The approach will iterate until a solution
is found or the time limit is exceeded (Algorithm~\ref{algo:drrt}, Line~2), beginning with performing a fixed number $\nit$
expansion steps at the beginning of each iteration (Lines~3,~4).  This
expansion process is outlined in Algorithm~\ref{algo:drrt_expand}.
The approach then checks to see if there is a connected path to the
target (Line~5), and once a path is found, it is returned
(Lines~6,~7). 

The expansion procedure begins by drawing a random sample
$\qrand\in\cspace$ (Line~1).  It then finds the nearest neighbor $\vnear$
in the tree (Line~2) and then selects a neighbor $\vnew$, such that
$(\vnear,\vnew)\in \mmedges$, according to a \emph{direction oracle}
function $\oracle$ (Line~3). Then, if $\vnew$ is not in the tree
(Line~4), it is added to the tree (Line~5) and an edge from $\vnear$
to $\vnew$ is also added (Line~6).

We now elaborate on $\oracle$. Given $\vnear,\qrand$, the oracle
returns a vertex $\vnew\in \mmnodes$ that is the neighbor of $\vnear$
(in $\mmgimp$) found in the direction of $\qrand$. The crux of the approach is that $\oracle$ can come up with such a neighbor
efficiently without relying on explicit representation of $\mmgimp$.
Let $Q,Q', Q''\in \cspace$ and define $\rho(Q,Q')$ to be the ray through $Q'$ starting at $Q$.  Then, denote $\angle_{Q} (Q',Q'')$ as the
minimum angle between $\rho(Q,Q')$ and $\rho(Q,Q'')$.  Denote by
$\mathtt{Adj}(\vnear,\mmgimp)$ the set of neighbor nodes of $\vnear$ in $\mmgimp$,
i.e., for every $V\in \mathtt{Adj}(\vnear,\mmgimp)$ it holds that
$(\vnear, V)\in \mmedges$. Then
\commentadd{$$\oracle(\vnear,\qrand)=\argmin_{V\in \mathtt{Adj}(\vnear,\mmgimp)} \angle_{\vnear}(\qrand,V).$$}

\begin{algorithm}[h]
\caption{$\drrt {\tt(} \mmgimp, S, T, \nit {\tt)}$}
\label{algo:drrt}
$\tree.\mathtt{init}(S)$\;
\While{${\tt time.elapsed}() < {\tt time\_limit}$}
{
    \For{$i : 1 \to \nit$}
    {
        ${\tt Expand(} \mmgimp, \tree {\tt)}$\;
    }
    $\pi \gets {\tt Connect\_to\_Target(} \mmgimp, \tree, T {\tt)}$\;
    \If{$\pi \neq \emptyset$}
    {
        {\bf return} ${\tt Trace\_Path(} \tree, T {\tt)}$\;
    }
}
{\bf return} $\emptyset$
\end{algorithm}
 \begin{algorithm}[h]
\caption{${\tt Expand(} \mmgimp, \tree {\tt)}$}
\label{algo:drrt_expand}
$\qrand \gets {\tt Random\_Sample()}$\;
$\vnear \gets {\tt Nearest\_Neighbor(} \tree, \qrand {\tt)}$\;
$\vnew \gets {\tt \oracle (} \vnear, \qrand {\tt)}$\;
\If{ $\vnew \notin \tree$ }
{
    $\tree.{\tt Add\_Vertex(} \vnew {\tt)}$\;
    $\tree.{\tt Add\_Edge(} \vnear, \vnew {\tt)}$\;
}
\end{algorithm}
 
The implementation of $\oracle$ (Algorithm \ref{algo:drrtstar_oracle}) proceeds in the following manner (see
a two-robot case illustrated in Figure \ref{fig:oracle}). Let
$\qrand=(\noderand_1,\ldots,\noderand_R),
\vnear=(\nodenear_1,\ldots,\nodenear_R)$. For every robot $1\leq i\leq R$, the oracle extracts from $\graph_i$ the neighbor $\nodenew_i$ of $\nodenear_i$, which minimizes the expression $\angle_{\nodenear_i}(\noderand_i,\nodenew_i)$. Notice that such a search can be performed efficiently as it only requires to traverse all the neighbors of $\nodenear_i$ in $\graph_i$. 
The combination of all $\nodenear_i$ yields $\vnear$.

As in $\rrt$, $\drrt$ has a Voronoi-bias property
~\citep{DBLP:conf/icra/LindemannL04}.  \commentadd{Showing that $\drrt$
exhibits Voronoi bias is slightly more involved compared to the basic
$\rrt$. This is illustrated in Figure \ref{fig:voronoi}.}  To generate
an edge $(V, \vnew)$, random sample \qrand\ must be drawn within the
Voronoi cell of $V$, denoted as $\textup{Vor}(V)$
(Figure~\ref{fig:voronoi}(A)) and in the general direction of $\vnew$,
denoted as $\textup{Vor}'(V)$ (Figure~\ref{fig:voronoi}(B)).  
The intersection of these two volumes: $\textup{Vol}(V) = \textup{Vor}(V)
\cap \textup{Vor}'(V)$, is the volume to be sampled so as to generate
\vnew via \vnear\ as shown in Figure \ref{fig:voronoi}.

\commentadd{The high-level loop of the algorithm remains similar
  across the method variants.  The input parameter $n_{it}$ denotes
  how many times the tree is expanded before the algorithm checks
  whether a solution has already been discovered.  If $n_{it}=1$, this
  check is performed every iteration. If tracing the path is an
  expensive operation - typically it corresponds to a heuristic search
  process over the tensor product roadmap - then the implementer can
  choose to use a higher value.}

\section{Asymptotically Optimal Discrete RRT}
\label{sec:aodrrt}

This section outlines two versions of the proposed asymptotically optimal
variant of the $\drrt$ method.  The first is a simple uninformed
approach, which relies on the fact that to provide asymptotic
optimality, it is sufficient to use a simple rewiring scheme.  This
simplified version will be called the asymptotically-optimal $\drrt$
($\udrrtstar$).  For the sake of algorithmic efficiency however, a
second, more advanced version is also proposed referred to as
$\drrtstar$.  To summarize the algorithmic contributions of the current work over the original $\drrt$:
\begin{itemize}
\item $\drrtstar$ performs a rewiring step to refine paths in the
tree, reducing costs to reach particular nodes.
\item $\drrtstar$ is anytime, employing branch and bound pruning 
after an initial solution is reached.
\item $\drrtstar$ promotes progress towards the goal during tree node selection.
\item $\drrtstar$ employs an informed expansion procedure $\ioracle$ capable of using heuristic guidance.
\end{itemize}
\subsection{ao-$\drrt$}
This section outlines $\udrrtstar$, an asymptotically optimal version of the $\drrt$
algorithm which has been minimally modified to guarantee asymptotic
optimality.  At a high-level the approach uses a tree re-wiring 
technique reminiscent of $\rrtstar$~\citep{Karaman2011Sampling-based-}.

Algorithm~\ref{algo:udrrtstar} outlines $\udrrtstar$ which iteratively expands a tree $\tree$ over $\mmgimp$ given a time budget 
(Algorithm~\ref{algo:udrrtstar}, Line~2), performing $\nit$ consecutive calls to 
${\tt Expand\_\udrrtstar}$ (Lines~3,~4).  
Then, the method attempts to 
trace the path $\pi$ which connects the start $S$ with the target $T$ 
(Line~5).  
If such a path is found and is better than $\pi_{\textup{best}}$, it replaces $\pi_{\textup{best}}$ (Lines~6,~7).  
$\pi_{\textup{best}}$ is returned after the time limit is reached (Line~8).

\begin{algorithm}[ht]
\caption{$\udrrtstar {\tt(} \mmgimp, S, T, \nit {\tt)}$}
\label{algo:udrrtstar}
$\pi_{\textup{best}} \gets \emptyset$,
$\tree.{\tt init}(S)$\;
\While{${\tt time.elapsed}() < {\tt time\_limit}$}
{
    
    \For{$i : 1 \to \nit$}
    {
        ${\tt Expand\_\udrrtstar(} \mmgimp, \tree {\tt)}$\;
    }
    $\pi \gets {\tt Connect\_to\_Target(} \mmgimp, \tree, T {\tt)}$\;
    \If{$\pi \neq \emptyset\ \cap\ \cost(\pi) < \cost(\pi_{\textup{best}})$}
    {
        $\pi_{\textup{best}} \gets {\tt Trace\_Path(} \tree, T {\tt)}$
    }
}
{\bf return $\pi_{\textnormal{best}}$}
\end{algorithm}
 \begin{algorithm}[ht]
\caption{${\tt Expand\_\udrrtstar(} \mmgimp, \tree{\tt)}$}
\label{algo:udrrtstar_expand}
$\qrand \gets {\tt Random\_Sample()}$\;
$\vnear \gets {\tt Nearest\_Neighbor(} \tree, \qrand {\tt)}$\;
$\vnew \gets {\tt \oracle (} \vnear, \qrand {\tt)}$\;
\If{ $\vnew \notin \tree$ }
{
    $\tree.{\tt Add\_Vertex(} \vnew {\tt)}$\;
    $\tree.{\tt Add\_Edge(} \vnear, \vnew {\tt)}$\;
}
\Else
{
    $\tree.{\tt Rewire(} \vnear, \vnew {\tt)}$\;    
}
\end{algorithm}
 \begin{algorithm}[ht]
\caption{${\tt \oracle (} \vnear, \qrand, \mmgimp {\tt)}$}
\label{algo:drrtstar_oracle}

\For{$ i : 1 \rightarrow R $}
{
	{
		$ \nodenew_i \leftarrow \underset{v\in{\tt Adj}(\nodenear_i,\graph_i)}{argmin} {\angle_{\nodenear_i}} (\noderand_i,v) $ \; 
	}
}

\textbf{return} $ \vnew $

\end{algorithm}

The expansion procedure for $\udrrtstar$ is very similar to the
original $\drrt$ method, and is outlined in
Algorithm~\ref{algo:udrrtstar_expand}.  It begins
by drawing a random sample in the composite configuration space
(Line~1), and then finds the nearest neighbor $\vnear$ to this sample
in the tree (Line~2). It then selects a neighbor $\vnew$ according to
the oracle function $ \oracle $ (Algorithm~\ref{algo:drrtstar_oracle}).
This is the same oracle that is used in $ \drrt $ that tries to select
 a neighbor of $ \vnear $ most in the direction of $ \qrand $.  
 Then, if $\vnew$ is not in
the tree (Line~4), it is added to the tree (Line~5) and an edge from
$\vnear$ to $\vnew$ is also added (Line~6).  Where this expansion step
differs is that if $\vnew$ is already in the tree (Line~7), the method
performs a rewiring step (Line~8) to \commentadd{check} to see
if the path to $\vnew$ is of lower cost than the existing
one.

The method would be similar to $ \drrt $ in terms of the samples that 
constitute the tree, however $ \udrrtstar $ improves the solution cost 
with iterations and finds better solutions compared to $ \drrt $. It is however desirable to focus the search in order find the initial solution quickly, while preserving solution quality improvement over time.

\subsection{$\drrtstar$}
The main body of the informed \drrtstar\ algorithm is provided in 
Algorithm~\ref{algo:drrtstar}. The proposed method is an improvement on top of $ \udrrtstar $, that preserves the asymptotic optimality while benefiting computationally from \textit{branch-and-bound} pruning once a solution is found, greedy child propagations during node selection, and heuristic guidance during expansions. 

\begin{figure}[h]
	\centering
	\includegraphics[width=3in]{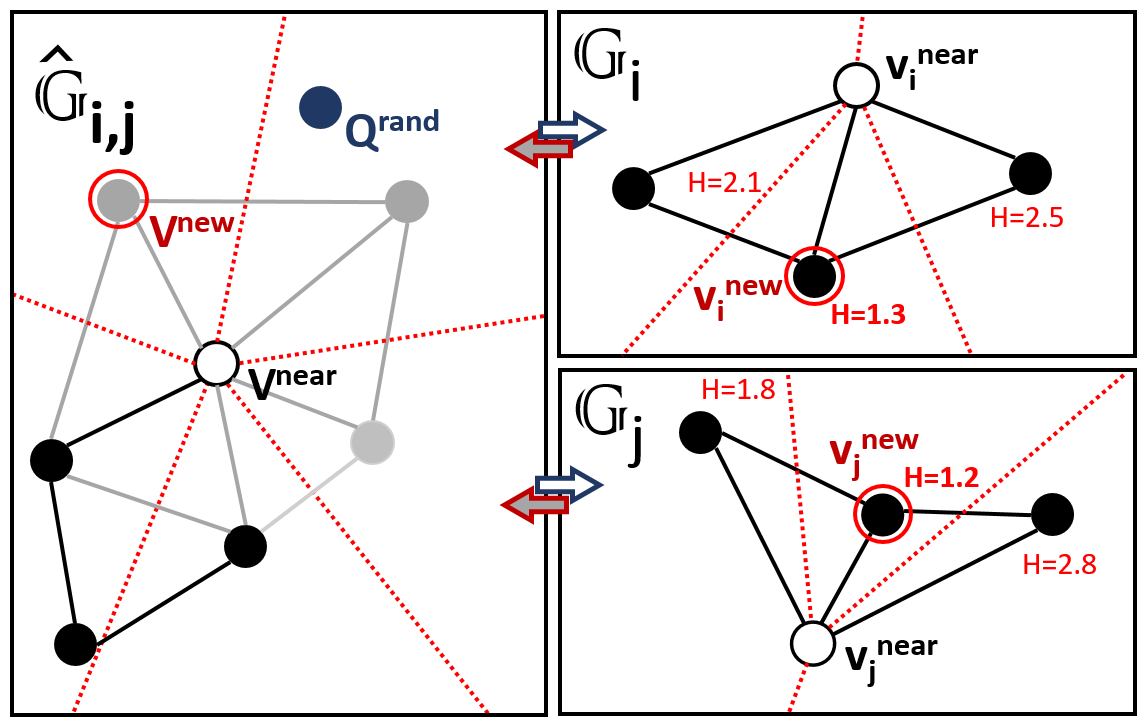}
		\caption{The method selects a neighbor with the best heuristic estimate  $ \heuristic $ from each of the constituent roadmaps during the guided expansion phase in $ \ioracle $.}
	\label{fig:dadrrt_expansion}
\end{figure}

The key insight behind the algorithmic improvements is the fact that by virtue of the structure of the \textit{tensor-product roadmap} $ \mmgimp $, there readily exists a usable heuristic measure $ \heuristic $ in the constituent roadmaps $ \mmgraph_i $. The shortest path on a constituent roadmap to the goal $ T $ can be used as a heuristic to guide the tree. 
If there is no robot interaction introduced by the individual robot shortest paths, such a path comprising of the individual shortest paths is a solution that suffices.
In cases of interaction between the robots, a shortest path is expected to deviate locally in regions of interaction. The best a robot can do from any constituent roadmap vertex is to follow the shortest path to the goal on the constituent roadmap. Although domain-specific heuristics can be also applied to the algorithm, it should be noted that in the currently proposed method, the purpose of the heuristic is to primarily discover an initial solution as quickly as possible. This in turn helps \textit{branch-and-bound} kick in and further focuses the search once a bounding cost is ascertained from the initial solution.

At a high level, Algorithm~\ref{algo:drrtstar} follows the structure of $ \udrrtstar $. The only change is that the outer loop keeps track of the tree node being added as $ \vlast $ and passes it on to the next call to the $ \mathtt{Expand\_\drrtstar} $ subroutine. The use of this information to apply heuristic guidance is detailed in the description of the function.

Algorithm~\ref{algo:drrtstar_expand} outlines the expansion step. The
default behavior is summarized in Algorithm~\ref{algo:drrtstar_expand}, Lines~1-3, i.e., when no $\vlast$ is
passed as argument (Line~1). This operation corresponds to an
exploration step similar to \rrt, i.e., a random sample $\qrand$
is generated in $\ccross$ (Line~2) and its nearest neighbor $\vnear$
in $\tree$ is found (Line~3). 
If the last iteration generated a node $\vlast$ that was closer to the goal 
relative to its parent, then $\vlast$ is provided to the function. In this case (Line~4-6) $\qrand$ is set as the target 
configuration $ T $, and $\vnear$ is selected as $ \vlast $.
This constitutes the greedy child propagations which promotes progress towards the goal.

\begin{algorithm}[!ht]
\caption{$\drrtstar {\tt(} \mmgimp, S, T, \nit{\tt)}$}
\label{algo:drrtstar}
$\pi_{\textup{best}} \gets \emptyset$,
$\tree.{\tt init}(S), \vlast \gets S$\;
\While{${\tt time.elapsed}() < {\tt time\_limit}$}
{       
        
    \For{$i : 1 \to \nit$}
    {
        $\vlast \gets {\tt Expand\_\drrtstar(} \mmgimp , \tree, \vlast, T {\tt)}$\;
    }
    $\pi \gets {\tt Connect\_to\_Target(} \mmgimp, \tree, T {\tt)}$\;
    \If{$\pi \neq \emptyset\ \cap\ \cost(\pi) < \cost(\pi_{\textup{best}})$}
    {
        $\pi_{\textup{best}} \gets {\tt Trace\_Path(} \tree, T {\tt)}$
    }
}
{\bf return $\pi_{\textnormal{best}}$}
\end{algorithm}
 
\begin{algorithm}[ht]

\caption{${\tt Expand\_\drrtstar(} \mmgimp, \tree, \vlast, T {\tt)}$}
\label{algo:drrtstar_expand}
\If{$\vlast = \emptyset$}
{
    $\qrand \gets {\tt Random\_Sample()}$\;
    $\vnear \gets {\tt Nearest\_Neighbor(} \tree, \qrand {\tt)}$\;
    
}
\Else
{

    $\qrand \gets T$\;
    $\vnear \gets \vlast$\;
    
}
$\vnew \gets {\tt \ioracle (} \vnear, \qrand, \mmgimp, T {\tt)}$\;
$N \gets {\tt Adj(} \vnew, \mmgimp {\tt )} \ \cap \ \tree$\;

$\vparent \gets \underset{V \in N }{argmin} {\ \ \cost(V) + \cost( \local(V, \vnew ))} $\;

\lIf{$\vparent = \emptyset$}
{
    ${\bf return}\ \emptyset$
}
\lIf{$\cost(\vnew)>\cost(\pi_{\textup{best}}) $}
{
    ${\bf return}\ \emptyset$
}
\If{ $\vnew \notin \tree$ }
{
$\tree.{\tt Add\_Vertex(} \vnew {\tt)}$\;
$\tree.{\tt Add\_Edge(} \vparent, \vnew{\tt)}$\;
}
\lElse
{
    $\tree.{\tt Rewire(} \vparent, \vnew {\tt)}$   
}
\For{ $V \in N$ }
{
    \If{$\cost(\vnew)+\cost(\local(\vnew,V))<\cost(V)\ \cap\  \local(\vnew,V)\subset\cfree$}
    {
        $\tree.{\tt Rewire(} \vnew, V {\tt )}$\;
    }
}
\If{${\tt \heuristic}(\vnew) < {\tt \heuristic}(\vparent)$ }
{
  ${\bf return}\ \vnew$\;
}
\lElse
{
    ${\bf return}\ \emptyset$
}
\end{algorithm}

 \begin{algorithm}[!h]
\caption{${\tt \ioracle (} \vnear, \qrand, \mmgimp, T {\tt)}$}
\label{algo:idrrtstar_oracle}

\For{$ i : 1 \rightarrow R $}
{
	\If{ $ \noderand_i = T_i $ }
	{
		$ \nodenew_i \leftarrow \underset{x\in{\tt Adj}(\nodenear_i,\graph_i)}{argmin}{\tt \heuristic(} x, T_i, \graph_i {\tt )} $ \; 
	}
	\Else
	{
		$ \nodenew_i  \leftarrow {\tt random}({\tt Adj}(\nodenear_i,\graph_i))$\;
	}
}

\textbf{return} $ \vnew $

\end{algorithm}

 \textbf{Informed Expansion $ \ioracle $}: The expansion procedure in Algorithm~\ref{algo:idrrtstar_oracle} replaces the oracle in $ \drrt $. It switches between distinct guided and exploratory behaviors according to whether $ q^{rand}_i $ attempts to drive the expansion towards the target $ T $ or not. When the method uses heuristic guidance, among all the neighbors of $ \nodenear_i $ on a constituent roadmap $ \mmgraph_i $, $ \mathtt{Adj}(\nodenear_i, \mmgraph_i) $, the one with the best heuristic measure $ \heuristic $ is selected. During the exploration phase, the method selects a random neighbor out of $ \mathtt{Adj}(\nodenear_i, \mmgraph_i) $.

In either case, the oracle function $\ioracle$ returns to Algorithm~\ref{algo:drrtstar_expand} the implicit graph node \vnew\ that is a neighbor of 
\vnear\ on the implicit graph (Line~7).  
Then the method finds neighbors $N$, which are adjacent to \vnew\ in 
$\mmgimp$ and have also been added to $\tree$ (Line~8).  Among $N$, 
the best node $\vparent$ is chosen as the node to connect $ \vnew $ according to cost measure. Such an operation might yield no valid parent $ \vparent $ due to collisions along $ \mathbb{L}(\cdot) $. In such a case (Line~10) the method fails to add a node during the current iteration. Line~11 implements a \textit{branch-and-bound} based on the cost of the best solution so far.

\commentadd{Lines~12-15 recount} the tree addition and parent rewire process. \commentadd{Lines~16-17 perform} an additional rewiring step in the neighborhood if $ \vnew $ is a better parent of any of the neighboring nodes. \commentadd{Line 19 switches} the child promotion by checking whether $ \vnew $ made progress toward the goal according to the heuristic measure. The method ensures that in this case $ \vnew $(or $ \vlast $) is a \commentadd{child-promoted node, which} would be selected during the next iteration. \commentadd{This effect of this behavior} is that if the uncoordinated individual shortest paths are collision free, this would be greedily attempted first from the child-promoted nodes added to the tree. Evaluations indicate that this proves very effective in practice.

It should be noted that all candidate edges $ \mathbb{L}(\cdot) $ in Line 9 and 17 are collision-free and for the sake of algorithmic clarity, collision checking has been assumed to be encoded into the steering function $ \mathbb{L}(\cdot) $ and this is enforced during tree additions and rewires. Any specialized sampling behavior is assumed to be part of the implementation of the subroutine $ \mathtt{Random\_Sample} $.

\noindent \textbf{Notes on Implementation}: In the implementation, the
heuristic measure $ \heuristic $ is efficiently calculated by
precomputing all-pair shortest paths on the constituent graphs with
\textit{Johnson's} algorithm \citep{johnson1977efficient}, which runs
in $ \mathcal{O}(|V|^2log|V|) $. Precomputing the heuristic measure
alleviates any overhead of spending online computation time. It is
proposed that for large graphs, the vertices can be subsampled and the
heuristic estimated for representative nodes that approximate the
$\heuristic$ value in their neighborhoods. The neighbor with the best
$\heuristic$ value can be computed once for a given target $T$, and
reused during the iterations inside $\ioracle$. In
Algorithm~\ref{algo:drrtstar_expand} (Line 19), $ \heuristic $ refers
to a heuristic estimate in the composite space. This can be deduced
from the constituent spaces. The method also included additional
focused random sampling \citep{gammell2015batch} once a solution is
found to aid in convergence. \commentadd{For a fraction} of the
``random samples'', goal biasing samples the target state for a
robot.

The set $ \mathtt{Adj}(\cdot) $ returns the set of neighbors on the
graph for a vertex, in addition to the vertex itself. This ensures
that it is possible for a robot to stay static during an edge
expansion. \commentadd{This means that the algorithm is also able to
  discover solutions where a subset of the robots must remain
  stationary for a period of time.}

\section{Analysis}
\label{sec:analysis}

This section examines the properties of \drrtstar\ starting with the
asymptotic convergence of the implicit roadmap $\mmgimp$ to containing
a path in $\cfree$ with optimum cost.  Then, it is shown that
the online search eventually discovers the shortest path in $\mmgimp$. The
combination of these two facts proves the asymptotic optimality of
\drrtstar and \udrrtstar. 

For simplicity, the analysis considers robots operating in Euclidean
space, i.e., $\cspace_i$ is a $d$-dimensional Euclidean hypercube
$[0,1]^d$ for fixed $d \geq 2$. Robots are assumed to have the
same number of degrees of freedom $d$. \commentadd{The results can relate to a large class of systems, which
  are locally Euclidean (see, \cite{dobson2013study}). This is
  applicable to all the systems under consideration in the paper,
  including manipulators, with bounded angular degrees of
  freedom. Analysis of systems, which are not locally Euclidean,
  requires additional rigor especially regarding the definition of the
  cost metric. The Discussion section includes a description of a
  possible extension of the presented analysis to non-holonomic
  systems. It is acknowledged that the arguments presented in the
  current section will not readily transfer to such systems.}

\subsection{Optimal Convergence of $\mmgimp$}
In this section we prove that when the connection radius $r(n)$
\commentadd{\footnote{In the graphs considered here, an edge exists
    between two nodes, if the nodes are separated by a distance less
    than the connection radius $r(n)$.}} used for the construction of
the single-robot \prm\ roadmaps $\graph_1,\ldots,\graph_R$ is chosen
in a certain manner, this yields a tensor-product graph $\mmgimp$,
which \commentadd{contains} asymptotically optimal paths for MMP.

\begin{definition}
  A trajectory $\Sigma:[0,1]\rightarrow \cfree$ is \emph{robust} if
  there exists a fixed $\delta>0$ such that for every
  $\tau\in [0,1],X\in \cinv$ it holds that
  $\|\Sigma(\tau)-X\|\geq \delta$, where $\|\cdot\|$ denotes the
  standard Euclidean distance.
\end{definition}

\begin{definition}
  Let $\cost$ \commentadd{be one} of the cost functions defined in
  Section~\ref{sec:setup}.  A value $c>0$ is \emph{robust} if for
  every fixed $\epsilon>0$, there exists a robust path $\Sigma$, such
  that $\cost(\Sigma)\leq(1+\epsilon)c$. The \emph{robust optimum}
  $c^*$, is the infimum over all such values.
\end{definition}

For any fixed $n\in \mathbb{N}^+$, and a specific instance of
$\mmgimp$ constructed from $R$ roadmaps, having $n$ samples each,
denote by $\Sigma^{(n)}$ the lowest-cost path (with respect to
$\cost(\cdot)$) from $S$ to $T$ over~$\mmgimp$.

\begin{definition}
  $\mmgimp$ is asymptotically optimal (AO) if for every fixed
  $\epsilon > 0$ it holds that
  $\cost(\Sigma^{(n)}) \leq (1+\epsilon)c^*$ \commentadd{asymptotically almost
  surely}\footnote{Let $A_1,A_2\ldots$ be random variables in some
    probability space and let $B$ be an event depending on~$A_n$. $B$
    occurs \emph{asymptotically almost surely} (a.a.s.) if
    $\lim\limits_{n\rightarrow \infty} \Pr[B(A_n)]=1$.}, where the probability
  is over all the instantiations of $\mmgimp$ with $n$ samples for
  each {\tt PRM}.
\end{definition}

Using this definition, the following theorem is proven.  Recall that 
$d$ denotes the dimension of a single-robot configuration space. 
 
\begin{theorem}
$\mmgimp$ is AO when $$r(n)\geq \radstar= \gamma \left( \frac{\log n}{n} \right)^{\frac{1}{d}},$$
where $ \gamma = (1+\eta)2 \left(\frac{1}{d} \right)^{\frac{1}{d}} \left(  \frac{\mu(\cfree)}{\zeta_d} \right)^{\frac{1}{d}}$
where $\eta$ is any constant larger than $0$, $ \mu $ is the volume measure and $ \zeta_d $ is the volume of an unit hyperball in $ \mathbb{R}^d $.
\label{thm:opt_graph}
\end{theorem}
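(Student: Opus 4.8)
The strategy is to certify that $\mmgimp$ contains a near-optimal path by adapting the covering argument behind the asymptotic optimality of $\prmstar$ to the tensor-product setting, exploiting the fact that the radius $\radstar$ in the statement is precisely the single-robot $\prmstar$ connection radius in a $d$-dimensional space, with the threshold constant inflated by the slack factor $1+\eta$. First I would fix a near-optimal certificate: by the definition of the robust optimum, for the given $\epsilon>0$ there exists a robust composite trajectory $\Sigma=(\sigma_1,\ldots,\sigma_R)$ with $\cost(\Sigma)\leq(1+\epsilon/3)c^*$ and a fixed clearance $\delta>0$, i.e.\ $\|\Sigma(\tau)-X\|\geq\delta$ for all $\tau\in[0,1]$ and all $X\in\cinv$. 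The key preparatory step is to reparametrize $\Sigma$ by a common parameter $\tau\in[0,1]$ in which all robots advance simultaneously; this synchronization is what later lets the per-robot approximations be recombined into genuine edges of $\mmgimp$ and is essential for the Euclidean-arc-length cost.

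The core is a per-robot covering-and-hitting argument run against a \emph{shared} discretization. Fix parameters $0=\tau_0<\tau_1<\cdots<\tau_{M}=1$ common to all robots so that for every robot the segment $\sigma_i([\tau_m,\tau_{m+1}])$ is short relative to $\rad$; this forces $M=\Theta(1/\rad)$ waypoints. Around each waypoint $\sigma_i(\tau_m)$ I place a ball $B_{i,m}\subset\cfree_i$ of radius proportional to $\rad$, sized so that (i) any two samples falling in consecutive balls are within distance $\rad$, hence joined by an edge of $\graph_i$, and (ii) the probability that $B_{i,m}$ contains no $\prm$ sample is at most $(1-\mu(B_{i,m})/\mu(\cfree_i))^{n}\leq\exp(-n\zeta_d(\Theta(\rad))^{d}/\mu(\cfree_i))$. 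A union bound over all $\Theta(R/\rad)$ balls bounds the total failure probability by $\Theta(R/\rad)\cdot n^{-\alpha}$, and the stated constant $\gamma$ is exactly what forces the exponent $\alpha$ above $1/d$, so this quantity beats the polynomial factor $1/\rad=\Theta((n/\log n)^{1/d})$ and tends to $0$; note that the constant multiplier $R$ is harmless and does not shift the threshold $\radstar$. Thus a.a.s.\ every ball is hit, giving for each robot a vertex sequence $x_{i,0},\ldots,x_{i,M}\in\nodes_i$ with $(x_{i,m},x_{i,m+1})\in\edges_i$ (or $x_{i,m}=x_{i,m+1}$) and each $x_{i,m}$ within $\Theta(\rad)$ of $\sigma_i(\tau_m)$.

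Because the discretizations share the parameters $\tau_m$, the composite vertices $V_m=(x_{1,m},\ldots,x_{R,m})$ satisfy $(V_m,V_{m+1})\in\mmedges$: every robot either traverses a single roadmap edge or, using the self-loop convention noted after the definition of $\mmedges$, remains motionless. Collision-freeness of this path is where the clearance is spent. Each composite waypoint lies within $\sqrt{R}\,\Theta(\rad)$ of $\Sigma(\tau_m)$, and each straight composite edge stays within $\sqrt{R}\,\Theta(\rad)$ of the corresponding piece of $\Sigma$; since $\Sigma$ keeps distance $\delta$ from all of $\cinv$ (which encodes both robot--obstacle and robot--robot collisions) and $\rad\to0$, for $n$ large enough the whole discretized path keeps distance at least $\delta/2$ from $\cinv$ and hence lies in $\cfree$. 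The cost bound then reduces, segment by segment, to the single-robot $\prmstar$ approximation estimate: the synchronized chords $\|V_{m+1}-V_m\|$ track $\|\Sigma(\tau_{m+1})-\Sigma(\tau_m)\|$ closely enough that the total approximating cost is at most $(1+\epsilon/3)\cost(\Sigma)$, whence $\cost(\Sigma^{(n)})\leq(1+\epsilon)c^*$ for each of the three cost functions — the per-robot sum, the per-robot maximum, and, thanks to the synchronized parametrization, the composite Euclidean length.

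I expect the main obstacle to be the interaction between synchronization and the cost bound rather than the hitting estimate, which is a routine union bound once $\gamma$ is fixed. Keeping the robots' waypoints time-aligned is needed so that each composite step is a single tensor-product edge and so that the composite Euclidean arc length is not inflated — decoupling the motions by moving one robot at a time would replace a composite chord by a longer axis-aligned detour. Yet the same alignment must be reconciled with the requirement that every per-robot segment be short enough to form one edge of $\graph_i$, and the covering must be argued uniformly over all $R$ robots at once. Verifying that the straight composite edges, and not merely the waypoints, stay within the clearance tube of $\Sigma$ — so that robot--robot collisions are genuinely excluded along entire simultaneous motions — is the remaining delicate point.
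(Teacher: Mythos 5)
Your overall architecture differs from the paper's in a way worth noting before the main issue. The paper does not redo the covering argument: it imports the single-robot approximation wholesale from \cite[Theorem~4.1]{Pavone:2015fmt} (restated as Lemma~\ref{lem:prm}) to obtain, for each robot \emph{independently}, a roadmap path $\sigma_i^{(n)}$ with $\|\sigma_i^{(n)}\|\leq(1+o(1))\|\sigma_i\|$, and then spends its effort on (a) Claim~\ref{claim:robust}, showing that composite clearance transfers to each robot against the parameterized forbidden set $\cinv_i(\tau)$, and (b) Lemma~\ref{lem:coordination}, an a posteriori retiming of the independently obtained paths via global timestamps so that no robot--robot collision occurs, plus a separate partition argument for the composite Euclidean arc length. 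Your synchronized construction collapses (b) into the construction itself --- the composite waypoints form genuine edges of $\mmgimp$ and the whole discrete path stays in a $\delta/2$-tube around $\Sigma$, which simultaneously disposes of robot--robot collisions and of the Euclidean arc-length case. That part is a genuinely cleaner treatment of the multi-robot-specific content of the theorem.

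The gap is in the cost bound, which you dismiss as routine but which does not follow from the single-scale hitting argument you describe. Write the ball radius as $\theta\,\radstar$ and the waypoint spacing as $\beta\,\radstar$. Making all $\Theta(1/\radstar)$ balls nonempty a.a.s.\ under a union bound forces $\theta^d(1+\eta)^d 2^d>1$, i.e.\ $\theta>\frac{1}{2(1+\eta)}$, while the requirement that consecutive samples be joined by a single roadmap edge forces $2\theta+\beta\leq 1$, hence $\beta\leq\eta/(1+\eta)$. The triangle-inequality cost of the resulting zigzag is then at best $\|\sigma_i\|(1+2\theta/\beta)$ with $2\theta/\beta\geq 1/\eta$ --- a constant-factor overshoot bounded away from $1$, not $(1+o(1))\|\sigma_i\|$ and not $(1+\epsilon/3)$ for small $\epsilon$. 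This is precisely the delicate point that the refined analysis behind \cite[Theorem~4.1]{Pavone:2015fmt} is designed to overcome, and it is why the paper cites that theorem rather than re-deriving it. To close your proof you must either invoke such a single-robot result as a black box (and then you are forced back into some version of the paper's retiming step, since the cited per-robot paths are not synchronized), or replace the single-scale covering with an argument that controls the stretch of the roadmap between samples at separation much larger than $\radstar$. As written, your construction establishes only a constant-factor approximation, not asymptotic optimality.
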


Since the method deals with solving the problem of finding a \textit{robust optimum} solution, some $\epsilon>0$ is fixed. 
By definition of the problem, there exists a robust trajectory
$\Sigma:[0,1] \rightarrow \cfree$, and a fixed $\delta>0$, such that
$\cost(\Sigma)\leq (1+\epsilon) c^*$. Additionally for every
$X \in \cinv,\ \tau\in [0,1]$ it holds that
$\|\Sigma(\tau)-X\|\geq \delta$.  

If it can be \commentadd{shown that} $\mmgimp$ contains a trajectory $\Sigma^{(n)}$, such
that\footnote{The small-o notation $o(1)$ indicates a function that
  becomes smaller than any positive constant and thereby
  asymptotically will become negligible. When this relation holds, the
  positive constant corresponds to $\epsilon$.}:
\begin{align}
\hspace{0.5in}  \cost(\Sigma^{(n)})\leq (1+o(1))\cdot \cost(\Sigma) 
 \label{eq:eps_approx}
\end{align}
a.a.s., this would imply that
$\cost(\Sigma^{(n)}) \leq (1+\epsilon)c^*$, proving
Theorem~\ref{thm:opt_graph}.

As a first step, it
will be shown that the robustness of
$\Sigma = (\sigma_1,\ldots,\sigma_R)$ in the composite space implies
robustness in the single-robot setting, i.e., robustness along
$\sigma_i$.

For $\tau \in [0,1]$ define the forbidden space parameterized by
$\tau$ as  
\begin{equation}
\hspace{0.5in} \cinv_i(\tau) = \cinv_i \cup \bigcup_{j=1,
  j \neq i}^R I_i^j( \sigma_j (\tau) ).
\end{equation}

\begin{claimthm}
For every robot $i$, $\tau \in [0,1]$, and $q_i \in \cinv_i(\tau)$, 
$\| \sigma_i(\tau) - q_i \| \geq \delta$, \commentadd{i.e., the robustness of
$\Sigma = (\sigma_1,\ldots,\sigma_R)$ in the composite space implies
robustness over all single-robot paths $\sigma_i$.}
\label{claim:robust}
\end{claimthm}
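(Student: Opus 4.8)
The plan is to reduce the single-robot statement to the composite robustness hypothesis: for each forbidden single-robot configuration I construct an explicit witness in the composite obstacle space $\cinv$ whose distance to $\Sigma(\tau)$ equals the single-robot distance of interest, and then simply invoke the assumed composite lower bound $\delta$. Concretely, I would fix a robot $i$, a parameter $\tau \in [0,1]$, and a forbidden configuration $q_i \in \cinv_i(\tau)$, and then build the composite configuration $X = (x_1,\ldots,x_R)$ that agrees with $\Sigma(\tau)$ in every coordinate except the $i$-th, i.e., $x_i = q_i$ while $x_j = \sigma_j(\tau)$ for all $j \neq i$.

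The crux of the argument is verifying that $X \in \cinv$. By the definition of $\cinv_i(\tau) = \cinv_i \cup \bigcup_{j \neq i} I_i^j(\sigma_j(\tau))$, either $q_i \in \cinv_i$ or $q_i \in I_i^j(\sigma_j(\tau))$ for some $j \neq i$, and I would treat these two cases separately. In the first case $x_i = q_i \notin \cfree_i$, so $X$ violates the first condition defining composite freeness and therefore lies in $\cinv$. In the second case $x_i = q_i$ collides with robot $j$ placed at $x_j = \sigma_j(\tau)$, so $X$ violates the pairwise-collision condition defining $\cfree$, again placing $X \in \cinv$. Thus in both cases the constructed witness is a genuine composite obstacle configuration.

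It remains to relate the composite distance back to the single-robot distance. Since the composite metric is the Euclidean norm on the product space $\cspace = \prod_{i=1}^R \cspace_i$ and $X$ differs from $\Sigma(\tau)$ only in its $i$-th block, the difference vector is supported on that block alone, giving $\|\Sigma(\tau) - X\| = \|\sigma_i(\tau) - q_i\|$. Applying the robustness of $\Sigma$, which guarantees $\|\Sigma(\tau) - X\| \geq \delta$ for every $X \in \cinv$, immediately yields $\|\sigma_i(\tau) - q_i\| \geq \delta$, establishing the claim for the arbitrary $i$, $\tau$, and $q_i$ chosen at the outset.

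The only real subtlety I anticipate is the case analysis establishing $X \in \cinv$: one must be careful that \emph{both} contributions to the forbidden set $\cinv_i(\tau)$ — the self-obstacle part $\cinv_i$ and the robot-robot interaction part $I_i^j(\sigma_j(\tau))$ — translate correctly into violations of the two distinct conditions defining $\cfree$. The metric decomposition that equates $\|\Sigma(\tau)-X\|$ with $\|\sigma_i(\tau)-q_i\|$ is immediate for the product Euclidean norm, but it should be stated explicitly so that the reduction from composite robustness to single-robot robustness is airtight.
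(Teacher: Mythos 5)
Your proposal is correct and follows essentially the same route as the paper: both construct the composite witness that agrees with $\Sigma(\tau)$ in every coordinate except the $i$-th (set to $q_i$), observe that the product-space Euclidean distance then collapses to $\|\sigma_i(\tau)-q_i\|$, and invoke the composite robustness bound. Your explicit case analysis showing the witness lies in $\cinv$ (via $\cinv_i$ versus $I_i^j(\sigma_j(\tau))$) is a point the paper leaves implicit, so if anything your write-up is slightly more complete.
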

\begin{proof}
  Fix a robot $i$, and fix some $\tau \in [0,1]$ and a 
  configuration $q_i \in \cinv_i(\tau)$.  Next, define the
  following composite configuration
$$Q = (\sigma^1(\tau), \dots, q_i, \dots , \sigma^R(\tau)).$$
Note that it differs from $\Sigma(\tau)$ only in the $i$-th robot's
configuration. By the robustness of $\Sigma$ it follows that

\begin{align*}
\delta & \leq \| \Sigma(\tau) - Q \| \\ 
       & = \bigg( \| \sigma_i(\tau) - q_i \|^2 + \sum_{j=1,j \neq i}^R \| \sigma_j(\tau) -
           \sigma_j(\tau) \|^2 \bigg)^{\frac{1}{2}} \\ 
       &\leq \| \sigma_i(\tau) - q_i \|. 
\end{align*}
\end{proof}

The result of Claim \ref{claim:robust} is that the paths
$\sigma_1, \dots, \sigma_R$ are robust in their individual spaces w.r.t the parameterized forbidden space $ \cinv_i(\tau) $. This means that there is
sufficient clearance for the individual robots to not collide with
each other given a fixed location of a single robot.  

Next, a Lemma is
derived using proof techniques from the
literature~\cite[Theorem~4.1]{Pavone:2015fmt}, and it implies every
$\graph_i$ contains a single-robot path $\sigma_i^{(n)}$ that
converges to $\sigma_i$.
\begin{lemma}
For every robot $i$, let $\graph_i$ be constructed with $n$ samples and a
connection radius $r(n)\geq \radstar$. \commentadd{Then it} contains a 
path $\sigma_i^{(n)}$ with the following attributes a.a.s.: 
\begin{itemize}
\item[(i)] $\sigma_i^{(n)}(0) = s_i$, $\sigma_i^{(n)}(1) = t_i$; 
\item[(ii)] $\|\sigma_i^{(n)}\| \leq (1 + o(1)) \|\sigma_i\|$; 
\item[(iii)] $\forall q \in \textup{Im}(\sigma_i^{(n)})$, $\exists 
\tau \in [0,1]$ s.t. $\|q-\sigma_i(\tau)\| \leq \radstar$, where
\textup{Im}$(\cdot)$ \commentadd{is the function image}.
\end{itemize}
\label{lem:prm}
\end{lemma}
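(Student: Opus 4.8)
The plan is to prove Lemma~\ref{lem:prm} as a purely single-robot statement, applying the near-optimal covering argument for \prm-type roadmaps (following the technique of \cite[Theorem~4.1]{Pavone:2015fmt}) to each path $\sigma_i$ in isolation. The one ingredient carried over from Claim~\ref{claim:robust} is that $\sigma_i$ is robust inside $\cfree_i$: since $\cinv_i \subseteq \cinv_i(\tau)$, the claim gives $\|\sigma_i(\tau) - q\| \geq \delta$ for every static-obstacle configuration $q \in \cinv_i$, so the whole image of $\sigma_i$ enjoys a fixed clearance $\delta > 0$ from $\cinv_i$. This clearance is exactly what permits building a collision-free tube around $\sigma_i$ inside which the roadmap will be shown to contain a tracking path.

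First I would discretize $\sigma_i$ by placing way-points $x_0 = s_i, x_1, \ldots, x_M = t_i$ along the curve, spaced in arc length slightly below the connection radius, with $M = \Theta(\|\sigma_i\|/\radstar)$. Around each $x_m$ I would take a ball $B_m$ of radius $\Theta(\radstar) < \delta$; by the clearance above every $B_m \subseteq \cfree_i$, and the spacing is chosen so that any point of $B_m$ and any point of $B_{m+1}$ lie within $r(n) \geq \radstar$ of one another, and so that the straight segment joining them stays inside the $\delta$-tube around $\sigma_i$ and is therefore collision-free. Next I would argue that a.a.s.\ every $B_m$ contains at least one \prm\ sample: for a single ball the probability of being empty is at most $\exp(-n\zeta_d \rho^d/\mu(\cfree_i))$, and the role of the constant $\gamma$ in \radstar\ is precisely to make this bound small enough that, after a union bound over the $M = \Theta(n^{1/d})$ balls, the probability that some ball is empty still tends to $0$. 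Selecting one sample $q_m$ per ball (with $q_0 = s_i$, $q_M = t_i$ added to $\graph_i$ as vertices) and joining consecutive ones yields a path $\sigma_i^{(n)}$ living in $\graph_i$, since consecutive samples are within $r(n)$ and the connecting edges are collision-free by the tube argument; this gives attribute (i). Attribute (iii) is immediate, as each vertex $q_m$ sits within a ball of radius $\Theta(\radstar)$ centered at a point $x_m \in \textup{Im}(\sigma_i)$.

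For the length bound (ii) I would apply the triangle inequality edge by edge, $\|q_{m+1}-q_m\| \leq \|x_{m+1}-x_m\| + \|q_m - x_m\| + \|q_{m+1}-x_{m+1}\|$, and sum: the chord terms add up to at most $\|\sigma_i\|$ (an inscribed polygon is no longer than its arc), while the per-ball deviation terms contribute the overhead. The hard part will be controlling this overhead so that it is $o(1)\cdot\|\sigma_i\|$ (equivalently, below the target factor $1+\epsilon$ of the surrounding analysis): the ball radius must be large enough for the union-bound coverage to succeed, yet small enough relative to the spacing that the accumulated deviation is negligible. It is exactly the specific value of $\gamma$ (through the free parameter $\eta$) that reconciles these two competing demands, and I would handle it by tuning the spacing-to-radius ratio against the coverage exponent as in the cited \prm\ analysis, so that the a.a.s.\ coverage and the $(1+o(1))$ length estimate hold simultaneously.
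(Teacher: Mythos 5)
Your proposal is correct and matches the paper's approach: the paper establishes (i) and (ii) by directly invoking \cite[Theorem~4.1]{Pavone:2015fmt} and obtains (iii) as a corollary, and your ball-covering argument is precisely a reconstruction of that cited theorem's proof, including the correct use of the single-robot clearance $\delta$ inherited from Claim~\ref{claim:robust} and the correct identification of where the constant $\gamma$ and the spacing-to-radius tuning must reconcile the union-bound coverage with the $(1+o(1))$ (i.e., $(1+\epsilon)$ for any fixed $\epsilon>0$) length overhead. The only cosmetic difference is in (iii), where you certify proximity to $\sigma_i$ via the ball centers placed on the curve rather than via the paper's brief remark about vertices and edge lengths, which is if anything the more careful route.
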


\begin{proof}	

\commentadd{The first two properties of Lemma (i) and (ii) restate
  ~\cite[Theorem~4.1]{Pavone:2015fmt}, which is applicable to the
  setup of this work. The last property (iii) is an immediate
  corollary of the first two: due to the fact that $\sigma_i^{(n)}$ is
  obtained from $\graph_i$, every point along the path is either a
  vertex of the graph, or lies on a straight-line path (i.e., an edge)
  between two vertices, whose length is at most $\radstar$.}
\end{proof}

To complete the proof of Theorem~\ref{thm:opt_graph} it remains to be
shown that the combination of $\sigma_1^{(n)},\ldots, \sigma_R^{(n)}$
yields \commentadd{the trajectory}, $ \Sigma^{(n)} $ of a desired cost, i.e., one that conforms to Equation~\ref{eq:eps_approx}. The bound derived in Lemma~\ref{lem:prm} (ii) looks like what we need for proving Theorem~\ref{thm:opt_graph}. Even though a similar bound exists in the individual spaces, it needs to be shown that Equation~\ref{eq:eps_approx} holds for a cost function in $ \cfree $. We proceed to show this
individually for the different cost functions.

\subsubsection{Optimal Convergence for a Linear combination of Euclidean arc lengths}

\begin{lemma}
Given Lemma~\ref{lem:prm} (ii), Equation~\ref{eq:eps_approx} holds for a cost function $ \cost(\cdot) $ that is a linear combination of Euclidean arc lengths
\label{lem:sumofparts}
\end{lemma}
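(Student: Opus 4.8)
The plan is to exploit the fact that a linear combination of per-robot arc lengths decouples across robots, so that the single-robot convergence bound of Lemma~\ref{lem:prm}(ii) can simply be summed. I would write the cost as $\cost(\Sigma)=\sum_{i=1}^R\alpha_i\|\sigma_i\|$ with fixed constants $\alpha_i\geq 0$ (the sum-of-path-lengths cost being the case $\alpha_i=1$), and take $\Sigma^{(n)}=(\sigma_1^{(n)},\ldots,\sigma_R^{(n)})$ to be the tuple of converging single-robot paths that Lemma~\ref{lem:prm} supplies from each $\graph_i$.

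First I would dispatch the probabilistic bookkeeping, which is the only genuine subtlety. Lemma~\ref{lem:prm}(ii) guarantees its bound a.a.s.\ for each robot in isolation, so I would apply a union bound over the $R$ robots: writing $B_i$ for the event $\|\sigma_i^{(n)}\|\leq(1+o(1))\|\sigma_i\|$, one has $\Pr[\bigcap_{i=1}^R B_i]\geq 1-\sum_{i=1}^R(1-\Pr[B_i])\to 1$, since $R$ is fixed and finite. On this event all $R$ bounds hold simultaneously, and I would consolidate the (possibly distinct) error functions into a single $o(1):=\max_i o_i(1)$, which is still $o(1)$ as the maximum of finitely many vanishing functions.

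The main estimate then follows immediately by linearity:
\begin{align*}
\cost(\Sigma^{(n)}) &= \sum_{i=1}^R\alpha_i\,\|\sigma_i^{(n)}\|
   \leq \sum_{i=1}^R\alpha_i\,(1+o(1))\,\|\sigma_i\| \\
   &= (1+o(1))\sum_{i=1}^R\alpha_i\,\|\sigma_i\|
   = (1+o(1))\,\cost(\Sigma),
\end{align*}
which is exactly Equation~\ref{eq:eps_approx}. I do not expect a hard obstacle in this case: the additive, decoupled structure of the cost is precisely what makes the linear combination immediate, in contrast to the $\max$ cost (handled separately), where the argument must also control which robot attains the maximum after the paths are perturbed, and the composite Euclidean arc length $\|\Sigma\|$, which does not split into a sum over robots. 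The one point that genuinely requires the finiteness of $R$ is pulling the common $(1+o(1))$ factor outside the sum: both the union bound and the consolidation of the $R$ error terms into a single $o(1)$ rely on there being only finitely many robots, and I would make this dependence explicit so the bound cannot silently degrade as $R$ grows.
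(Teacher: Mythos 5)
Your proposal is correct and follows essentially the same route as the paper: take $\Sigma^{(n)}$ to be the tuple of single-robot paths from Lemma~\ref{lem:prm} and pull the common $(1+o(1))$ factor out of the linear combination. The only difference is that you make explicit the union bound over the $R$ a.a.s.\ events and the consolidation of the $R$ error terms into a single $o(1)$, details the paper leaves implicit; this is a welcome tightening but not a different argument.
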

\begin{proof}
Here consider the case that $\cost(\Sigma)= \sum_{i=1}^R \|\sigma_i\|$, which can also be easily modified for $\max_{i=1:R} \|\sigma_i\|$ or some arbitrary linear combination of the arc lengths. 

In particular, define
$\Sigma^{(n)}=(\sigma_1^{(n)},\ldots, \sigma_R^{(n)})$, where
$\sigma_i^{(n)}$ are obtained from Lemma~\ref{lem:prm}. Then
\begin{align*}
\cost(\Sigma^{(n)})=\sum_{i=1}^R\|\sigma_i^{(n)}\|&\leq (1 +
o(1))\sum_{i=1}^R\|\sigma_i\|\\
&\leq (1+o(1))\cost(\Sigma).
\end{align*}

\end{proof}

\begin{lemma}
A path $ \Sigma^{(n)} = (\sigma_1^{(n)}\ldots\sigma_R^{(n)}) $ exists, that satisfies the properties of Lemma~\ref{lem:prm}, and is collision free both in terms of robot-obstacle and robot-robot.
\label{lem:coordination}
\end{lemma}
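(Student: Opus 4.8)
The plan is to build $\Sigma^{(n)}$ by running the single-robot paths $\sigma_1^{(n)},\ldots,\sigma_R^{(n)}$ of Lemma~\ref{lem:prm} simultaneously and to certify that the resulting composite curve never enters $\cinv$. Robot--obstacle collisions require no separate work: each $\sigma_i^{(n)}$ lives in $\graph_i\subset\cfree_i$, so by the definition of $\mmgimp$ every composite edge is already free of robot--obstacle collisions. Hence the whole task reduces to excluding robot--robot collisions, which I would do with a tube (clearance) argument anchored at the reference path $\Sigma$: by robustness, $\Sigma$ keeps distance at least $\delta$ from every configuration in $\cinv$, and $\cinv$ contains all pairwise-collision configurations, so it suffices to show that $\Sigma^{(n)}$ stays within composite distance $<\delta$ of $\Sigma$ at all times. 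Since $\radstar=\gamma(\log n/n)^{1/d}\to 0$, the per-robot tracking error shrinks, so the remaining difficulty is purely one of synchronization.

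First I would fix, once and for all, a common partition $0=\tau_0<\tau_1<\cdots<\tau_M=1$ refined enough that $\|\Sigma(\tau)-\Sigma(\tau_k)\|\le\delta/4$ for every $\tau\in[\tau_k,\tau_{k+1}]$; this is possible because $\Sigma$ is uniformly continuous on the compact interval $[0,1]$. I would then invoke the construction underlying Lemma~\ref{lem:prm} (the waypoint chain of \cite[Theorem~4.1]{Pavone:2015fmt}) to obtain, a.a.s., for each robot a sequence of roadmap vertices $v^i_0,\ldots,v^i_M\in\graph_i$ traversed by $\sigma_i^{(n)}$ with $v^i_0=s_i$, $v^i_M=t_i$, and $\|v^i_k-\sigma_i(\tau_k)\|\le\radstar$. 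Because the tensor-product roadmap admits edges along which a robot stays put, these per-robot chains can be aligned stage by stage into composite vertices $V_k=(v^1_k,\ldots,v^R_k)$ joined by composite edges $(V_k,V_{k+1})\in\mmedges$, yielding a genuine path $\Sigma^{(n)}$ in $\mmgimp$ from $S$ to $T$.

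The core estimate is then to bound, for each stage, the composite distance from an interior point of the edge $(V_k,V_{k+1})$ to the reference curve. Writing a point of that edge as $(1-s)V_k+sV_{k+1}$ and comparing it to $(1-s)\Sigma(\tau_k)+s\Sigma(\tau_{k+1})$, the triangle inequality gives a bound of
$$
(1-s)\|V_k-\Sigma(\tau_k)\|+s\|V_{k+1}-\Sigma(\tau_{k+1})\|+\big\|(1-s)\Sigma(\tau_k)+s\Sigma(\tau_{k+1})-\Sigma(\tau)\big\|,
$$
where I take $\tau=(1-s)\tau_k+s\tau_{k+1}$. The first two terms are each at most $\sqrt{R}\,\radstar$, since $\|V_k-\Sigma(\tau_k)\|^2=\sum_i\|v^i_k-\sigma_i(\tau_k)\|^2\le R(\radstar)^2$, and the last term is at most $\delta/2$ by convexity and the partition-fineness choice. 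For $n$ large enough that $\sqrt{R}\,\radstar<\delta/4$, the total is $<\delta$, so every point of $\Sigma^{(n)}$ lies within $\delta$ of $\Sigma$ and therefore outside $\cinv$; this rules out robot--robot collisions along both vertices and edges.

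The hard part, and where I expect the argument to need the most care, is exactly this synchronization: the spatial tube property Lemma~\ref{lem:prm}(iii) by itself only says each $\sigma_i^{(n)}$ stays near the \emph{image} of $\sigma_i$, which does not forbid two robots from entering a mutual-collision region at the same composite time through temporal misalignment. Reusing the shared waypoint partition $\{\tau_k\}$ from the underlying convergence construction is what supplies the missing temporal alignment; once that is in place, the clearance bound is routine. Finally, the endpoint and arc-length properties of $\Sigma^{(n)}$ demanded by Lemma~\ref{lem:prm} are inherited coordinate-wise from the individual $\sigma_i^{(n)}$, completing the construction.
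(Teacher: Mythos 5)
Your construction is correct in its essentials, but it takes a genuinely different route from the paper. The paper does not rebuild the composite path on a common grid: it takes the paths $\sigma_i^{(n)}$ from Lemma~\ref{lem:prm} as given, assigns to each traversed vertex $v_i^j$ of $\graph_i$ the timestamp $\tau_i^j=\argmin_{\tau}\|v_i^j-\sigma_i(\tau)\|$ of its nearest point on the reference path $\sigma_i$, merges all these timestamps into one ordered global list, and lets each robot hold its latest vertex until its next timestamp comes up. Collision-freeness is then argued per robot, by combining Lemma~\ref{lem:prm}(iii) with Claim~\ref{claim:robust} (clearance $\delta$ from the parameterized forbidden space $\cinv_i(\tau)$, which already encodes robot--robot collisions against $\sigma_j(\tau)$). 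You instead fix a common parameter partition up front, place one waypoint per robot at each $\tau_k$, and prove containment of the whole composite curve in the $\delta$-tube around $\Sigma$ directly in the composite space, with the explicit bound $\sqrt{R}\,\radstar+\delta/2<\delta$. What your version buys is quantitative transparency and independence from Claim~\ref{claim:robust}; what it costs is modularity: you cannot cite Lemma~\ref{lem:prm} as a black box, because property (iii) only guarantees proximity to the \emph{image} of $\sigma_i$, not the existence of a roadmap vertex within $\radstar$ of $\sigma_i(\tau_k)$ for each prescribed $\tau_k$. You correctly flag that you must reopen the covering argument of \cite[Theorem~4.1]{Pavone:2015fmt}; to make that step airtight you also need the partition spacing $\Delta\tau$ chosen so that $\max_i\|\sigma_i(\tau_{k+1})-\sigma_i(\tau_k)\|$ is small enough (relative to $\radstar$ and the sample-ball radii) that consecutive per-robot waypoints are joined by single edges of $\graph_i$ --- i.e., the grid must be adapted to the fastest robot --- and you should re-verify the arc-length bound of Lemma~\ref{lem:prm}(ii) for the re-gridded chains rather than asserting it is inherited, since your chain is not literally the one produced by that lemma. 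With those two points patched, your tube argument is a sound and arguably more explicit substitute for the paper's reparametrization, whose own synchronization step is stated rather than proved.
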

\begin{proof}
\commentadd{Every constituent roadmap $\graph_i$ of $n$ samples is
  constructed to satisfy Lemma~\ref{lem:prm} and contains individual
  robot paths $\sigma_i^{(n)}$.  $\mmgimp$ defines the tensor-product
  graph in the composite configuration space $\cfull$. The path $
  \Sigma^{(n)}$ is a combination of the individual robot paths
  $\sigma_i^{(n)}$.  Lemma~\ref{lem:prm} implies that $\mmgimp$
  contains a path $ \Sigma^{(n)}$ in $\cfull$, that represents
  collision-free motions relative to obstacles, and minimizes the cost
  function.}  Nevertheless, it is not clear whether this ensures the
existence of a path where robot-robot collisions are avoided.  That
is, although $\textup{Im}(\sigma^{(n)}_i)\subset \cfree_i$, it might
be the case that $\textup{Im}(\Sigma^{(n)})\cap \cinv \neq \emptyset$.
Next, it is shown that $\sigma_1^{(n)},\ldots, \sigma_R^{(n)}$ can be
reparametrized to induce a composite-space path whose image is fully
contained in $\cfree$, with length equivalent to $\Sigma^{(n)}$.

For each robot $i$, denote by $V_i=(v_i^1,\ldots,v_i^{\ell_i})$ the
chain of $\graph_i$ vertices traversed by $\sigma^{(n)}_i$. For every
$v_i^j\in V_i$ assign a timestamp $\tau_i^j$ of the closest
configuration along \commentadd{$\sigma_i$}, i.e., 
$$\tau_i^j=\argmin_{\tau\in [0,1]}\|v_i^j-\sigma_i(\tau)\|.$$
Also, define $\T_i=(\tau_i^1,\ldots,\tau_i^{\ell_i})$ and denote by
$\T$ the ordered list of $\bigcup_{i=1}^R\T_i$, according to the
timestamp values. Now, for every $i$, define a global timestamp
function $T\!S_i:\T \rightarrow V_i$, which assigns to each global 
timestamp in $\T$ a single-robot configuration from $V_i$.  It thus
specifies in which vertex robot $i$ resides at time $\tau \in \T$.
For $\tau\in \T$, let $j$ be the largest index, such that
$\tau_i^j \leq \tau$. Then simply assign $T\!S_i(\tau)= \tau_i^j$. 
From property (iii) in Lemma~\ref{lem:prm} and 
Claim~\ref{claim:robust} it follows that no robot-robot collisions are 
induced by the reparametrization. This concludes the proof of
Theorem~\ref{thm:opt_graph}.
\end{proof}

\subsubsection{Optimal Convergence for Euclidean arc length}
Arguments for convergence of the cost of the solution in terms of the Euclidean arc length of the composite path $ \Sigma $ can be made to extend the results of Lemma~\ref{lem:sumofparts}. A robot having $ d $ \dof s, $ (F_1, \dots F_d) $ exists in an $\mathbb{R}^d$ space. The motion of the robot constitutes a curve in $\mathbb{R}^d$, defined as 
$$ \Sigma = (f_1(t), \dots f_d(t)),$$ 
where $f_i(t)$ is the coordinate function of the curve $\Sigma$ along the \dof\ $F_i$, where $t\in[0,1]$ and $i \in [1,d]$.
The \commentadd{\textit{Euclidean} arc-length of the path} in
$\mathbb{R}^d$:
\begin{equation}\label{eq:metric_definition}
\|\Sigma\| = \int { \sqrt{(f'_1(t)^2 + \dots + f'_d(t)^2)}}\ dt.
\end{equation}

 The coordinate function is assumed to be continuous with respect to
 the Lebesgue measure on $t$ and of bounded variation. \commentadd{The
 Lebesgue measure assigns a measure to subsets of $n$-dimensional
 Euclidean space. For $n = 1, 2,$ or $3$, it coincides with the
 standard measure of length, area, or volume. The assumption here is
 that the curve is Lebesgue integrable over the coordinate function.
 This relates to the variation of the curve being smooth for the
 parametrization, over subsets of the curve that correspond to the
 Lebesgue measure over $t$.} $ \Sigma $ is also a rectifiable curve,
 i.e., the curve has finite length.

\begin{figure}[!t]
\centering \includegraphics[width=0.4\textwidth]{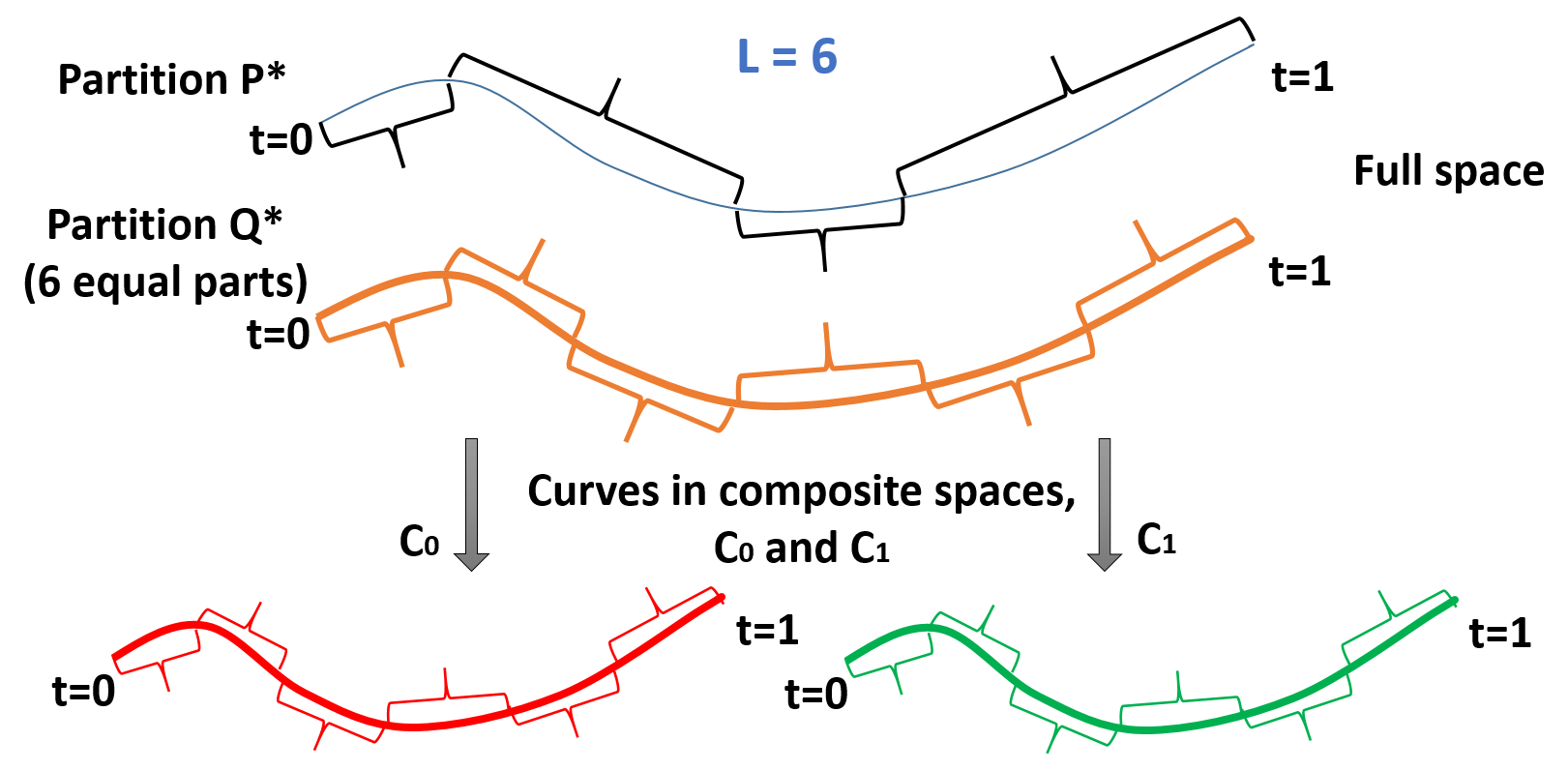}
\caption{Partitions of the curves in the full space and the
	composite \commentadd{spaces (bottom)}. $ P^* $ is the
	supremum partition but the parts do not have equal measures. $
	P^* $ is subdivided into a finer partition $ Q $ with $ L=6 $
	equal measure parts. \commentadd{Let the composite curves
	exist in individual robot configuration spaces $ \cspace_0 $
	and $ \cspace_1 $, respectively. These curves} also have a
	finer partition with $ 6 $ equal parts.}
\label{fig:euclidean_partitions}
\end{figure}
\begin{definition}
	
The partial arc length is defined as, 
$$ s(x) = \|\Sigma:[0,x]\|, x\leq1.$$
By definition and assuming smoothness and bounded variation \citep{pelling77}, for $ t=0
$ to $ t=x \leq 1:$
\begin{equation}\label{euclidean}
s(x) = \sup_{P} \sum_{j=1}^{m} \sqrt{\bigg( \sum_{i=1}^{d} ( f_i(t_j) - f_i(t_{j-1} ) )^2 \bigg)}.
\end{equation}
The value is the supremum over all possible finite partitions $ P:0=t_0<\dots<t_m=x $, that can divide $ t $. This generates a finite set of $ m $ parts. 
\end{definition}

We denote the value of $ s(x) $ for some partition $ P $ as:
\begin{equation}\label{partition}
s(x)_P = \sum_{j=1}^{m} \sqrt{ \sum_{i=1}^{d} ( f_i(P(j)) - f_i(P({j-1}) ) )^2 }.
\end{equation}

 A \textit{part} shall refer to the curve between $ \Sigma(P(j-1)) $ and $ \Sigma(P(j)) $. The measure of the part would be 
\begin{equation}\label{measure}
\mu_{j-1,j}^P = s(P(j)) - s(P(j-1)).
\end{equation}

Let $ P^* $ be the \textit{finite} supremum partitioning over $ t $,
that has $ m $ parts. This means, $ s(x) = s(x)_{P^*} $. 
Without loss of generality, let us assume that $ P^* $ corresponds to
the supremum partitioning that has the least number of parts, $
|P^*|=m+1 $, \commentadd{i.e.,} there are no degenerate partitions. A
finer partition can introduce additional parameterization over $ t
$, \commentadd{and hence is a superset of $ P^* $,} but cannot
increase the value of $ s(x) $ since $ P^* $ is the
supremum. \commentadd{Note that a partition sequence with $m$ parts
will have a cardinality of $m+1$. The finer partition has all these
$m+1$ parametrization values in addition to others.}. Since $ s(x) $
is finite and $ m $ is finite, $ \exists P^* \
| \ \mu_{j-1,j}^{P^*} \in \mathbb{R}_{+} \ \forall \ j \in t $.

\begin{claimthm}
Given a finite set of paths $ \xi $, there exists a finer partitioning, $ Q^* $ for $ P^* $ over each $ \Sigma\in\xi $, which yields $ L $ number of equal measure parts (Figure \ref{fig:euclidean_partitions}) for every $ \Sigma\in\xi $.
\label{claim:partition}
\end{claimthm}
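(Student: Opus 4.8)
The plan is to reduce the claim to a single curve and then handle the whole family $\xi$ at once using its finiteness. For a fixed $\Sigma\in\xi$, the central tool is the partial arc length $s(\cdot)$ of Equation~(\ref{euclidean}): since $\Sigma$ is rectifiable and of bounded variation, $s$ is continuous and non-decreasing on $[0,1]$ with $s(0)=0$ and $s(1)=\|\Sigma\|$, and it is strictly increasing across the parts induced by $P^*$ (each such part carries positive measure $\mu_{j-1,j}^{P^*}$). Consequently $s$, restricted to the complement of any stationary sub-arc, is a continuous strictly increasing surjection onto $[0,\|\Sigma\|]$, so the intermediate value theorem supplies, for any $L\in\mathbb{N}^+$, parameter values $0=u_0<u_1<\cdots<u_L=1$ with $s(u_k)=\tfrac{k}{L}\|\Sigma\|$. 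These $u_k$ cut $\Sigma$ into $L$ arcs of identical measure $\|\Sigma\|/L$, which establishes the equal-measure part of the statement for a single curve.

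First I would arrange that this equal-measure partition is a genuine refinement of $P^*$, i.e.\ $Q^*\supseteq P^*$. It suffices to subdivide each supremum part separately: the $j$-th part of measure $\mu_{j-1,j}^{P^*}$ is cut into $k_j$ sub-arcs of equal measure $\mu_{j-1,j}^{P^*}/k_j$ by the same intermediate-value argument applied to $s$ on that part, and the breakpoints of $P^*$ are retained by construction. Requiring a common sub-arc measure $\ell$ across all parts forces $k_j=\mu_{j-1,j}^{P^*}/\ell$ and $L=\sum_j k_j=\|\Sigma\|/\ell$; taking $\ell=\|\Sigma\|/L$ then yields exactly $L$ equal parts refining $P^*$. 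Because $\xi$ is finite, I would finally select a single $L$ that works simultaneously for every $\Sigma\in\xi$ — the key payoff, since the downstream comparison of the composite Euclidean arc length against the constituent curves (Figure~\ref{fig:euclidean_partitions}) requires one common part count.

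The step I expect to be the genuine obstacle is this last one: making the parts \emph{exactly} equal while \emph{simultaneously} retaining all the $P^*$ breakpoints of every curve. In arc-length coordinates the breakpoints of $\Sigma$ sit at the cumulative sums $s(t^*_j)$, and for these to land on a uniform grid $\{k\|\Sigma\|/L\}$ one needs each ratio $\mu_{j-1,j}^{P^*}/\|\Sigma\|$ to be rational with denominator dividing $L$; across a finite family this amounts to the mutual commensurability of all the part measures, which is not automatic. I would resolve this either by taking $L$ as a common multiple of the (normalized) denominators when the breakpoints are rational, or — in the general rectifiable case — by permitting the sub-arc measures to deviate from $\ell$ by a vanishing amount: setting $k_j=\lceil \mu_{j-1,j}^{P^*}/\ell\rceil$ and letting $\ell\to 0$ (equivalently $L\to\infty$) drives the ratio of the largest to the smallest part to $1$, so the refinement becomes equal-measure in the limit. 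Since the surrounding convergence argument only needs its bounds up to a $(1+o(1))$ factor, this asymptotically-equal refinement is precisely what the subsequent lemmas on Euclidean arc length consume, with the exact-equality version recovered whenever the part measures are commensurable.
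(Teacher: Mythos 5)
Your proposal is essentially a rigorous reconstruction of a step that the paper itself does not actually prove: the paper's ``proof'' of this claim consists of a display that simply asserts $\exists L$ such that $Q^*\supseteq P^*$, $|Q^*|=L+1$, and all parts have measure $\|\Sigma\|/L$, followed by the sentence ``this holds true for every $\Sigma$.'' No construction, no intermediate value theorem, and no discussion of whether such an $L$ can exist. Your IVT argument on the partial arc-length function correctly establishes the equal-measure cut points for a single curve, and — more importantly — you have put your finger on the genuine obstruction that the paper silently steps over: a partition that is simultaneously a refinement of $P^*$ and exactly equal-measure forces every ratio $\mu_{j-1,j}^{P^*}/\|\Sigma\|$ to be rational with denominator dividing $L$, and demanding one common $L$ across all of $\xi$ compounds this into a mutual commensurability condition that fails for generic part lengths. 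As literally stated, the claim is therefore false in general, and your proposal is the more honest treatment. Your asymptotic repair (take $k_j=\lceil \mu_{j-1,j}^{P^*}/\ell\rceil$ and let $\ell\to 0$, so the max-to-min part ratio tends to $1$) is also the right fix for the way the claim is consumed downstream, since Lemma~\ref{lem:convergence} only needs its inequalities up to a $(1+o(1))$ factor.

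One loose end in your write-up: with the ceiling construction the total part count $L_\Sigma=\sum_j k_j$ generally differs from curve to curve, so ``finiteness of $\xi$'' alone does not hand you a single common $L$; you would need to either pad each curve's partition up to $\max_\Sigma L_\Sigma$ with degenerate or slightly unequal parts (again absorbing the discrepancy into the $(1+o(1))$ factor), or fix a target $L$ and accept $L(1+o(1))$ parts per curve. This is a small amount of extra bookkeeping, not a conceptual obstacle, but it is worth stating explicitly since the recombination in Claim~\ref{claim:recombination} and the telescoping sum in Lemma~\ref{lem:convergence} index all curves by the same $l\in\{1,\ldots,L\}$.
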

\begin{proof}
\begin{align*}
\exists L \ \ \text{s.t.} \
\ s(1)_{Q^*} = s(1)_{P^{*}} = \|\Sigma\|, \\ 
Q^* \supseteq P^*,\ |Q^*| = L+1,\\ 
\mu_{j-1,j} = \mu_{k-1,k} = \frac{\|\Sigma\|}{L} \in \mathbb{R}_{+} \ 
\ \forall \ j,k \in [1,\ldots L].
\end{align*}
This holds true for every $ \Sigma $ for a corresponding $ Q^* \supseteq P^* $.
The measure of every \textit{part} $ \Sigma(l) $ is equal, and is denoted by
$ \| \Sigma(l) \|
= \frac{\|\Sigma \|}{L} $. This simplifies Equation~\ref{partition} to $ \| \Sigma \| = \sum_{l=1}^{L} \| \Sigma(l) \|$.

\end{proof}

\begin{claimthm}
\label{claim:recombination}
Additionally, by this simplification, Equation~\ref{partition} in the composite space is restated for \commentadd{$ \Sigma_{Rd}=\{\Sigma_1\ldots\Sigma_R\} $} where $ \Sigma_{Rd}, \Sigma_1\ldots\Sigma_R\in\xi $.
\end{claimthm}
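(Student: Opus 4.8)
The plan is to rewrite the composite Euclidean arc length $\|\Sigma_{Rd}\|$ entirely in terms of the per-part displacements of the constituent curves $\Sigma_1,\ldots,\Sigma_R$, using the fact that the metric on the composite space $\mathbb{R}^{Rd}=\prod_{r=1}^{R}\mathbb{R}^d$ is the product (Pythagorean) metric. First I would apply Claim~\ref{claim:partition} to the finite family $\xi$, which furnishes a common part-count $L$ and, for the composite curve viewed as a single rectifiable curve in $\mathbb{R}^{Rd}$, an equal-measure refinement $Q^*\supseteq P^*$, so that the simplification of Claim~\ref{claim:partition} reads $\|\Sigma_{Rd}\|=\sum_{l=1}^{L}\|\Sigma_{Rd}(l)\|$ with every part of measure $\|\Sigma_{Rd}\|/L$.

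Next I would evaluate one composite part through Equation~\ref{partition}. Because the $Rd$ coordinate functions of $\Sigma_{Rd}$ are precisely the concatenation of the $d$ coordinate functions of each $\Sigma_r$, the sum of squared coordinate differences across a part separates by robot, yielding for each part $l$ the identity $\|\Sigma_{Rd}(l)\|=\sqrt{\sum_{r=1}^{R}\|\Sigma_r(l)\|^2}$. Summing over the parts then gives the restated form of Equation~\ref{partition} in the composite space, $$\|\Sigma_{Rd}\|=\sum_{l=1}^{L}\sqrt{\sum_{r=1}^{R}\|\Sigma_r(l)\|^2},$$ which is exactly the shape needed downstream: the common $(1+o(1))$ factor supplied by Lemma~\ref{lem:prm}(ii) can be pulled out of each square root, turning a per-robot bound into the composite bound of Equation~\ref{eq:eps_approx}.

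The hard part will be justifying the per-part identity for \emph{arc} lengths rather than for chords. The product-metric relation holds immediately for straight-line displacements, but arc lengths combine this way only on a part where every robot moves linearly at the same time, so that chord and arc coincide on each factor as well as on the product. This is where I would lean on the piecewise-linear structure of roadmap paths: since each edge of the composite tensor-product polyline advances every robot along at most one constituent edge, the composite supremum partition $P^*$ lies at the union of the individual polyline corners and hence already refines each $\Sigma_r$; the equal-measure refinement $Q^*\supseteq P^*$ only subdivides those straight segments further, so on every resulting part each robot — and therefore the composite point — travels a single line at constant speed, making chord equal arc on each coordinate block. The remaining obstacle is purely bookkeeping: confirming that the composite equal-measure partition of Claim~\ref{claim:partition} stays compatible with straightness on all $R$ factors, after which the decomposition is a one-line application of the product metric to Equation~\ref{partition}.
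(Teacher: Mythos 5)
Your proposal matches the paper's proof of this claim: both expand $s(1)_{Q^*}$ as a sum over the $L$ parts of the square root of the $Rd$ squared coordinate differences, regroup those into $R$ blocks of $d$ coordinates, and identify each block with $\|\Sigma_r(l)\|^2$ to obtain $\|\Sigma_{Rd}\|=\sum_{l=1}^{L}\sqrt{\sum_{r=1}^{R}\|\Sigma_r(l)\|^2}$. The only difference is that you explicitly justify the chord-equals-arc identification on each part (via the piecewise-linear structure of roadmap paths and the fact that $Q^*$ refines the finite supremum partition $P^*$), a step the paper's derivation asserts without comment.
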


\begin{proof}
\commentadd{In the multi-robot space Euclidean space $\mathbb{R}^{Rd}$, the arc length in the composite space can be expressed in terms of the arc lengths traversed in the individual robot spaces.}
\begin{align*}
\| \Sigma_{Rd} \| &= s(1)_{Q^*} \\
&= \sum_{l=1}^{L} \sqrt{ \sum_{i=1}^{Rd} ( f_i(Q^*(j)) - f_i(Q^*({j-1}) ) )^2 } \\
&= \sum_{l=1}^{L} \sqrt{ \sum_{i=1}^{d} ( \delta f_i )^2 + \ldots + \sum_{i=(R-1)d+1}^{Rd} ( \delta f_i )^2} \\
&= \sum_{l=1}^{L} \sqrt{ \| \Sigma_1(l) \|^2 + \ldots + \| \Sigma_R(l) \|^2} \\
&= \sum_{l=1}^{L} \sqrt{ \sum_{i=1}^{R} \| \Sigma_i(l) \|^2 },
\end{align*}
where, with a slight abuse of notation $ \delta f_i $ is a shorthand representation for some $ f_i(Q^*(j)) - f_i(Q^*({j-1}) ) $.

\end{proof}

\begin{lemma}\label{lem:convergence}
	For a $\Sigma^{(n)} = (\sigma_1^{(n)}\ldots \sigma_R^{(n)})$, where
	$\sigma_i^{(n)}$ is obtained from Lemma~\ref{lem:prm}, given that $\|\sigma_i^{(n)}\| \leq (1 + o(1)) \|\sigma_i\|$, Equation~\ref{eq:eps_approx} holds for the Euclidean arc lengths.	
\end{lemma}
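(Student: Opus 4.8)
The plan is to reduce the composite Euclidean arc length to a Euclidean combination of the individual robots' arc lengths, and then apply the per-robot length bound of Lemma~\ref{lem:prm}(ii) inside that combination. First I would collect the finite family of curves $\xi=\{\Sigma,\Sigma^{(n)},\sigma_1,\ldots,\sigma_R,\sigma_1^{(n)},\ldots,\sigma_R^{(n)}\}$ and apply Claim~\ref{claim:partition} to obtain a single number of parts $L$ such that each curve in $\xi$ admits a refinement $Q^*\supseteq P^*$ of its supremum partition into $L$ parts of equal measure. This common-$L$ equal-measure partition is exactly the device that places $\Sigma$ and $\Sigma^{(n)}$ on the same footing, so their lengths can be compared term by term.

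Next I would invoke Claim~\ref{claim:recombination} for both the reference path and its approximation to express each composite arc length as a single sum over the $L$ parts:
\begin{align*}
\|\Sigma\|&=\sum_{l=1}^{L}\sqrt{\sum_{i=1}^{R}\|\sigma_i(l)\|^2}, \\
\|\Sigma^{(n)}\|&=\sum_{l=1}^{L}\sqrt{\sum_{i=1}^{R}\|\sigma_i^{(n)}(l)\|^2}.
\end{align*}
Because the partition yields equal-measure parts for every constituent curve, each part of robot $i$ has length $\|\sigma_i(l)\|=\|\sigma_i\|/L$ and $\|\sigma_i^{(n)}(l)\|=\|\sigma_i^{(n)}\|/L$. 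Substituting these and factoring the constant $1/L$ out of each square root collapses both sums into the closed forms
\begin{align*}
\|\Sigma\|&=\sqrt{\sum_{i=1}^{R}\|\sigma_i\|^2}, \\
\|\Sigma^{(n)}\|&=\sqrt{\sum_{i=1}^{R}\|\sigma_i^{(n)}\|^2}.
\end{align*}

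With these two identities in hand, the conclusion follows from the single-robot guarantee. Treating the $o(1)$ term of Lemma~\ref{lem:prm}(ii) as one function of $n$ that dominates all per-robot ratios uniformly, I would write $\|\sigma_i^{(n)}\|\le(1+o(1))\|\sigma_i\|$ for every $i$, square, sum over $i$, and extract the common factor from the square root:
\begin{align*}
\|\Sigma^{(n)}\| &=\sqrt{\sum_{i=1}^{R}\|\sigma_i^{(n)}\|^2} \le\sqrt{\sum_{i=1}^{R}(1+o(1))^2\|\sigma_i\|^2} \\
&=(1+o(1))\sqrt{\sum_{i=1}^{R}\|\sigma_i\|^2}=(1+o(1))\|\Sigma\|,
\end{align*}
which is precisely Equation~\ref{eq:eps_approx} for the Euclidean arc-length cost.

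The step I expect to be the main obstacle is the collapse in the second paragraph: it hinges on the equal-measure reparametrization being simultaneously consistent for the composite curve and for each individual robot, so that the part lengths appearing in Claim~\ref{claim:recombination} really are the uniform values $\|\sigma_i\|/L$. Reconciling the composite partition induced by $Q^*$ with each robot's own equal-measure partition (robots do not in general traverse a constant fraction of the composite speed) is the delicate point, and I would argue it carefully — or, if it cannot be made exact, replace the clean identity by a part-wise Minkowski inequality together with the observation that the refinement can be taken fine enough that the residual discrepancy is absorbed into the $o(1)$ term. A secondary, routine point is confirming that a single $o(1)$ function can be chosen to dominate all $R$ per-robot ratios, which is immediate since $R$ is fixed.
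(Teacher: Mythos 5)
Your proposal follows essentially the same route as the paper's proof: a common equal-measure partition into $L$ parts via Claim~\ref{claim:partition}, the recombination identity of Claim~\ref{claim:recombination}, and then squaring, summing over the $R$ robots, and extracting the $(1+o(1))$ factor from the square root — the paper merely applies Lemma~\ref{lem:prm}(ii) per part and re-aggregates, whereas you collapse to $\|\Sigma\|=\bigl(\sum_{i=1}^{R}\|\sigma_i\|^2\bigr)^{1/2}$ first, which is algebraically the same step. The delicate point you flag (consistency between the composite equal-measure partition and each robot's own) is likewise left implicit in the paper's argument, so your treatment is faithful to it.
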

\begin{proof}
Partitioning the arcs $ \sigma_i^{(n)} $, and $ \sigma_i $,
into \commentadd{$ L $ (chosen as} per Claim~\ref{claim:partition}) pieces of equal length, yields two trajectory sequences,
for $ \ l \in N_+, l \leq L $.

The high level idea is that leveraging the uniformity in the parameterized \textit{parts} introduced by $ L $, Lemma~\ref{lem:prm}(ii) has to be recombined to represent the Euclidean arc length in the composite space.

\begin{align*}
&\|\sigma_i^{(n)}\| \leq (1 + o(1)) \|\sigma_i\| \tag{Using Lemma~\ref{lem:prm}}\\
\Rightarrow &\sum_{l=1}^L \| \sigma_i^{(n)}(l) \| \leq (1 + o(1))  \sum_{l=1}^L \| \sigma_i(l) \| \\
\Rightarrow &\| \sigma_i^{(n)}(l) \| \leq (1 + o(1))  \| \sigma_i(l) \| \\
\Rightarrow &\|\sigma_i^{(n)}(l)\|^2 \leq (1 + o(1))^2 \|\sigma_i(l)\|^2. \\
&\text{\noindent Combining over $ R $ using Claim~\ref{claim:recombination},}\\
&\sum_{i=1}^R\|\sigma_i^{(n)}(l)\|^2 \leq  (1 +
o(1))^2\sum_{i=1}^R\|\sigma_i(l)\|^2 \\
\Rightarrow &  L\sqrt{\sum_{i=1}^R\|\sigma_i^{(n)}(l)\|^2  }  \leq  (1 + o(1)) L\sqrt{\sum_{i=1}^R\|\sigma_i(l)\|^2 } \\
\Rightarrow & \|\Sigma^{(n)}\| \leq (1+o(1))\|\Sigma\| \tag{Using Claim~\ref{claim:partition} and~\ref{claim:recombination}}
\end{align*}

\end{proof}

Following the same parameterization described in Lemma~\ref{lem:coordination}, Theorem~\ref{thm:opt_graph} can be shown for the Euclidean metric as well.

\subsection{Asymptotic Optimality of $\drrtstar$}

Finally, \drrtstar\ is shown to be AO.  Denote by $m$ the time budget 
in Algorithm~\ref{algo:drrtstar},  i.e., the number of iterations of 
the loop. Denote by $\Sigma^{(n,m)}$ the solution returned by 
$\drrtstar$ for $n$ samples in the individual constituent roadmaps and $m$ iterations of the \drrtstar\ algorithm.

\begin{theorem}
  If $r(n)>\radstar$ then for every fixed $\epsilon>0$ it holds that
  $$\lim_{n,m\rightarrow \infty}\Pr\left[\cost(\Sigma^{(n,m)})\leq
    (1+\epsilon)c^*\right]=1.$$
  \label{thm:ao_drrt}
\end{theorem}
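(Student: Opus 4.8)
The plan is to decouple the two sources of convergence --- the quality of the best path \emph{contained in} the finite graph $\mmgimp$, and the ability of the online search to \emph{find} that path --- and then assemble them through the double limit $n,m\to\infty$. For a fixed sample count $n$ the graph $\mmgimp$ is finite; let $c^*_n := \cost(\Sigma^{(n)})$ denote the cost of its optimal $S$--$T$ path. Theorem~\ref{thm:opt_graph} already supplies the first half: for every $\epsilon>0$ we have $\Pr[c^*_n \le (1+\epsilon)c^*] \to 1$ as $n\to\infty$, the probability being over the instantiation of the constituent roadmaps. It therefore remains to show that, for each fixed $n$, the search performed by $\drrtstar$ drives $\cost(\Sigma^{(n,m)})$ down to $c^*_n$ almost surely as the iteration budget $m\to\infty$. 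Since $\Sigma^{(n,m)}$ is always a path in $\mmgimp$ we have $\cost(\Sigma^{(n,m)}) \ge c^*_n$ at all times, so it suffices to prove convergence from above.

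For the inner limit I would argue, in the style of the $\rrtstar$ analysis specialized to a finite graph, in three steps. (i) \emph{Coverage}: every vertex of $\mmgimp$ reachable from $S$ is added to $\tree$ a.a.s. The Voronoi-bias property illustrated in Figure~\ref{fig:voronoi} guarantees that for any tree vertex $V$ and any neighbor $V'$ in $\mmgimp$ the set of samples $\textup{Vol}(V)$ that cause the expansion to generate the edge $(V,V')$ has positive measure; together with the exploratory branch of $\ioracle$, which picks a uniformly random neighbor of each $\nodenear_i$, this lower-bounds by a positive constant (depending on $n$ only) the per-iteration probability of adding any given missing vertex via a given edge. This exploratory branch is invoked infinitely often, since a guided phase strictly decreases the heuristic and hence terminates in finitely many steps, after which random sampling resumes. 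As the iterations are independent, a Borel--Cantelli argument shows each such edge is attempted infinitely often, so all reachable vertices --- in particular those on the optimal path --- appear in $\tree$ with probability one. (ii) \emph{Optimal rewiring}: each expansion selects the minimum-cost-to-come parent over the in-tree neighborhood $N=\mathtt{Adj}(\vnew,\mmgimp)\cap\tree$ and then rewires every neighbor for which $\vnew$ offers a cheaper route (Algorithm~\ref{algo:drrtstar_expand}, Lines~9 and~16--17). The cost-to-come labels are thus monotonically non-increasing, and once all vertices and all relevant edges have been exercised infinitely often they must settle at the shortest-path-tree values of $\mmgimp$, so the traced path attains $c^*_n$. (iii) \emph{Pruning is harmless}: the branch-and-bound test (Line~11) and the greedy child-promotion discard only expansions whose cost-to-come already exceeds the incumbent $\cost(\pi_{\textup{best}})$; since every vertex on an optimal path has cost-to-come at most $c^*_n \le \cost(\pi_{\textup{best}})$, no optimal vertex is ever pruned, and the heuristic guidance affects only the \emph{order} of exploration, not the positive per-iteration coverage probability established in (i).

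Combining the three steps yields, for every fixed $n$, that $\Pr[\cost(\Sigma^{(n,m)}) = c^*_n]\to 1$ as $m\to\infty$. The final double limit is then assembled by a conditioning argument: fix $\epsilon>0$; by Theorem~\ref{thm:opt_graph} choose $n$ large enough that $c^*_n\le(1+\epsilon)c^*$ holds with probability arbitrarily close to one over the roadmap draw, and then, conditioned on such a roadmap, choose $m$ large enough that the search attains $c^*_n$ with probability arbitrarily close to one. Since $\cost(\Sigma^{(n,m)})\to c^*_n\le(1+\epsilon)c^*$, the stated limit follows; the same reasoning applied to the rewiring-only Algorithm~\ref{algo:udrrtstar_expand} shows $\udrrtstar$ is AO as the simpler, uninformed special case.

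\textbf{Main obstacle.} The delicate part is step (ii): rigorously showing that the rewiring dynamics over the finite graph converge to the exact shortest-path tree despite the interplay of the heuristic-guided oracle, the greedy child-propagation, and branch-and-bound. One must verify that these biases, while changing which vertex is expanded on a given iteration, never permanently exclude any edge from being relaxed, so that the monotone cost-to-come labels cannot stall above their optimal values. A secondary technical point is the careful handling of the two independent randomness sources in the double limit, justifying the order of limits ($m\to\infty$ first, then $n\to\infty$) and ensuring that the per-iteration coverage constant, which depends on $n$, does not corrupt the inner limit.
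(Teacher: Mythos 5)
Your proposal is correct and follows the paper's overall decomposition exactly: invoke Theorem~\ref{thm:opt_graph} for the quality of the best path in $\mmgimp$, reduce to showing that for fixed $n$ the search recovers the optimum of the finite graph as $m\to\infty$, and assemble the double limit by conditioning. Where you diverge is in how that inner limit is established. The paper (Lemma~\ref{lem:tree_conv}) casts the search as an \emph{absorbing Markov chain} whose states are prefixes of a fixed optimal path $v_1,\dots,v_t$ in $\mmgimp$, with a positive transition probability from prefix $i$ to prefix $i{+}1$ supplied by exactly the two facts you identify (positive measure of $\textup{Vol}(v_i)$ for \udrrtstar; the uniformly random neighbor selection in $\ioracle$ for \drrtstar); Theorem~\ref{thm:grinstead} then gives absorption with probability one. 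Because the chain's states are defined as ``the tree contains the optimal path up to segment $k$,'' the correct parent structure is built into the absorption event, and the paper never has to argue separately that the label-correcting dynamics converge. Your route instead separates vertex \emph{coverage} (Borel--Cantelli on positive per-iteration edge probabilities) from \emph{cost convergence} (monotone relaxation of cost-to-come labels settling at the shortest-path-tree values), in the style of the \rrtstar/asynchronous Bellman--Ford analyses. This buys a more explicit account of what the rewiring, branch-and-bound, and child-promotion steps actually do --- including the genuinely subtle point, which the paper glosses over, that a vertex on the optimal path may be rejected by the bound test while its current cost-to-come is still suboptimal and must be re-expandable later --- at the price of having to verify that these biases never permanently stall a relaxation, which you correctly flag as the delicate step. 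Both arguments are sound and rest on the same two positive-probability facts; the paper's framing is the shorter path to the conclusion, yours is the more informative one about the algorithm's mechanics.
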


Since $\mmgimp$ is AO (Theorem~\ref{thm:opt_graph}), it suffices to
show that for any fixed $n$, and a fixed instance of $\mmgimp$,
defined over $R$ {\tt PRM}s with $n$ samples each, $\drrtstar$ 
eventually (as $m$ tends to infinity), finds the optimal trajectory 
over $\mmgimp$.  This property is stated in Lemma~\ref{lem:tree_conv} 
and proven subsequently. The same arguments hold for both $ \drrtstar $ and $ \udrrtstar $, with the difference highlighted explicitly in the proof.

\begin{lemma}[Optimal Tree Convergence of \drrtstar]
\label{lem:tree_conv}
Consider an arbitrary optimal path $\pi^*$ originating from $v_0$ and 
ending at $v_{t}$, then let $O^{(m)}_k$ be the event such that after 
$m$ iterations of \drrtstar, the search tree $\tree$ contains the 
optimal path up to segment $k$.  Then, $$ \liminf_{m \to \infty} \pr 
\big( O^{(m)}_t \big) = 1.$$
\end{lemma}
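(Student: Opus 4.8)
The plan is to prove the lemma by induction on the segment index $k$, establishing $\liminf_{m\to\infty}\pr(O^{(m)}_k)=1$ for every $0\le k\le t$ and then reading off the case $k=t$. Throughout I read $O^{(m)}_k$ as the event that $v_0,\dots,v_k$ all lie in $\tree$ after $m$ iterations and the tree path to $v_k$ realizes the optimal cost-to-come of $v_k$ (equal, by optimality of the prefix of $\pi^*$, to $\cost(\pi^*)$ restricted to its first $k$ edges). The base case is immediate: the root $v_0=S$ is inserted at initialization with cost-to-come $0$, which is optimal, so $\pr(O^{(m)}_0)=1$ for all $m$. The engine of the induction is a single-iteration progress claim: conditioned on $O_k$ already holding, there is a fixed constant $p_k>0$ such that any one exploratory expansion generates $v_{k+1}$ from $\vnear=v_k$ and installs the optimal-cost edge into $v_{k+1}$.

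First I would establish $p_k>0$. The sample $\qrand$ selects $\vnear=v_k$ precisely when it falls in the Voronoi cell $\textup{Vor}(v_k)$ of $v_k$ relative to the finite tree, a region of positive Lebesgue measure. For $\udrrtstar$ the angle-minimizing oracle then returns $v_{k+1}$ whenever $\qrand$ additionally lies in the directional cell $\textup{Vor}'(v_k)$ attached to the genuine $\mmgimp$-neighbor $v_{k+1}$; the intersection $\textup{Vol}(v_k)=\textup{Vor}(v_k)\cap\textup{Vor}'(v_k)$ of Figure~\ref{fig:voronoi} has positive measure, so $p_k>0$. For $\drrtstar$ the only difference is localized to $\ioracle$: during an exploratory step the oracle draws a uniformly random neighbor in each constituent roadmap, hence returns exactly the tuple $v_{k+1}$ with probability $\prod_{i=1}^R 1/|\mathtt{Adj}(\nodenear_i,\graph_i)|>0$, which multiplied by the positive measure of $\textup{Vor}(v_k)$ again gives $p_k>0$.

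Next I would argue the generated vertex acquires the correct cost and that the efficiency machinery does not interfere. Since $v_k\in N=\mathtt{Adj}(v_{k+1},\mmgimp)\cap\tree$ carries the optimal cost-to-come, the parent step $\vparent=\argmin_{V\in N}\cost(V)+\cost(\local(V,v_{k+1}))$ assigns $v_{k+1}$ a cost-to-come of at most $\cost(v_k)+\cost(\local(v_k,v_{k+1}))$, which by optimality of the prefix of $\pi^*$ is exactly the optimal cost-to-come of $v_{k+1}$; if $v_{k+1}$ was already present with larger cost, the ensuing rewiring in Algorithm~\ref{algo:drrtstar_expand} repairs it. Two features require checking. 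The branch-and-bound test never discards $v_{k+1}$, because an admissible constituent-roadmap heuristic $\heuristic$ guarantees that the $f$-value of any vertex of $\pi^*$ is at most $c^*\le\cost(\pi_{\textup{best}})$. And the guided child-promotion loop terminates after finitely many steps, since the composite heuristic strictly decreases along any promotion chain and $\mmgimp$ is finite; hence exploratory iterations, the ones carrying the $p_k$ bound, recur infinitely often as $m\to\infty$.

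Finally I would close the induction with an infinite-trials argument. Write $A_k=\bigcup_m O^{(m)}_k$ for the event that $O^{(m)}_k$ holds for all sufficiently large $m$; this is absorbing because rewiring never raises a cost-to-come and optimal-path vertices survive pruning, so $O^{(m)}_k\subseteq O^{(m+1)}_k$ and $\liminf_m\pr(O^{(m)}_k)=\pr(A_k)$. Conditioned on $A_k$, each of the infinitely many exploratory iterations after the stabilization time independently achieves the $k\to k{+}1$ transition with probability at least $p_k$, so the probability it never occurs is bounded by $\lim_{M\to\infty}(1-p_k)^M=0$; thus $\pr(A_{k+1}\mid A_k)=1$ and $\pr(A_{k+1})=\pr(A_k)$. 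Combined with $\pr(A_0)=1$ this propagates to $\pr(A_t)=1$, giving $\liminf_{m\to\infty}\pr(O^{(m)}_t)=1$. The main obstacle I anticipate is the uniform positive lower bound $p_k$: it requires verifying that neither the deterministic, heuristic-driven behavior of $\ioracle$ in its guided phase nor the branch-and-bound and child-promotion mechanisms can starve the optimal edges of expansion probability, which is exactly where the admissibility of $\heuristic$ and the finiteness of promotion chains enter the argument.
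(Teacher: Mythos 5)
Your proof is correct and follows essentially the same route as the paper: both arguments hinge on a positive per-iteration probability of extending the optimal vertex sequence by one segment, obtained from the Voronoi-bias volume $\textup{Vol}(v_k)$ for \udrrtstar\ and from the uniform random neighbor selection in $\ioracle$ for \drrtstar. The paper packages the resulting infinite-trials argument as an absorbing Markov chain (citing Grinstead \& Snell, Theorem~11.3), whereas you unroll it as an induction with a geometric-trials limit, and you additionally verify details the paper leaves implicit (branch-and-bound never pruning optimal-path vertices, termination of child-promotion chains, and rewiring repairing suboptimal costs).
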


\begin{proof}
This is shown using Markov chain results 
\cite[Theorem~11.3]{Snell2012:intro_prob}. Specifically, absorbing 
Markov chains can be leveraged to show that $\drrtstar$ will 
eventually contain the optimal path over $\mmgimp$.  An absorbing 
Markov chain has some subset of its states in which the transition
matrix only allows self-transitions.

The proof follows by showing that the $\drrtstar$ method can be 
described as an absorbing Markov chain, where the target state of a
query is represented as an absorbing state in a Markov chain.  For 
completeness, the theorem is re-stated here.

\begin{theorem}[Thm 11.3 in Grinstead \& Snell]
\label{thm:grinstead}
In an absorbing Markov chain, the probability that the process will be 
absorbed is 1 (i.e., $Q(m) \to 0$ as $n \to \infty$), where $Q(m)$ is
the transition submatrix for all non-absorbing states.
\end{theorem}

The first part is that the $\drrtstar$ search is cast as an absorbing 
Markov chain, and second, that the transition probability from each 
state to the next is nonzero, i.e., each state eventually connects to
the target.

For query $(S, T)$, let the sequence $V = \{ v_1, v_2, \dots, 
v_{\textup{t}}\}$ of length $t$ represent the vertices of $\mmgimp$ 
corresponding to the optimal path through the graph which connects 
these points, where $v_{\textup{t}}$ corresponds to the target vertex,
and furthermore, let $v_{\textup{t}}$ be an absorbing state.  
Theorem~\ref{thm:grinstead} works under the assumption that each 
vertex $v_{\textup{i}}$ is connected to an absorbing state $v_{\textup{t}}$.

Then, let the transition probability for each state have two values, 
one for each state transitioning to itself, which corresponds to the
$\drrtstar$ search expanding along some other arbitrary path.  The 
other value is a transition probability from $v_{\textup{i}}$ to 
$v_{\textup{i}+1}$. This corresponds to two slightly different cases for $ \udrrtstar $ and $ \drrtstar $.

\noindent\textbf{Case $ \udrrtstar $}:
The transition probability from $v_{\textup{i}}$ to 
$v_{\textup{i}+1}$ corresponds to the method sampling within
the volume $\textup{Vol}(v_{\textup{i}})$. Then, as the second step, it must be shown that this volume has a 
positive probability of being sampled in each iteration.  It is 
sufficient then to argue that $\frac{\mu(\textup{Vol}
(s_{\textup{i}}))} {\mu(\cfree)} > 0$.  Fortunately, for any finite 
$n$, previous work has already shown that this is the case given 
general position assumptions \cite[Lemma~2]{SoloveySH16:ijrr}.

\noindent\textbf{Case $ \drrtstar $}: In the case of $ \drrtstar $ due to the random neighborhood selection in the expansion $ \ioracle $, there is a positive transition probability from $v_{\textup{i}}$ to 
$v_{\textup{i}+1}$.

Given these results, the $\drrtstar$ is cast as an absorbing Markov
chain, which satisfies the \commentadd{assumptions of Theorem \ref{thm:grinstead}}, and 
therefore, the matrix $Q(m) \to 0$.  This implies that the optimal
path to the goal has been expanded in the tree, and therefore 
$ \liminf_{m \to \infty} \pr \big( O^{(m)}_t \big) = 1.$

\end{proof}

\section{Extension to Shared Degrees of Freedom}
\label{sec:shared}

This section describes an extension of the $ \drrtstar $ approach to systems with shared degrees of freedom (\dof), with specific focus on humanoid robots with two arms. The challenge here arises because of the high dimensionality of the robots. The shared \dof\ is a general formulation, which can refer to either \commentadd{degrees of freedom in a torso or} a mobile base etc.

This section is structured in the same way as the rest of the algorithmic descriptions, and a lot of the shared notations and details are omitted for the sake of brevity. Instead, the interesting insights into the problems that arise due to the shared \dof\ are highlighted, and resolved. A high level overview of the differences of dual-arm $ \drrtstar $ ($ \dadrrtstar $) from the previously stated methods includes:
\begin{itemize}
\item $\dadrrtstar$ decomposes the space by grouping the shared \dof\ with one of the arms.
\item $\dadrrtstar$ implicitly builds two trees online that explores two tensor roadmaps.
\item $\dadrrtstar$ needs additional arguments for proving robustness in Claim \ref{claim:robust}.
\end{itemize}

\begin{wrapfigure}{r}{0.235\textwidth} 	\vspace{-0.4in}
	\centering
	\includegraphics[width=0.235\textwidth]{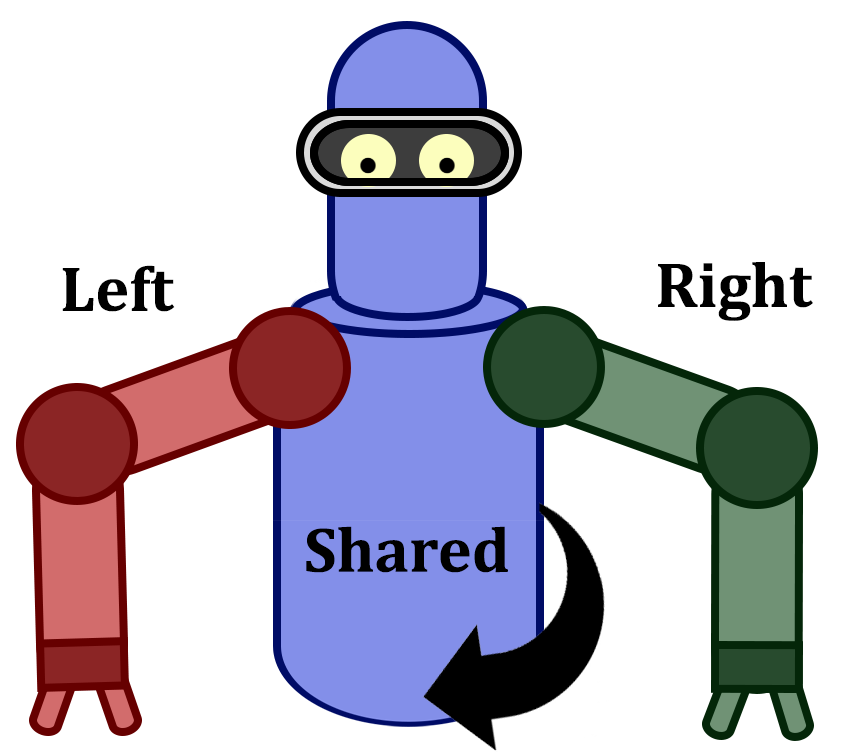}
		\caption{$\cfull = \cleft \times \cshared \times \cright$. }
	\label{fig:dof}
	\vspace{-0.2in}
\end{wrapfigure}

\begin{figure*}[!ht]
\centering
\includegraphics[height=2in]{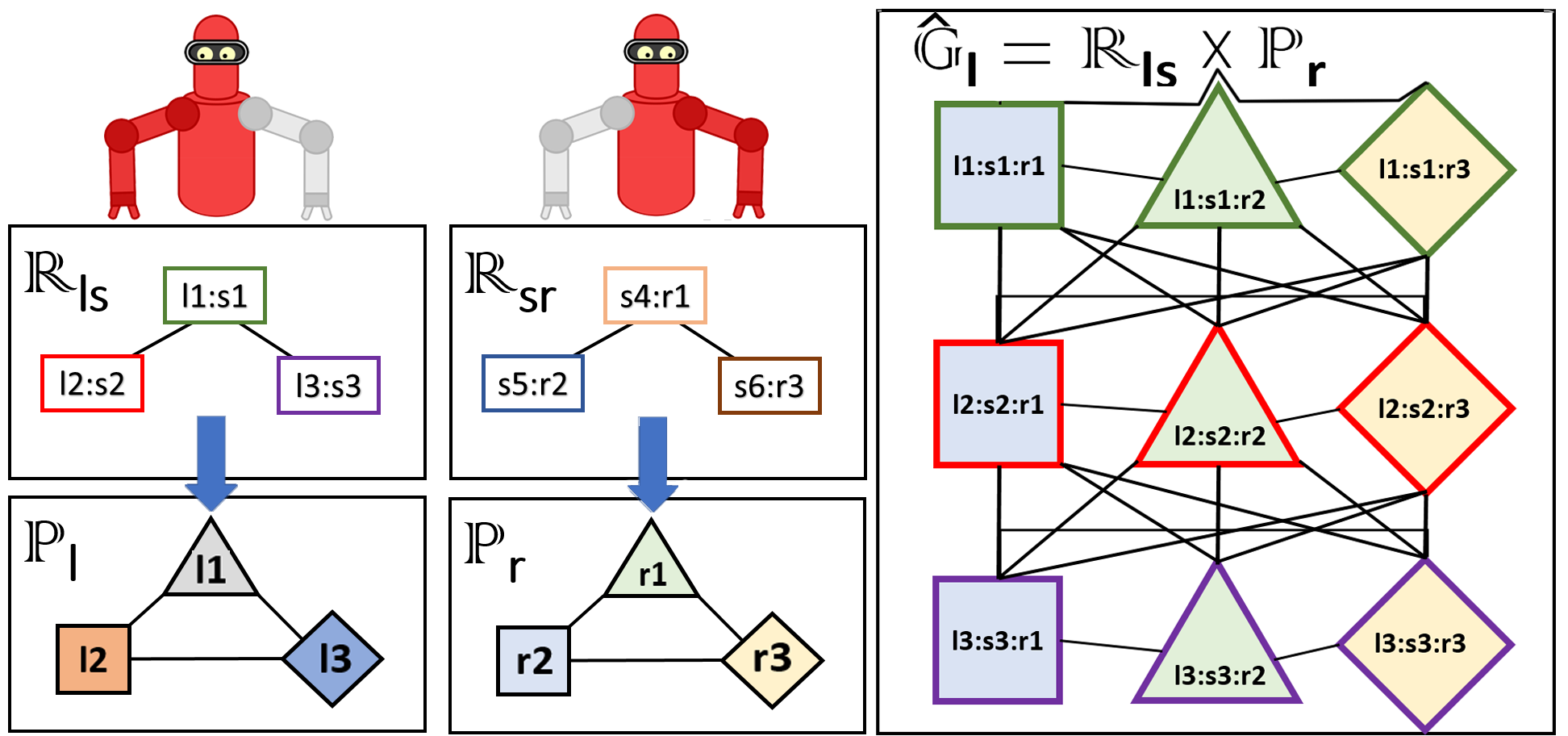}
\caption{The image on the left is an illustration of the decomposition
of the space to create arm-shared \dof\ roadmaps $ \mathbb{R} $ and
arm only roadmaps $ \mathbb{P} $. The example has three vertices in
each roadmap consisting of a combination Left($ l $), Shared($ s $),
and Right($ r $) values. For the sake of clarity the vertices on the arm-only roadmaps correspond to the $ \mathbb{R} $ roadmaps. The image on the right
shows the connectivity in the tensor product roadmap $ \gimpleft
= \lr \times \rp $. A similar tensor product is obtained
for \gimpright. }
\label{fig:dadrrt_decomposition}
\end{figure*}

The current work does not get into aspects related to
manipulation. Nevertheless, the primitives designed here can speed up
dual-arm manipulation task planning, where computational benefits can
be achieved by operating over multiple
roadmaps \citep{gravot2002playing, gravot2003method}. The topology of
dual-arm manipulation has been formalized \citep{koga1994multi,
Harada2014A-Manipulation} and extended to the $N$-arm
case \citep{Dobson:2015_MAM}. It requires the consideration of
multi-robot grasp planning \citep{vahrenkamp2010planning,
Dogar:2015aa}, regrasping \citep{vahrenkamp2009humanoid}, as well as
closed kinematic chain constraints \citep{Cortes:2004ff,
pallottinononinteracting}. Furthermore, force control strategies are
helpful for multi-arm manipulation of a common
object \citep{caccavale2008cooperative}. Recently coordinated control
has been applied to solve human-robot interaction \commentadd{tasks~\citep{mirrazavi2016coordinated}}. 

The algorithm is meant to address the applicability of $ \drrtstar $ to high dimensional humanoid robots with shared \dof.

\subsection{Problem Setup and Notation}

As shown in Fig. \ref{fig:dof}, the \dof $[F_1, \ldots, F_d]$ can be
grouped into \commentadd{left, right, and shared} \dof\ subsets, so that:
$\cfull = \cleft \times \cshared \times \cright$. 
A candidate solution path $\Sigma:[0,1]\rightarrow\cfree$ can be decomposed to
projections $[\Sigma_{l}, \Sigma_{s}, \Sigma_{r}]$ along
$\cleft$, $\cshared$ and $\cright$ respectively.

The method proposes the construction of the following roadmaps, as shown in
Fig. \ref{fig:dadrrt_decomposition}:

\begin{itemize}

\item A left-shared 
$ \mathbb{R}_{ls}(\nodes_{ls}, \edges_{ls}) $ and a right-shared \dof\
roadmap $ \mathbb{R}_{sr}(\nodes_{sr}, \edges_{sr}) $, where
$\nodes_{ls} \subset \cleft \times \cshared$ and
$\nodes_{sr} \subset \cshared \times \cright$. The edges are
collision-free paths in the same spaces, i.e., no collisions with the
static geometry, or self-collisions among the arm or the shared \dof
s.

\item A left arm 
$\lp(\nodes_{l}, \edges_{l})$ and a right arm roadmap
$\rp(\nodes_{r}, \edges_{r})$, such that $\nodes_{l} \subset \cleft$,
and $\nodes_{r} \subset \cright$. These roadmaps do not consider the
static geometry as they are not grounded by the shared \dof s. So, only
self-collisions between arm links are avoided.
\end{itemize}

The method focuses on two
\textit{tensor product roadmaps}: $ \gimpleft
= \mathbb{R}_{ls} \times \mathbb{P}_r $, and $ \gimpright
= \mathbb{R}_{sr} \times \mathbb{P}_l$. The method then simultaneously searches over $\gimpleft$ and $\gimpright$ in a $ \drrtstar $-esque fashion.

\subsection{Methodology}
This section describes the proposed method, and the way the
$ \dadrrtstar $ builds a forest of two trees \tree, which explores
both \gimpleft\ and \gimpright. In terms of the method's properties it
is sufficient to consider only one roadmap, but in practice, exploring
them simultaneously helps in the convergence, since we can evaluate
more possible solutions and rewires. The approach shows faster
convergence compared to $ \rrtstar $ in \cfull, and scales more than
$ \prmstar $.

At a high-level, the proposed Dual-arm \drrtstar\ (\dadrrtstar)
simultaneously explores the tensor product roadmaps \gimpleft\
and \gimpright, by building a search tree for each one so as to find a solution
from the start configuration $S$ to the target configuration $T$.  For
every vertex, the algorithm keeps track from which \textit{tensor product
roadmap} the vertex belongs to. Upon initialization, the tree starts
with two vertices, $S_l$ and $S_r$, one corresponding to tensor
product roadmap $ \gimpleft $ and the other to $ \gimpright $. Then, at every
iteration, the tree data structure \commentadd{$\tree$} is expanded by adding a
new edge and a node by calling an expand subroutine like Algorithm \ref{algo:drrtstar_expand}. The differences arises in the neighborhood calculation in Algorithm \ref{algo:drrtstar_expand} Line~8. The neighborhood $ N $ for $ \vnew $ considers the tensor roadmap neighborhoods that are part of the tree for both roadmaps. $ \vnew $ belongs to to either $ \gimpleft $ or $ \gimpright $. $ \hat{\vnew} $ is chosen to be the nearest tree vertex that was generated on the other tensor roadmap. $ N $ is the set of all tree vertices that are tensor roadmap neighbors of $ \vnew $ or $ \hat{\vnew} $. While doing rewires, care is taken to only rewire nodes belonging to the same tensor roadmap. The consideration of a richer neighborhood lets the algorithm ensure adequate exploration of both tensor roadmaps. The informed oracle $ \ioracle $ is similar to Algorithm \ref{algo:idrrtstar_oracle} with the difference arising for the constituent roadmap $ \mathbb{P} $, where the $ \heuristic $ estimate is simply the shortest Euclidean distance to the goal.

\noindent\textbf{Notes on Efficiency}: The difference of the
decomposition for shared degrees of freedom compared to $ \drrtstar $
is that $\mmgraph = \mathbb{R} \times \mathbb{P} $ does not give two
kinematically independent spaces. Specifically, $ \mathbb{P} $ depends
on the shared $ \dof $ to be grounded to the frame of the robot. This
means that the heuristic $ \heuristic $ is less informed for
$ \mathbb{P} $ and can only use the straight line
distance. \commentadd{The \drrtstar algorithm does not work out of the
box in the case of robots with shared degrees of freedom. The effect
of the less expressive heuristic in \dadrrtstar, translates into some
degradation in performance relative to the case of two kinematically
independent robotic arms. Nevertheless, \dadrrtstar\ is still
significantly faster than operating directly in the composite space of
the entire robot. There are not many methods that can practically
compute solutions for such high-dimensional (e.g., 15 degrees of
freedom) systems with kinematic dependences. The proposed \dadrrtstar\
method preserves some of the scalability benefits of \dadrrtstar and
addresses the kinematic dependence that arises for many popular
humanoid robots.}

\subsection{Analysis}
\noindent\textbf{Asymptotic optimality of tensor roadmaps : } 
Given the decomposition, $ \cfull $ is divided into
two parts: $\lr$ and $\rp$. 
\commentadd{If a robust optimal path $ \Sigma^* $ exists in $ \cfull $,} most of the arguments of Section \ref{sec:analysis} still hold for this decomposition.
Due to the nature of the space decomposition, since the constituent spaces do not correspond to kinematically independent robots, the clearance assumption in Claim \ref{claim:robust} needs to be reworked.

\begin{claimthm}
Robustness in $ \cfull $ implies robustness in $ \cspace_{ls} $ and $ \cspace_r $. For every decomposition, $\tau \in [0,1]$, and $q_i \in \cinv_i(\tau)$, 
$\| \sigma_i(\tau) - q_i \|_2 \geq \delta$.
\label{claim:robustdad}
\end{claimthm}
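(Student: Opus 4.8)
The plan is to replay the single-coordinate perturbation argument of Claim~\ref{claim:robust}, but now over the two-block decomposition $\{ls,r\}$ induced by $\cfull=\cleft\times\cshared\times\cright$ instead of over $R$ kinematically independent robots. Writing $\Sigma(\tau)=(\sigma_{ls}(\tau),\sigma_r(\tau))$ for the $ls$- and $r$-projections, with $\sigma_{ls}(\tau)\in\cspace_{ls}=\cleft\times\cshared$ and $\sigma_r(\tau)\in\cspace_r=\cright$, I would, for a block $i\in\{ls,r\}$ and a forbidden configuration $q_i\in\cinv_i(\tau)$, build the composite configuration $Q$ that agrees with $\Sigma(\tau)$ on the complementary block and equals $q_i$ on block $i$, argue $Q\in\cinv$, invoke composite robustness $\|\Sigma(\tau)-Q\|\geq\delta$, and observe that since $\Sigma(\tau)$ and $Q$ differ only on block $i$ the norm collapses to $\|\sigma_i(\tau)-q_i\|_2$, reproducing the conclusion of Claim~\ref{claim:robust}.

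The step that genuinely needs reworking is the definition of $\cinv_i(\tau)$. In the independent-robot setting this set splits cleanly as $\cinv_i\cup\bigcup_{j\neq i}I_i^j(\sigma_j(\tau))$, a fixed static-obstacle part plus inter-robot terms. That split fails here because the blocks share the coordinates $\cshared$: the right arm is \emph{grounded} by the shared degrees of freedom carried in the $ls$-block, so its workspace pose --- hence its collisions with static geometry and with the left arm --- depends on the shared projection, and moreover $\mathbb{P}_r$ validates only right-arm self-collisions, deferring all right-arm--obstacle checks to the composite space. I would therefore define $\cinv_i(\tau)$ directly as the set of block-$i$ configurations $q_i$ for which the composite configuration obtained by keeping the complementary block at its value in $\Sigma(\tau)$ lies in $\cinv$. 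This automatically folds the right-arm--obstacle and left-arm--right-arm collision modes (each evaluated with the shared coordinates fixed appropriately) into a single $\tau$-dependent forbidden set for whichever block we perturb, and the membership $Q\in\cinv$ then holds by construction.

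Executing the two cases is then routine. For $i=r$, the configuration $Q=(\sigma_{ls}(\tau),q_r)$ keeps the shared coordinates fixed and differs from $\Sigma(\tau)$ only in the $\cright$ coordinates, so robustness gives $\|\sigma_r(\tau)-q_r\|_2\geq\delta$. For $i=ls$, the configuration $Q=(q_{ls},\sigma_r(\tau))$ differs from $\Sigma(\tau)$ in both the $\cleft$ and $\cshared$ coordinates; the $\cright$ coordinates agree, so their contribution vanishes and $\|\Sigma(\tau)-Q\|=\|\sigma_{ls}(\tau)-q_{ls}\|_2$, again bounded below by $\delta$. By the symmetry of the two decompositions, the identical argument applies to $\gimpright=\mathbb{R}_{sr}\times\mathbb{P}_l$, so the claim holds for every decomposition.

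The main obstacle lies entirely in the $i=ls$ case together with the justification of the redefined forbidden space: one must verify that perturbing the shared coordinates while pinning the right arm's joint values at $\sigma_r(\tau)$ still yields a genuinely composite-colliding configuration whenever $q_{ls}\in\cinv_{ls}(\tau)$, even though displacing the torso or base moves the right arm through the workspace and can induce right-arm--obstacle contacts that the roadmap $\mathbb{P}_r$ never tested. Once the bookkeeping of which collision modes land in $\cinv_{ls}(\tau)$ is made precise, no new geometric estimate is required: the distance computation is word-for-word that of Claim~\ref{claim:robust}.
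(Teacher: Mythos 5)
Your proposal is correct and its core mechanism --- perturb a single block of $\Sigma(\tau)$ into a forbidden configuration, invoke composite robustness, and observe that the Euclidean norm collapses onto the perturbed block --- is the same as the paper's. The difference is in how the $ls$ block is handled. You treat it directly: $Q=(q_{ls},\sigma_r(\tau))$ differs from $\Sigma(\tau)$ only in the $\cleft\times\cshared$ coordinates, so $\|\Sigma(\tau)-Q\|=\|\sigma_{ls}(\tau)-q_{ls}\|\geq\delta$ in one step. The paper instead switches to the alternative decomposition $(\sigma_l,\sigma_{sr})$, derives $\delta\leq\|\sigma_l(\tau)-q_l\|$ for the left arm alone, and then lifts this to the $ls$ block via the monotonicity of the Euclidean norm under adding coordinates, $\|\sigma_{ls}(\tau)-q_{ls}\|\geq\|\sigma_l(\tau)-q_l\|$. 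Your route is shorter and avoids a subtlety in that chain: the paper's inequality only yields the claim for a forbidden $q_{ls}$ whose $l$-projection is itself forbidden in the $(l,sr)$ decomposition, which is not automatic when the collision is attributable to displacing the shared degrees of freedom. Your explicit redefinition of $\cinv_i(\tau)$ as the set of block-$i$ configurations whose completion by the complementary block of $\Sigma(\tau)$ lies in $\cinv$ is also a genuine improvement in precision; the paper states this only informally for the $r$ case (``$q_r$ \ldots\ so that the right arm collides either with the static geometry or with the left-shared part of the robot'') and it is exactly the right way to fold the grounding of $\cright$ through $\cshared$ into the forbidden set. Both arguments extend symmetrically to $\gimpright$ as you and the paper both note.
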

\begin{proof}

Consider any $Q=(\Sigma_{ls}(\tau), q_r)$, where $q_r$
is a configuration in $\cspace_r$ so that the right arm collides
either with the static geometry or with the left-shared part of the
robot, which is at $\sigma_{ls}(\tau)$. Given a robust $\Sigma$, $Q$ is a colliding configuration: $\delta \leq \|\Sigma(\tau)-Q\|$. But $Q$ and
$\Sigma(\tau)$ only differ in $q_r$, so the path $\sigma_r$ has
clearance $\delta$ 
$$\delta \leq
|| \sigma_r(\tau) - q_r||.$$

By switching the decomposition of $ \Sigma $ in $ \gimpleft $ into $
(\sigma_{l}, \sigma_{sr}) $, by the above reasoning:
  
 \begin{align*}
 \delta \leq || \Sigma(\tau) - Q || 
 \implies \delta &\leq || \Sigma_l(\tau) - q_l||\\
 \text{Now, since~ }\forall \tau:\ \ \ 
 || \sigma_{ls}(\tau) - q_{ls}|| &\geq || \sigma_l(\tau) - q_l|| \\ 
 \implies \delta &\leq || \sigma_{ls}(\tau) - q_{ls}||.
 \end{align*}
 
This proves the robustness for $ \sigma_{ls} $. The same reasoning can be applied to $ \cspace_{sr} $ and $ \cspace_l $.

\end{proof}

It suffices to follow the proof structures outlined in Section \ref{sec:analysis} to argue asymptotic optimality for the method. 
It should be noted that due to the coupled nature of $ \cfull $ introduced by the shared $ \dof $, the use of the Euclidean cost metric is more applicable.

\section{Experimental Validation}
\label{sec:experiments}

This section provides an experimental evaluation of \drrtstar\ by
demonstrating practical convergence, scalability for disk robots, and
applicability to dual-arm manipulation.  \commentadd{The choice of a
  cost metric depends on the type of application and the underlying
  system properties. For systems without shared degrees of freedom,
  the considered cost function is the sum of individual Euclidean arc
  lengths, which is a popular choice for multi-robot systems. For
  systems with shared degrees of freedom, the combined nature of the
  underlying configuration space motivates the use of Euclidean arc
  length in the composite space as the metric.  The results show that
  the properties and benefits of the proposed algorithms stay robust
  for both choices of cost functions.}

\subsection{Tests on Systems without Shared \dof}

The approach and alternatives are executed on a cluster with Intel(R)
Xeon(R) CPU E5-4650 @ 2.70GHz processors, and 128GB of RAM. The
solution costs are evaluated in terms of the sum of Euclidean arc
lengths.

\noindent \textbf{2 Disk Robots among 2D Polygons:} This
base-case test involves $ 2 $ disks ($\cspace_i := \reals^2$) of
radius $0.2$ with bounded velocity, in a $10 \times 10$ region,
inflated by the radius, as in Figure~\ref{fig:poly_enviro}. The disks
have to swap positions between $(0,0)$ and $(9,9)$. This is a setup
where it is possible to compute the explicit roadmap, which is not
practical in more involved scenarios. In particular, \drrtstar\ is
tested against: a) running $\astar$ on the implicit tensor roadmap
$\mmgimp$ (referred to as ``Implicit \astar''), where $\mmgimp$ is
defined over the same individual roadmaps with $N$ nodes as those used
by \drrtstar; b) an explicitly constructed \prmstar\ roadmap with
$N^2$ nodes in $\cspace$; and c) the $ \udrrtstar $ variant of the algorithm.

Results are shown in Figure~\ref{fig:polygonal_benchmark}.  \drrtstar\
converges to the optimal path over $\mmgimp$, similar to the one
discovered by Implicit \astar, while quickly finding an initial
solution of high quality. Furthermore, the implicit tensor product
roadmap $\mmgimp$ is of comparable quality to the explicitly
constructed roadmap. 
The convergence of $ \drrtstar $ is faster compared to corresponding $ \udrrtstar $ variant as evident from Figure~\ref{fig:polygonal_benchmark}(left).

\begin{figure}[!t]
	\centering
	\includegraphics[width=1.6in]{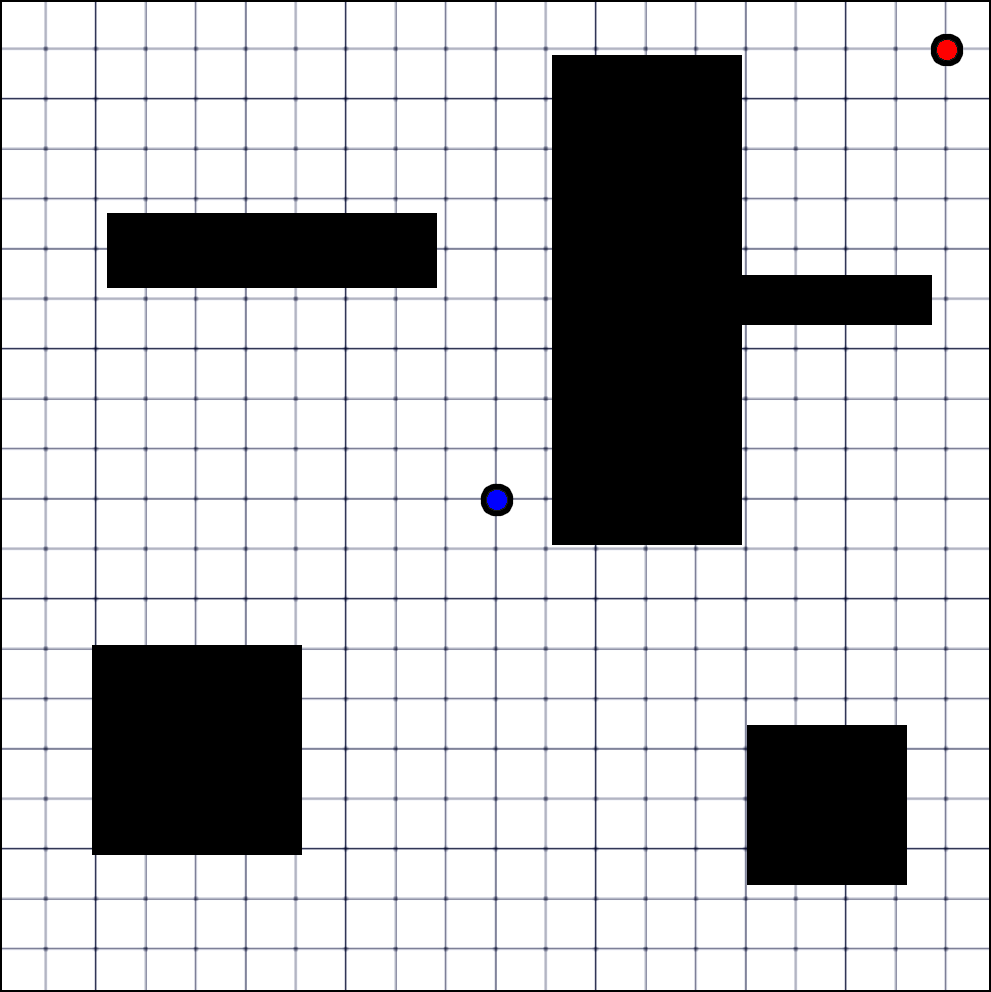}
	\caption{The 2D environment where the 2 disk robots 
		operate.}
	\label{fig:poly_enviro}
\end{figure}

Table~\ref{tab:2_robot} presents running times.
\drrtstar\ and implicit \astar\ construct $2$ $N$-sized roadmaps 
(row~3), which are faster to construct than the \prmstar\ roadmap in
$\cspace$ (row~1).  \prmstar\ becomes very costly as $N$ increases.
For $N=500$, the explicit roadmap contains $250,000$ vertices, taking
$1.7$GB of RAM to store, which was the upper limit for the machine
used. When the roadmap can be constructed, it is fast to query
(row~2). \drrtstar\ quickly returns an initial solution (row~5), at par with the solution times from the explicit roadmap and well before
Implicit \astar\ returns a solution (row~4). The initial solution times are compared visually in Figure~\ref{fig:polygonal_benchmark} which demonstrates the efficiency of $ \drrtstar $ compared to $ \udrrtstar $ as well. The next
benchmark further emphasizes this point.

\commentadd{The comparison between the early solution time required to
  find a suboptimal solution by the proposed method against the
  computation time needed by the optimal \astar\ highlights the impact
  of roadmaps of increasing sizes. While \drrtstar 's initial solution
  times barely change, the time taken by any variant of heuristic
  search over the composite roadmap increases with the size of the
  roadmap. This indicates that roadmaps of size similar to the tensor
  roadmaps considered here would rapidly cease to be solvable without
  anytime performance similar to that of \drrtstar.}

\begin{table}[!h]
\centering
\caption{Construction and query \commentadd{times (seconds)} for 2 disk robots.}
\label{tab:2_robot}
\small
\begin{tabular}{|l|c|c|c|}
\hline
\multicolumn{1}{|r|}{\textbf{Number of nodes: $ N $ =}} & \textbf{50} & \textbf{100} & \textbf{200} \\ \hline
{$N^2$-PRM* construction}                     & 3.427        & 13.293        & 69.551        \\ \hline
{$N^2$-PRM* query}                            & 0.002       & 0.005        & 0.019        \\ \hline
{2 $N$-size PRM* construction}              & 0.135        & 0.274        & 0.558       \\ \hline
{Implicit A* search over $\mmgimp$}                     & 0.886       & 4.214        & 15.468        \\ \hline
{\udrrtstar\ over $\mmgimp$ (initial) }                    & 1.309       & 0.999        & 0.638        \\ \hline
{\drrtstar\ over $\mmgimp$ (initial) }                    & 0.003       & 0.002        & 0.002        \\ \hline
\end{tabular}
\end{table}

\begin{figure*}[h]
	\centering
		\includegraphics[height=2in]{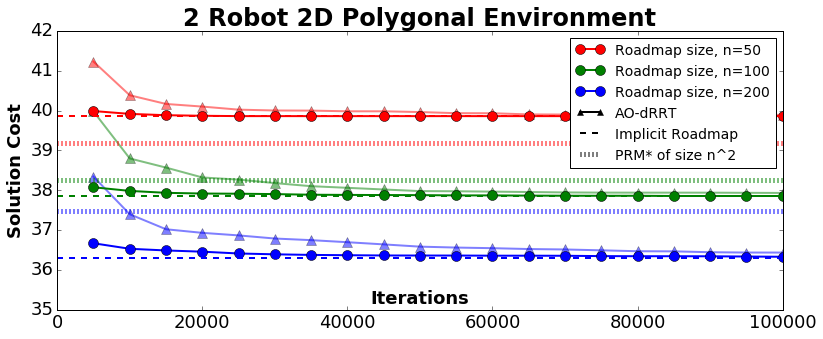}
		\includegraphics[width=4.6in,height=2in]{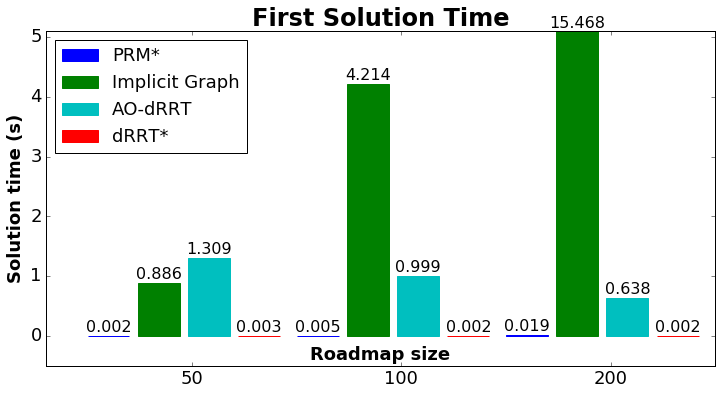}
		\caption{\commentadd{For every $ n  $, $ 10 $ randomly generated pairs of roadmaps 
		are generated. $ \drrtstar $ and $ \udrrtstar $ ran on 5 random experiments for every roadmap pair,
		and the implicit $ \astar $ searches these $ 10 $ tensor combinations. $ \prmstar $
		is run $ 10 $ times for every $ n $.  
		\commentadd{\textit{(Top)}: Average} solution cost is reported over iterations. Data averaged over $10$
		roadmap pairs.  $\drrtstar$ (solid circled line) and $ \udrrtstar $ (solid triangled line) converges to the optimal path
		through $\mmgimp$ (dashed line). \textit{(Bottom)}: Initial solution times for the algorithms.} }
	\label{fig:polygonal_benchmark}
\end{figure*}

\begin{figure*}[!ht]
\centering
\includegraphics[width=0.9\textwidth]{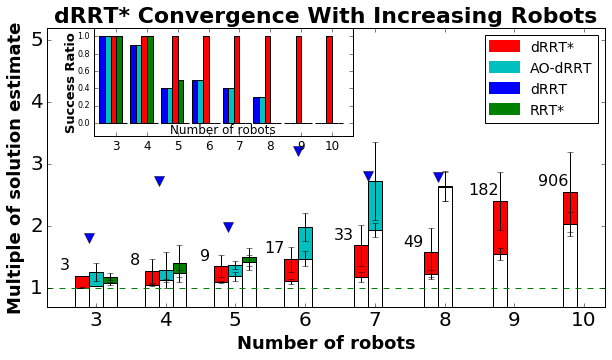}
\includegraphics[width=0.45\textwidth]{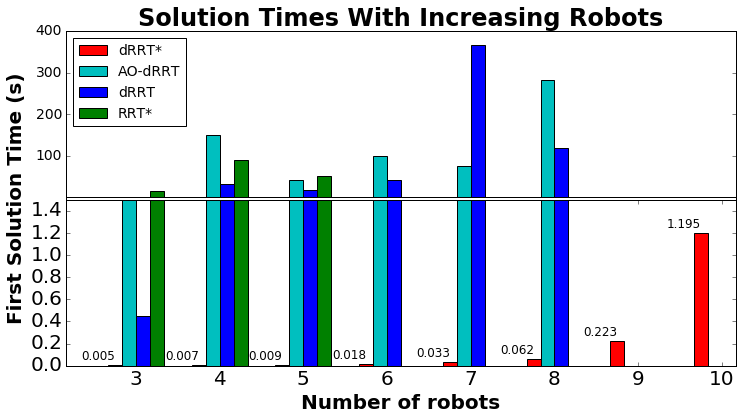}
\includegraphics[width=0.45\textwidth]{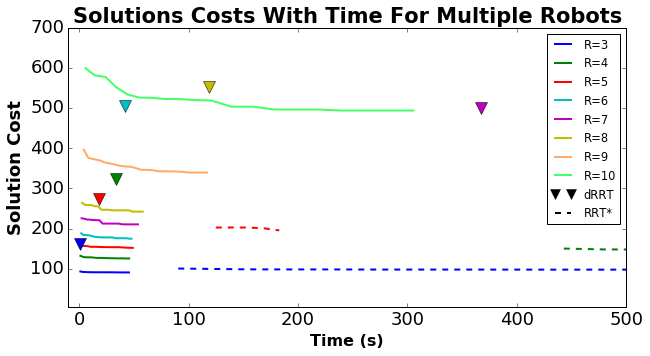}

\caption{\commentadd{Data averaged over $10$ runs for $R = 3$ to $10 $
    robots. The data is reported for the algorithms \drrtstar,
    \udrrtstar, \drrt, and \rrtstar.  (\textit{Top inset}): The
    success ratio shows the fraction of the runs that returned a
    solution.  (\textit{Top}): Relative solution cost of the
    algorithms for increasing $R$ over $100,000$ iterations. The
    horizontal green line at $1$ denotes the best possible cost
    estimate, which is a combination of the individual robot shortest
    paths for each problem. All the other costs are represented as
    multiples of this estimate.  \drrt\ only reports a single
    solution, and is denoted by the inverted blue triangles. The other
    algorithms improve the solution over the iterations, represented
    by vertical bars between the average initial solution cost and
    average final reported solution cost. The numbers above the
    \drrtstar\ bars represent the iteration number of the first
    solution for \drrtstar.  (\textit{Bottom left}): The average
    initial solution times for the algorithms. (\textit{Bottom
      right}): The plot of the reported solution cost over time for
    the different algorithms. \drrt\ only reports a single solution
    and is represented by the inverted triangles.  }}
\label{fig:scalability}
\end{figure*}

\textbf{Many Disk Robots among 2D Polygons:} 
In the same environment as
above, the number of robots $R$ is increased to evaluate scalability. 
\commentadd{The same environment is maintained in this benchmark to introduce additional complexity purely in terms of the addition of more robots into the planning problem. The effect of more difficult and practical planning scenarios would be explored in the subsequent benchmarks with manipulators.}
Each robot starts on the perimeter of the environment and is tasked
with reaching the opposite side. An $N=50$ roadmap is constructed for
every robot. It quickly becomes intractable to construct a \prmstar\
roadmap in the composite space of many robots.

Figure~\ref{fig:scalability} shows the inability of alternatives to
compete with \drrtstar\ in scalability. Solution costs are normalized
by an optimistic estimate of the path cost for each case, which is the
sum of the optimal solutions for each robot, disregarding robot-robot
interactions.  The colored vertical bars represent the range of the average initial and final solution costs. Implicit \astar\ fails to return solutions even for 3
robots. Directly executing \rrtstar\ in the composite space fails to
do so for $R \geq 6$.  The original \drrt\ method (without the
informed search component) starts suffering in success ratio for
$R \geq 4$ and returns worse quality solutions than \drrtstar. The $ \udrrtstar $ variant performs similar to $ \drrt $ in terms of success ratio but expectedly finds better solutions than $ \drrt $. $ \drrtstar $ finds solutions up to $ R=10 $. 

\commentadd{In order to give an estimate of the immensity of the size
  of the search space,} for $ R=10 $, the \textit{tensor-product
  roadmap} represents an implicit structure consisting of $ 50^{10} $
or $ \mathtt{\sim}100 $ million-billion vertices.

\textbf{Dual-arm manipulator:} This test (Figure~\ref{fig:motoman_start_goal}) shows the benefits
of \drrtstar\ when planning for two $7$-dimensional arms.
Figure~\ref{fig:motoman_convergence} shows that
\rrtstar\ fails to return solutions within $100K$
iterations. Using small roadmaps is also insufficient for this
problem.  Both \drrtstar\ and Implicit \astar\ require larger roadmaps
to begin succeeding. But with $N \geq 500$, Implicit \astar\ always
fails, while \drrtstar\ maintains a $100\%$ success ratio. As
expected, roadmaps of increasing size result in higher quality
path. The informed nature of \drrtstar\ also allows to find initial
solutions fast, which together with the branch-and-bound primitive
allows for good convergence. The initial solution times in Figure~\ref{fig:motoman_convergence} indicate that the heuristic guidance succeeds in finding fast initial solutions even for larger roadmaps.

\begin{figure}[h]
 \includegraphics[width=0.23\textwidth]{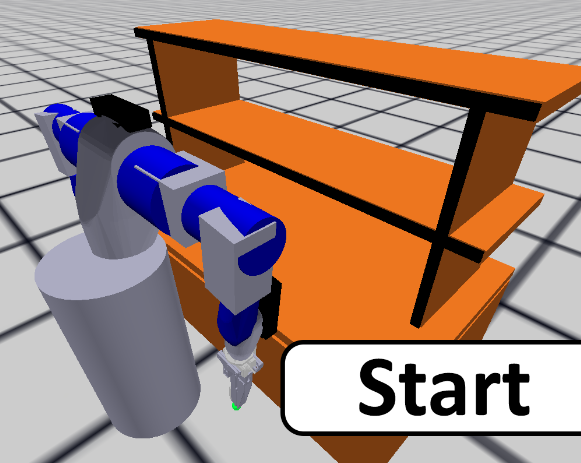}
 \includegraphics[width=0.23\textwidth]{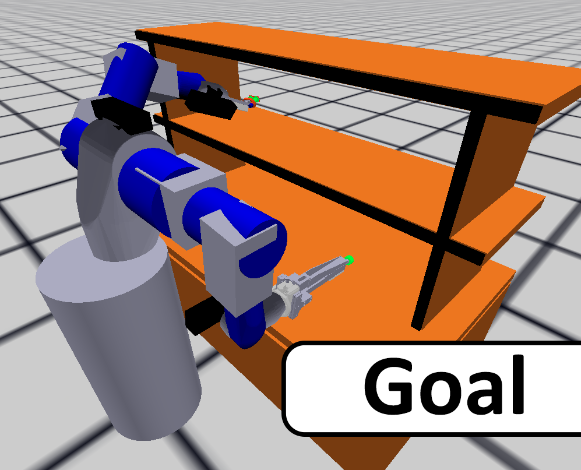}
 \caption{The start and target configuration for the dual-arm manipulator benchmark on a $ \motoman $ {\tt SDA10F}. \commentadd{\drrtstar\ is run for a dual-arm manipulator to go from its home position (left) to a reaching 
 	configuration (right).}
 }
 \label{fig:motoman_start_goal}
\end{figure}
\begin{figure}[ht]
\centering

\includegraphics[width=0.45\textwidth]{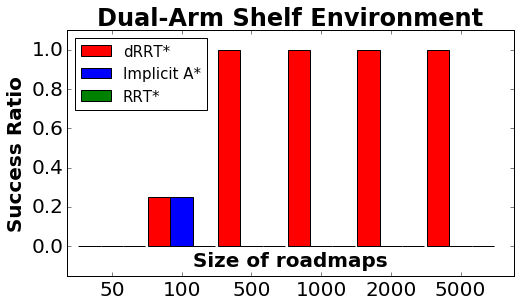}
\vspace{0.1in}

\includegraphics[width=0.45\textwidth]{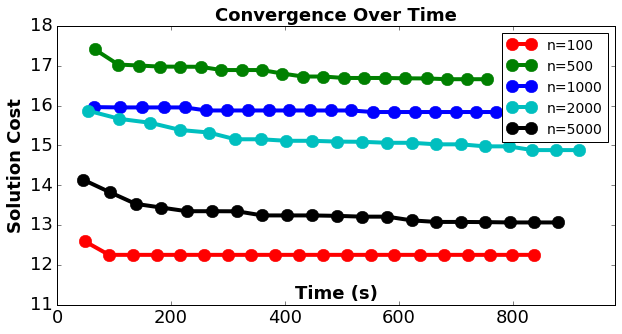}

\vspace{0.1in}
\includegraphics[width=0.45\textwidth]{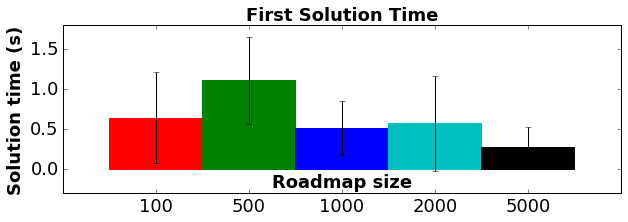}
\caption{
(\textit{Top}): $ 5 $ random experiments are run for $ 4 $ random
roadmap pairs for every $ n $. $ \drrtstar $ achieves 
perfect success ratio as $n$ increases.
(\textit{Middle}): \drrtstar\ solution quality over time.  Here,
larger roadmaps provide benefits in terms of running time and solution
quality.
(\textit{Bottom}): Initial solution times for $ \drrtstar $.}
\label{fig:motoman_convergence}
\end{figure}

\subsection{Tests on Systems with Shared \dof}

This section showcases three benchmarks of increasing difficulty,
which are used to evaluate the performance of
the \dadrrtstar. All the experiments were run on a cluster with
Intel(R) Xeon(R) CPU E5-4650 @ 2.70GHz processors, and 128GB of
RAM. In each benchmark, different sizes $ n $ of the constituent
roadmaps $ \lr $ and $ \rr $ were evaluated.  The \dadrrtstar
algorithm is compared against \rrtstar and $ \prmstar $.  The platforms used are $ \motoman $ {\tt SDA10F}, with a torsional \dof, and $ \baxter $ on a mobile base that can rotate and
translate. For the $ \prmstar $ algorithm and all benchmarks, $ 20 $ randomly seeded
roadmaps with $ 50,000 $ nodes are constructed in $ \cfull $ and data
are gathered from $ 20 $ experiments.  A $ 50,000 $ node roadmap has
$ \approx1 $ million edges, and takes $ \approx 7$ hours to construct
in these high dimensional spaces. Larger roadmaps run into memory
scalability issues. These roadmaps in the full space occupied
$ \approx50 $MB. In comparison, the space requirement for two arm
roadmaps were $ <1 $MB.

For all benchmarks, both $ \rrtstar $ and \dadrrtstar were allowed to run
for $ 100,000 $ iterations. $ \rrtstar $ is ran in $ 20 $ different
randomly seeded experiments for every benchmark. For the \dadrrtstar\ algorithm, $ 20 $ experiments are run for every
benchmark, for the different constituent roadmap sizes $ n $, by
building $ 4 $ pairs of randomly seeded constituent roadmaps, and
running $ 5 $ randomly seeded experiments over each roadmap
combination. 

\textbf{Motoman Tabletop Benchmark:} A set of $ 20 $ random
collision-free starts and goals are selected in the tabletop
environment, shown in Fig. \ref{fig:benchmarks}.

They are only used if
they are sufficiently far away from each other. \dadrrtstar is tested
with constituent roadmap sizes of $ 100, 250$ and $ 500 $. All the
algorithms succeed in every experiment. In this simpler problem,
smaller roadmaps are quicker to search, and generate initial solutions
faster compared to $ \rrtstar $, as shown in
Fig. \ref{fig:motoman_tabletop} (\textit{top}). 
\begin{wrapfigure}{r}{0.235\textwidth} 	\centering
	\includegraphics[width=0.235\textwidth]{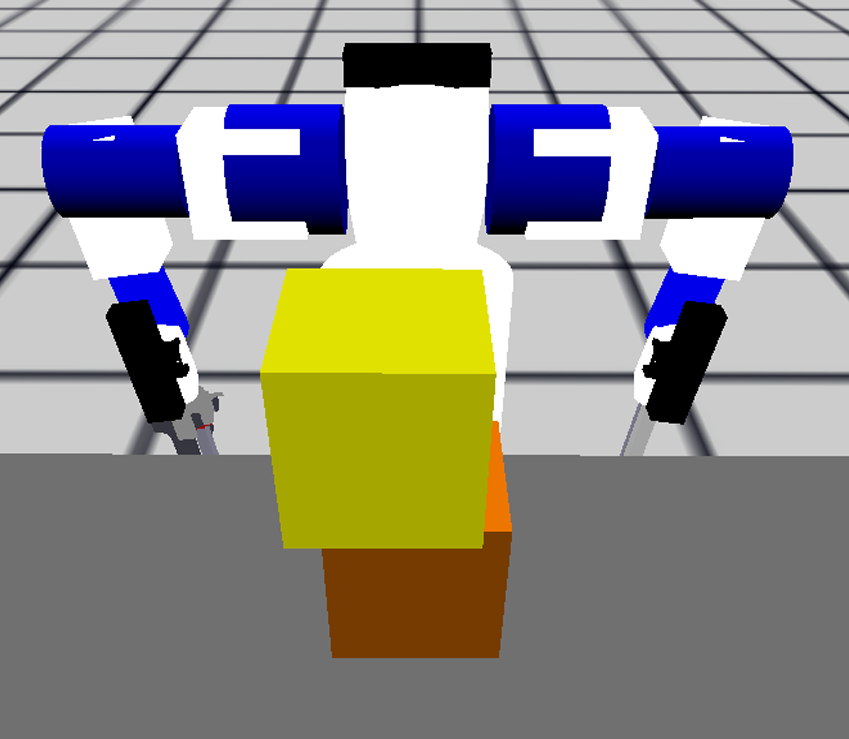}
		\caption{Motoman tabletop.}
	\label{fig:benchmarks}
	\end{wrapfigure}
Searching the
$ \prmstar $ is the fastest (online), but the solution quality is
worse than that obtained from the other methods. \dadrrtstar\ converges
to better solutions, compared to the other algorithms, as shown in
Fig \ref{fig:motoman_tabletop} (\textit{bottom}).

\begin{figure}[!h]
	\centering
	\includegraphics[width=3in]{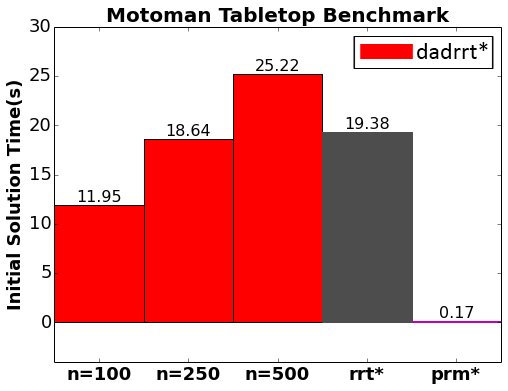}
	\includegraphics[width=3in]{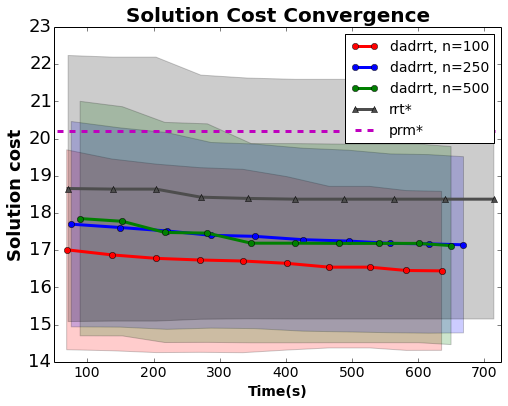}
	\caption{Motoman Tabletop Benchmark: \textit{Top}: The initial solution times are reported for every algorithm. \textit{Bottom}: The average solution costs over time are reported. 
			}
	\label{fig:motoman_tabletop}
\end{figure}

\commentadd{\textbf{Motoman~Shelf~Benchmark}: This} benchmark sets up the
$ \motoman $ in front of $ 3 $ shelves. The robot has to plan between
two states where both arms are inside different shelving units, which
require the rotation of its torso \commentadd{(Fig. \ref{fig:motoman_shelf} (\textit{top}))}.

This is a significantly harder problem, and $ \rrtstar $ suffers in terms
of success ratio \commentadd{(Fig. \ref{fig:motoman_shelf} (\textit{second}))}. $ \rrtstar $ takes much longer to find the initial solution, as
indicated by Fig. \ref{fig:motoman_shelf} (\textit{middle}). $ \prmstar $ is still the fastest in finding solutions (only
online cost considered again here). The \dadrrtstar solution cost is
much better than both the average $ \prmstar $ solution, and \rrtstar,
as shown in Fig. \ref{fig:motoman_shelf} (\textit{bottom}). \dadrrtstar\ will quickly converge for smaller
roadmaps, and then stop improving the cost. The larger roadmaps
contain better solutions, causing \dadrrtstar to converge slower. 

\begin{figure}[!h]
\centering
\includegraphics[width=2.7in]{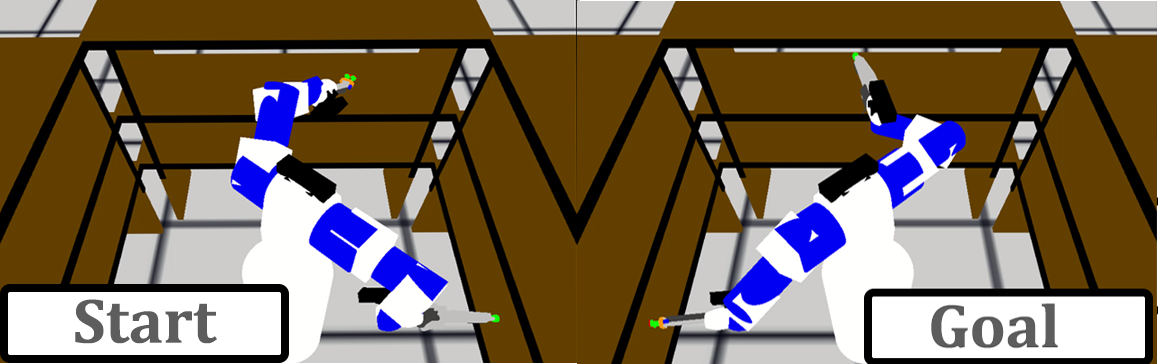}
\includegraphics[width=2.9in]{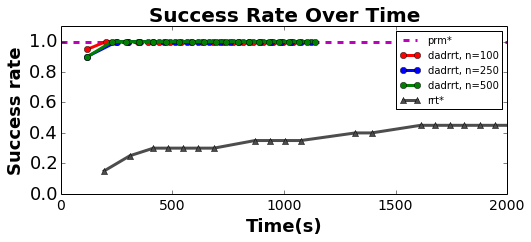}
\includegraphics[width=2.7in]{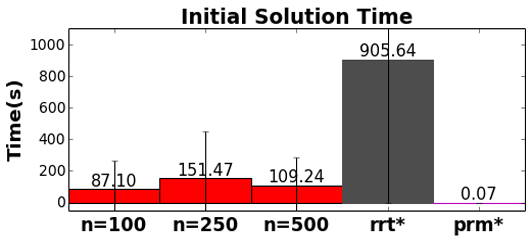}
\includegraphics[width=2.9in]{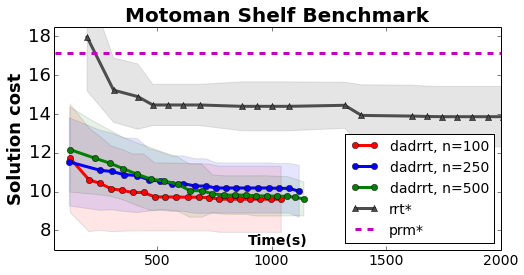}

\caption{Motoman Shelf Benchmark: \textit{Top:} The setup of the benchmark. 
\textit{Second:} Success ratios of
the algorithms are shown over time. \textit{Middle: } The initial
solution times are reported for every algorithm. \textit{Bottom:} The
average solution costs over time are reported. The dashed horizontal
line denotes the average solution cost discovered by $ \prmstar $. The
shaded regions represent the corresponding algorithm's standard
deviation of cost. }
\label{fig:motoman_shelf}
\end{figure}

\commentadd{\textbf{Mobile Baxter Benchmark}: This} benchmark uses
a \textit{Rethink} $ \baxter $ robot with a mobile base. The robot is
grasping two long objects inside a shelf Fig. \ref{fig:mobile_baxter} (\textit{top}). The robot has to
navigate across a cramped, walled room, to a placing configuration
inside a shelf on the other side of the room.

This proves to be the most challenging problem among the three
benchmarks. As shown in Fig. \ref{fig:mobile_baxter} (\textit{middle}), \rrtstar fails to find a solution. It should be noted that,
when tested on a simpler version of the benchmark without the pillar
in the room, \rrtstar could find solutions. $ \prmstar $ also falters
by showing a very low success rate. This indicates that we need even
larger roadmaps in $ \cfull $ to solve harder problems. The
problem is solved when a dense implicit structure, with $ n=1000 $ is explored
by \dadrrtstar. 

Fig.~\ref{fig:mobile_baxter} (\textit{bottom}) shows that \dadrrtstar\
finds better initial and converged solutions when compared to the
instances in which $ \prmstar $ succeeded.

\begin{figure}[!h]
\centering
\includegraphics[height=1.1in]{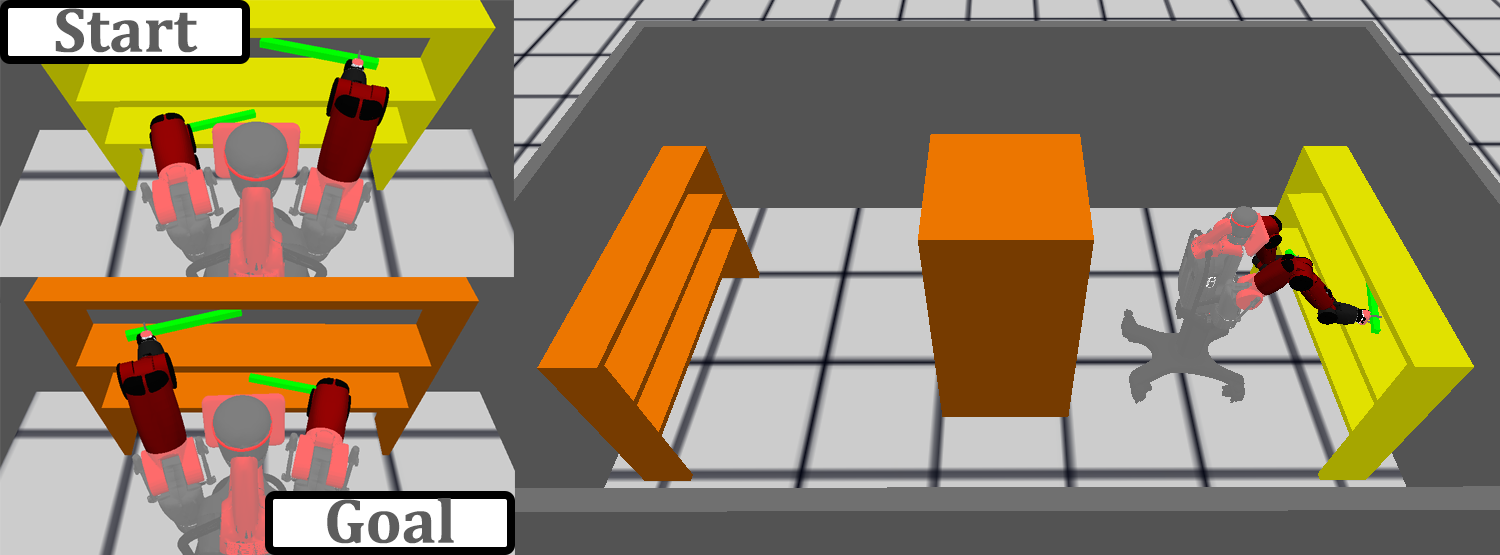}
\includegraphics[width=3in]{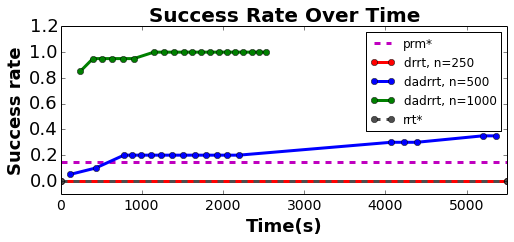}
\includegraphics[width=3in]{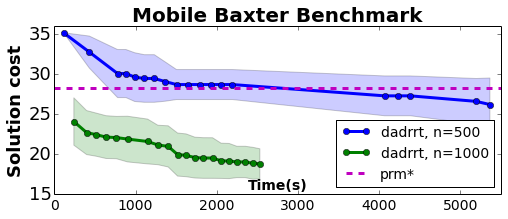}
\caption{Mobile Baxter Benchmark: \textit{Top:} The setup of the benchmark. \textit{Middle:} Success ratios of the algorithms are shown over time. \textit{Bottom:} The average solution costs over time are reported. The dashed horizontal line denotes the average solution cost discovered by $ \prmstar $. 
}
\label{fig:mobile_baxter}
\end{figure}

\subsection{Real world experiments}
Experiments were performed \commentadd{in a 28}-dimensional space with
two dual-armed manipulators: (a $ \motoman $ {\tt{SDA10f}} and a $
\baxter $). \commentadd{Initial solutions were obtained in a fraction
  of a second for the two experimental setups, with the method allowed
  to run for $1000$ iterations to improve the quality of the
  demonstrated trajectories.  The two setups are chosen carefully to
  demonstrate in the first instance a typical application of
  simultaneous grasping that may arise in real world scenarios, and in
  the second instance a problem that forces very close interactions
  between the arms in close proximity.}

\noindent\textbf{Pre-grasp Demonstration}: As shown in Figure \ref{fig:realworld1}, the demonstration simulates an application to multi-arm manipulation, where the goals of the motion planning problem for 4 arms is to pre-grasping configurations for 4 objects placed on a table in the shared workspace between the robots. $ 1000 $ node roadmaps were constructed for each arm and $ \drrtstar $ was used to search for a solution to the motion planning problem. The solution was computed offline and an open-loop execution was performed on the real system.

\noindent\textbf{Coupled Workspace Demonstration}: As shown in \commentadd{Figure \ref{fig:realworld2}, a pole} is positioned between the two robots so that the arms cannot cross over. The objective is for the 4 arms to a) approach the pole at alternating heights, b) then swap the height of their approaching configurations, and c) finally return back to the start state. $ 1000 $ node roadmaps were constructed for each arm and $ \drrtstar $ was used to search for a solution to the three motion planning problems. The solutions that were computed offline, were stitched together and replayed in an open loop execution on the real system.

\begin{figure*}[ht]
\centering
\includegraphics[width=0.98\textwidth]{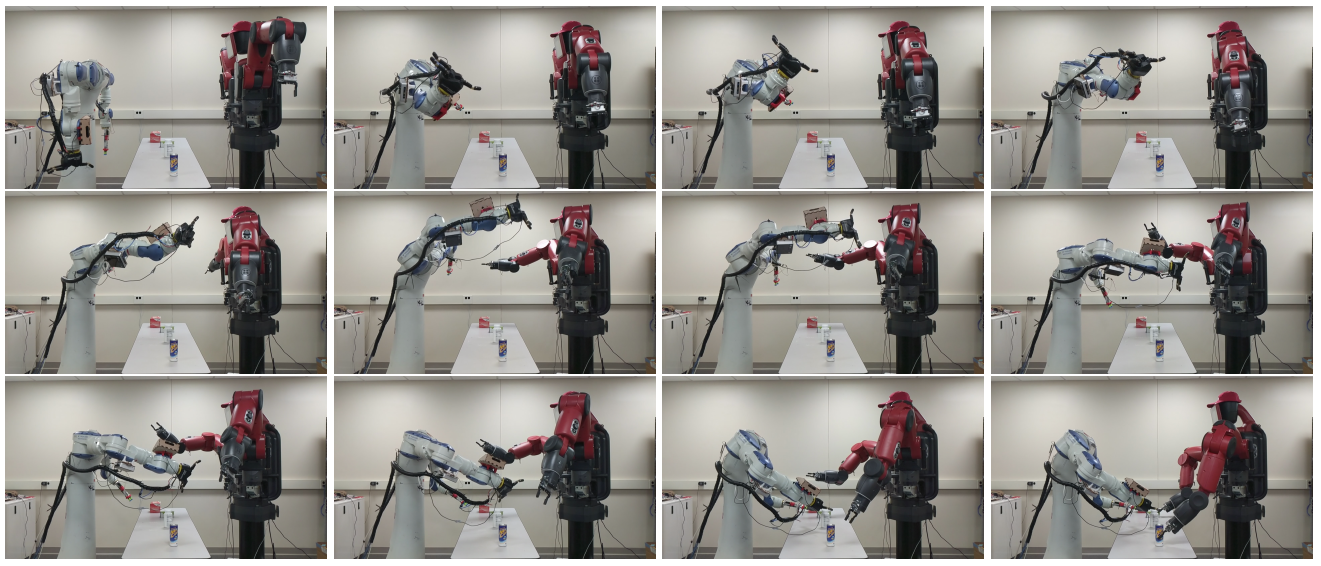}
\caption{Real world experiments were performed \commentadd{in a 28}
  dimensional space with 2 dual-armed manipulators planning their
  motion to a goal configuration corresponding to pre-grasping states
  for 4 objects resting on a table in the shared workspace.
  \commentadd{The sequence corresponds to freeze-frames starting in
    sequence from the top-left, and progressing along each row till
    the bottom-right.}}
\label{fig:realworld1}
\end{figure*}

\begin{figure*}[ht]
\centering
\includegraphics[width=0.98\textwidth]{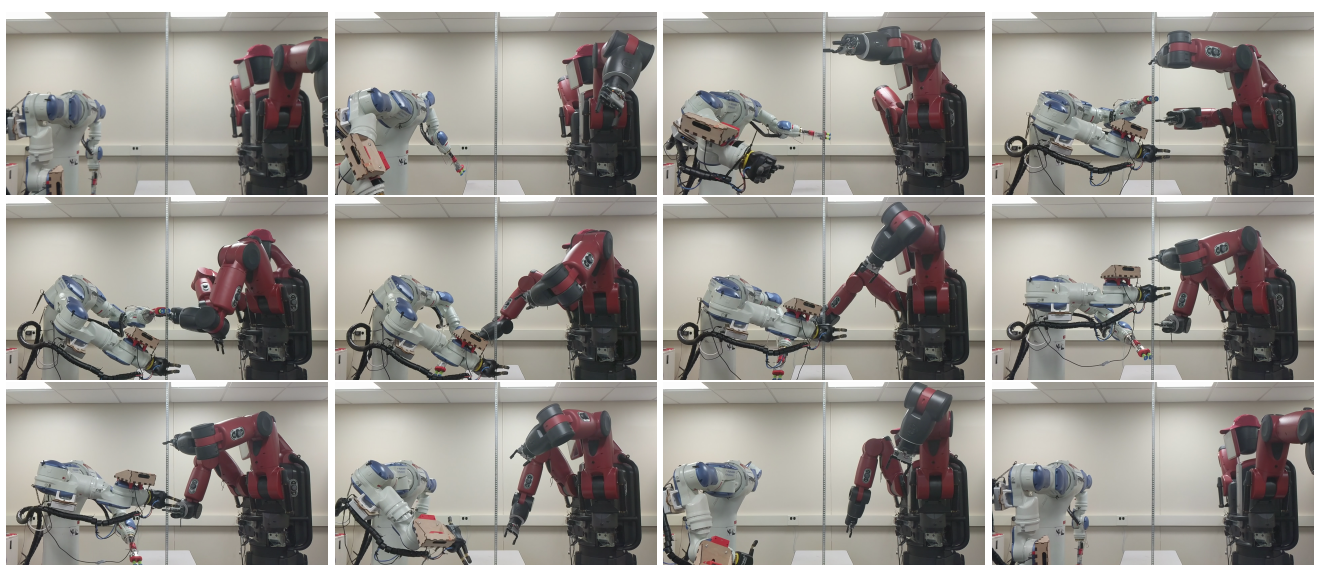}
\caption{Real world experiments were performed \commentadd{in a 28}
  dimensional space with 2 dual-armed manipulators planning their
  motion from a start state to an approach state close to a pole
  positioned in the center of a tightly coupled shared workspace. The
  arms then swap positions on the pole and return back to the start
  state. The goals correspond to the last images in each row.
  \commentadd{The sequence corresponds to freeze-frames starting in
    sequence from the top-left, and progressing along each row till
    the bottom-right.} }
\label{fig:realworld2}
\end{figure*}
 
\section{Discussion}
\label{sec:discuss}

This work proves asymptotic optimality of sampling-based multi-robot
planning over implicit structures by extending the \drrt\ approach.
Asymptotic optimality is achieved by making a modification, resulting
in \udrrtstar\ which expands a spanning tree over an implicitly
defined tensor product roadmap, and leverages a simple re-wiring
scheme.

This method already has the advantage of avoiding the construction of
a large, dense roadmap in the composite configuration space of many
robots. This can be further improved to use heuristics so as to
search in an informed manner, in the \drrtstar\ method. The
method is also extended to work with robot systems, which share
degrees of freedom, resulting in \dadrrtstar.

\commentadd{Experimental results show the efficacy of the proposed approaches.}
Furthermore, by leveraging
heuristics, \drrtstar\ is able to solve more challenging
problem instances than the baseline \udrrtstar\ method, and the
approach is demonstrated to solve complex, real-world problems with robot manipulators operating in a shared workspace with
a high degree of coupling.

\commentadd{In terms of practical applicability, \drrtstar\ promises
  fast initial solutions times (Figures \ref{fig:scalability} and
  \ref{fig:motoman_convergence}) on the order of a fraction of a
  second for most problems, including for high-dimensional,
  kinematically independent multi-robot problems, which is an exciting
  result. The solution quality improvement indicates the anytime
  properties of the approach, where paths of improved path quality are
  discovered as more computation time is invested. While problems with
  shared degrees of freedom provide less guidance and result in
  performance degradation, the scalability benefits remain even in
  this case relative to composite planning.} \commentadd{Future work
  includes the consideration of dynamics. The existing theoretical
  analysis of \drrtstar\ assumes that the individual robot systems are
  holonomic, which guarantees the existence of near-optimal
  single-robot paths (see Lemma~\ref{lem:prm}
  and~\cite[Theorem~4.1]{Pavone:2015fmt}). Recent results concerning
  asymptotic optimality of \prm\ for non-holonomic systems
  (see~\cite{ELM15a,ELM15b}) bring the hope of achieving a more
  general analysis of the current work as well.}  The proposed
framework can also be leveraged toward efficiently solving
simultaneous task and motion planning for many robot manipulators
\citep{Dobson:2015_MAM}. The demonstrated applications to manipulators
also motivate dual-arm rearrangement challenges~\citep{shome2018fast}.
 
{
\small
 \bibliographystyle{spbasic}

}

\end{document}